\newcommand{\nbr}[1]{\left\|#1\right\|}
\DeclareMathOperator*{\argmax}{\mathrm{argmax}}
\DeclareMathOperator*{\argmin}{\mathrm{argmin}}
\newcommand{\eq}[1]{(\ref{#1})}
\newtheorem{theorem}{Theorem}
\newtheorem{lemma}[theorem]{Lemma}
\newtheorem{proposition}[theorem]{Proposition}
\newtheorem{definition}[theorem]{Definition}
\begin{document}
\title{Graph rigidity, Cyclic Belief Propagation and Point Pattern Matching}

\author{Julian~J.~McAuley\thanks{The authors are with the Statistical Machine Learning Program, NICTA, and the Research School of Information Sciences and Engineering, Australian National University}, Tib\'erio~S.~Caetano and Marconi~S.~Barbosa}%<-this % stops a space

\maketitle

\begin{abstract}

A recent paper \cite{CaeCaeSchBar06} proposed a provably optimal, polynomial time method for performing near-isometric point pattern matching by means of exact probabilistic inference in a chordal graphical model. Their fundamental result is that the chordal graph in question is shown to be \emph{globally rigid}, implying that exact inference provides the \emph{same} matching solution as exact inference in a complete graphical model. This implies that the algorithm is optimal when there is no noise in the point patterns. In this paper, we present a new graph which is also globally rigid but has an advantage over the graph proposed in \cite{CaeCaeSchBar06}: its maximal clique size is smaller, rendering inference significantly more efficient. However, our graph is not chordal and thus standard Junction Tree algorithms cannot be directly applied. Nevertheless, we show that loopy belief propagation in such a graph converges to the optimal solution. This allows us to retain the optimality guarantee in the noiseless case, while substantially reducing both memory requirements and processing time. Our experimental results show that the accuracy of the proposed solution is indistinguishable from that of \cite{CaeCaeSchBar06} when there is noise in the point patterns.

%which give evidence that the improved efficiency has no or only minimal impact on the quality of the solutions when compared with those from \cite{CaeCaeSchBar06}.

%the incentive of creating links to distant nodes because it will then improve it's connectivity in the network,  Typically, these models consist of modeling the compromise between the benefit of creating long-distance edges (improving one's connectivity in the network)

\end{abstract}

\section{Introduction}

Point pattern matching is a fundamental problem in pattern recognition, and has been modeled in several different forms, depending on the demands of the application domain in which it is required \cite{CarHan00, Kel06}. A classic formulation which is realistic in many practical scenarios is that of near-isometric point pattern matching, in which we are given both a ``template'' ($\mathcal T$) and a ``scene'' ($\mathcal S$) point patterns, and it is assumed that $\mathcal S$ contains an instance of $\mathcal T$ (say $\mathcal T'$), apart from an isometric transformation and possibly some small jitter in the point coordinates. The goal is to identify $\mathcal T'$ in $\mathcal S$ and find which points in $\mathcal T$ correspond to which points in $\mathcal T'$.

%, or to find a function $I:\mathcal T\mapsto \mathcal S$ which most resembles an isometry.

Recently, a method was introduced which solves this problem efficiently by means of exact belief propagation in a certain graphical model \cite{CaeCaeSchBar06}. The approach is appealing because it is optimal not only in that it consists of exact inference in a graph with small maximal clique size ($=4$ for matching in $\mathbb R^2$), but that the \emph{graph itself} is optimal. There it is shown that the maximum a posteriori (MAP) solution in the sparse and tractable graphical model where inference is performed is actually the \emph{same} MAP solution that would be obtained if a fully connected model (which is intractable) could be used. This is due to the so-called \emph{global rigidity} of the chordal graph in question: when the graph is embedded in the plane, the lengths of its edges uniquely determine the lengths of the absent edges (i.e.~the edges of the graph complement) \cite{Connelly05}. The computational complexity of the optimal point pattern matching algorithm is then shown to be $O(nm^4)$ (both in terms of processing time and memory requirements), where $n$ is the number of points in the template point pattern and $m$ is the number of points in the scene point pattern (usually with $m>n$ in applications). This reflects precisely the computational complexity of the Junction Tree algorithm in a chordal graph with $O(n)$ nodes, $O(m)$ states per node and maximal cliques of size 4. The authors present experiments which give evidence that the method substantially improves on well-known matching techniques, including Graduated Assignment \cite{Gold96}.

In this paper, we show how the same optimality proof can be obtained with an algorithm that runs in $O(nm^3)$ time per iteration. In addition, memory requirements are precisely decreased by a factor of $m$. We are able to achieve this by identifying a new graph which is globally rigid but has a \emph{smaller} maximal clique size: 3. The main problem we face is that our graph is \emph{not} chordal, so in order to enforce the running intersection property for applying the Junction Tree algorithm the graph should first be triangulated; this would not be interesting in our case, since the resulting triangulated graph would have larger maximal clique size. Instead, we show that belief propagation in this graph \emph{converges to the optimal solution}, although not necessarily in a single iteration. In practice, we find that convergence occurs after a small number of iterations, thus improving the running-time by an order of magnitude. We compare the performance of our model to that of \cite{CaeCaeSchBar06} with synthetic and real point sets derived from images, and show that in fact comparable accuracy is obtained while substantial speed-ups are observed.

\section{Background}
\label{sec:background}

%Here we set the background for our paper, in particular we review the method presented in \cite{CaeCaeSchBar06}, which is the starting point for our method.

We consider point matching problems in $\mathbb R^2$. The problem we study is that of near-isometric point pattern matching (as defined above), i.e.~one assumes that a near-isometric instance ($\mathcal T'$) of the template ($\mathcal T$) is somewhen ``hidden'' in the scene ($\mathcal S$).  By ``near-isometric'' it is meant that the relative distances of points in $\mathcal T$ are approximately preserved in $\mathcal T'$. For simplicity of exposition we assume that $\mathcal T$, $\mathcal T'$, and $\mathcal S$ are ordered sets (their elements are indexed). Our aim is to find a map $x:\mathcal T\mapsto \mathcal S$ with image $\mathcal T'$ that best preserves the relative distances of the points in $\mathcal T$ and $\mathcal T'$, i.e.
%$ $\mathcal T'$ such that
\begin{eqnarray}
\label{eq:norm}
x^* = \argmin_{x}\nbr{D(\mathcal T)-D(x(\mathcal T))}^2_2,
\end{eqnarray}
\noindent where $D(\mathcal T)$ is the matrix whose $(i,j)^{th}$ entry is the Euclidean distance between points indexed by $i$ and $j$ in $\mathcal T$. Note that finding $x^*$ is inherently a combinatorial optimization problem, since $\mathcal T'$ is itself a subset of $\mathcal S$, the scene point pattern. In \cite{CaeCaeSchBar06}, a generic point in $\mathcal T$ is modeled as a random variable ($X_i$), and a generic point in $\mathcal S$ is modeled as a possible realization of the random variable ($x_i$). As a result, a joint realization of all the random variables corresponds to a match between the template and the scene point patterns. A graphical model (see \cite{Lau96,Bis06}) is then defined on this set of random variables, whose edges are set according to the topology of a so-called 3-tree graph (any 3-tree that spans $\mathcal T$). A 3-tree is a graph obtained by starting with the complete graph on 3 vertices, $K_3$, and then adding new vertices which are connected only to those same 3 vertices.\footnote{Technically, connecting new vertices to the 3 nodes of the original $K_3$ graph is not required: it suffices to connect new vertices to any existent 3-clique.} Figure \ref{fig:3tree} shows an example of a 3-tree. The reasons claimed in \cite{CaeCaeSchBar06} for introducing 3-trees as a graph topology for the probabilistic graphical model are that (i) 3-trees are globally rigid in the plane and (ii) 3-trees are chordal\footnote{A chordal graph is one in which every cycle of length greater than 3 has a chord. A chord of a cycle is an edge not belonging to the cycle but which connects two nodes in the cycle (i.e.~ a ``shortcut'' in a cycle).} graphs. This implies (i) that the 3-tree model is a type of graph which is in some sense ``optimal'' (in a way that will be made clear in the next section in the context of the new graph we propose) and (ii) that 3-trees have a Junction Tree with fixed maximal clique size ($=4$); as a result it is possible to perform exact inference in polynomial time \cite{CaeCaeSchBar06}.

\begin{figure}
\begin{center}
%\framebox[4.0in]{$\;$}
\includegraphics[width=0.45\textwidth]{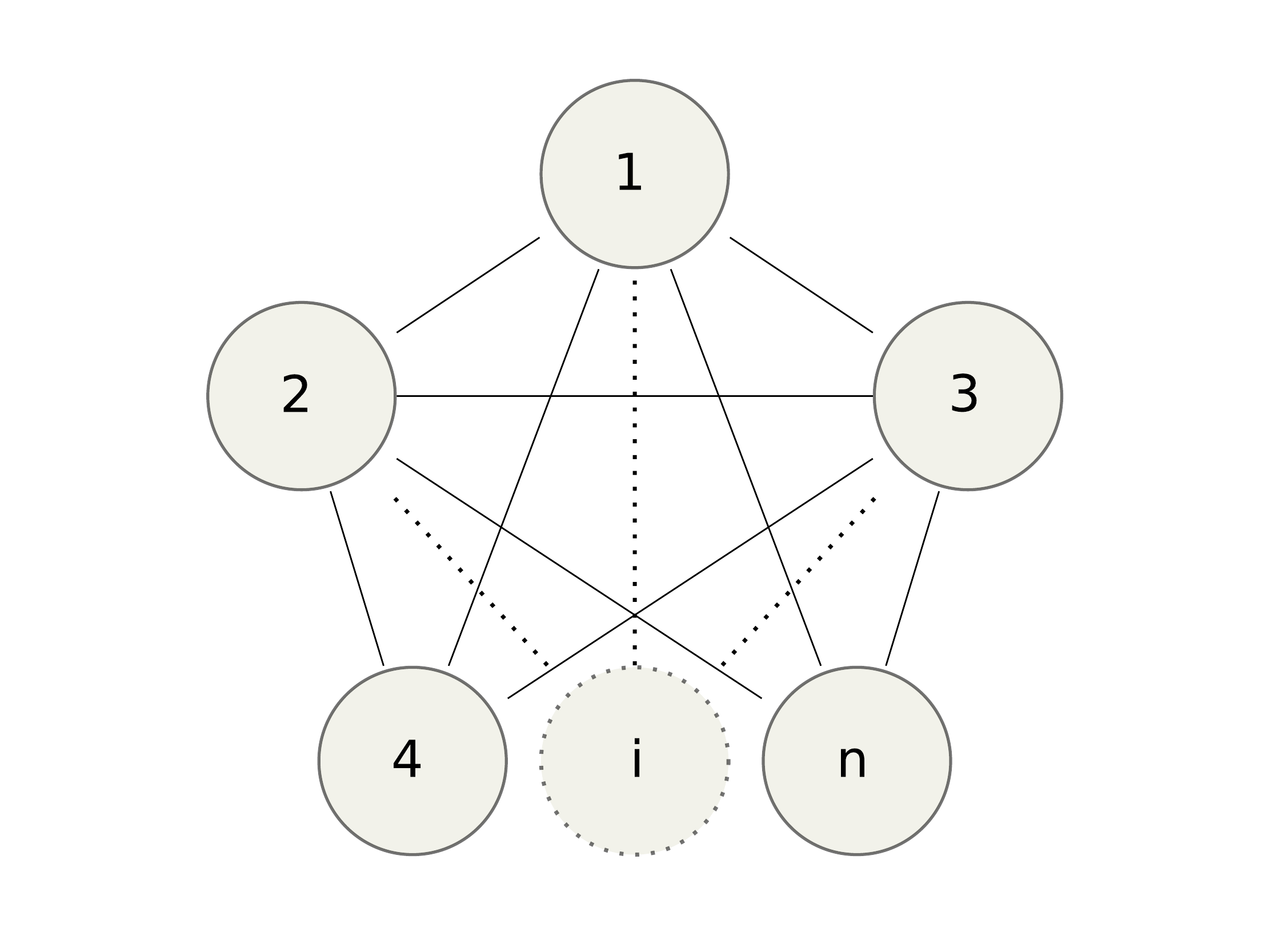}
%\fbox{\rule[-.5cm]{0cm}{4cm} \rule[-.5cm]{4cm}{0cm}}
\end{center}
\caption{An example of a 3-tree}
\label{fig:3tree}
\end{figure}

Potential functions are defined on pairs of neighboring nodes and are large if the difference between the distance of neighboring nodes in the template and the distance between the nodes they map to in the scene is small (and small if this difference is large). This favors isometric matchings. More precisely,
\begin{equation}
\label{eq:potential}
\psi_{ij}(X_i=x_i,X_j=x_j)=f(d(X_i,X_j)-d(x_i,x_j))
\end{equation}
\noindent where $f(\cdot)$ is typically some unimodal function peaked at zero (e.g.~a zero-mean Gaussian function) and $d(\cdot,\cdot)$ is the Euclidean distance between the corresponding points (for simplicity of notation we do not disambiguate between random variables and template points, or realizations and scene points). For the case of \emph{exact} matching, i.e.~when there exists an $x^*$ such that the minimal value in \eq{eq:norm} is zero, then $f(\cdot)=\delta(\cdot)$ (where $\delta(\cdot)$ is just the indicator function $1_{\lbrace 0 \rbrace}(\cdot)$). The potential function of a maximal clique ($\Psi$) is then simply defined as the product of the potential functions over its 6 ($=C^4_2$) edges (which will be maximal when every factor is maximal). It should be noted that the potential function of each edge is included in no more than \emph{one} of the cliques containing that edge.

%\footnote{For the simplicity of our implementation, we actually included each edge exactly \emph{twice} -- this merely squares the product of the potentials, meaning that the maximum is unchanged.}

For the case of exact matching (i.e.~no jitter), it is shown in \cite{CaeCaeSchBar06} that running the Junction Tree algorithm on the 3-tree graphical model with $f(\cdot)=\delta(\cdot)$ will actually find a MAP assignment which coincides with $x^*$, i.e.~such that $\nbr{D(\mathcal T)-D(x^*(\mathcal T))}^2_2=0$. This is due to the ``graph rigidity'' result, which tells us that equality of the lengths of the edges in the 3-tree and the edges induced by the matching in $\mathcal T'$ is sufficient to ensure the equality of the lengths of all pairs of points in $\mathcal T$ and $\mathcal T'$. This will be made technically precise in the next section, when we prove an analogous result for another graph.

%This result can be more precisely interpreted in the following terms. If two point sets with the same cardinality have the same Euclidean Distance Matrix (EDM) \cite{Dattorro04} under some permutation, then they are isometric \cite{Huang03}. As a result, isometry can be tested by generating all permutations of one of the point sets and checking equality of the resulting EDMs. 

%However, \emph{not all elements of an EDM are required in order to uniquely specify the entire EDM}. In particular, it can be shown that if the points are in general position then only 3 elements per row are required \cite{CaeCaeSchBar06}, for matching problems in $\mathbb R^2$. In this case the induced graph is easily seen to be chordal with maximal clique 4, what implies that checking the equality of lengths over the 3-tree will be exactly computable in polynomial time with the Junction Tree algorithm. The bottom-line is that if the lengths in the 3-tree in the template match with the lengths of the 3-tree in the scene induced by a joint realization of the random variables, then all the remaining edge lengths will also match due the global rigidity result.

\section{An Improved Graph}
\label{sec:improved}

Here we introduce another globally rigid graph which has the advantage of having a \emph{smaller maximal clique size}. Although the graph is \emph{not} chordal, we will show that exact inference \emph{is} tractable and that we will indeed benefit from the decrease in the maximal clique size. As a result we will be able to obtain optimality guarantees like those from \cite{CaeCaeSchBar06}. 

Our graph is constructed using Algorithm \ref{algorithm_ring}.

% \begin{algorithm}
%   \smallskip
%   \caption{Graph Generation $\mathcal G^1$}
%   \smallskip
%   \smallskip
%   \begin{algorithmic}
%    \STATE {\bfseries 1 } Create a cycle graph by traversing all the nodes in $\mathcal T$  (in any order)
%    \STATE {\bfseries 2 } Pick one arbitrary node and call it $r$. Connect the two nodes which are adjacent to $r$
%    \STATE {\bfseries 3 } Connect $r$ with all nodes to which it is not yet connected
%   \end{algorithmic}
%   \smallskip
% \label{algorithm_star}
% \end{algorithm}

\begin{algorithm}
  \smallskip
  \caption{Graph Generation for $\mathcal G$}
  \smallskip
  \smallskip
  \begin{algorithmic}
   \STATE {\bfseries 1 } Create a cycle graph by traversing all the nodes in $\mathcal T$  (in any order)
   \STATE {\bfseries 2 } Connect all nodes whose distance in this cycle graph is two (i.e.~connect each node to its neighbor's neighbor)
  \end{algorithmic}
  \smallskip
\label{algorithm_ring}
\end{algorithm}

This algorithm will produce a graph like the one shown in Figure \ref{fig:graph}. We will denote by $\mathcal G$ the set of graphs that can be generated by Algorithm \ref{algorithm_ring}. $G=(V,E)$ will denote a generic graph in $\mathcal G$.

\begin{figure}
\begin{center}
\includegraphics[width=0.4\textwidth]{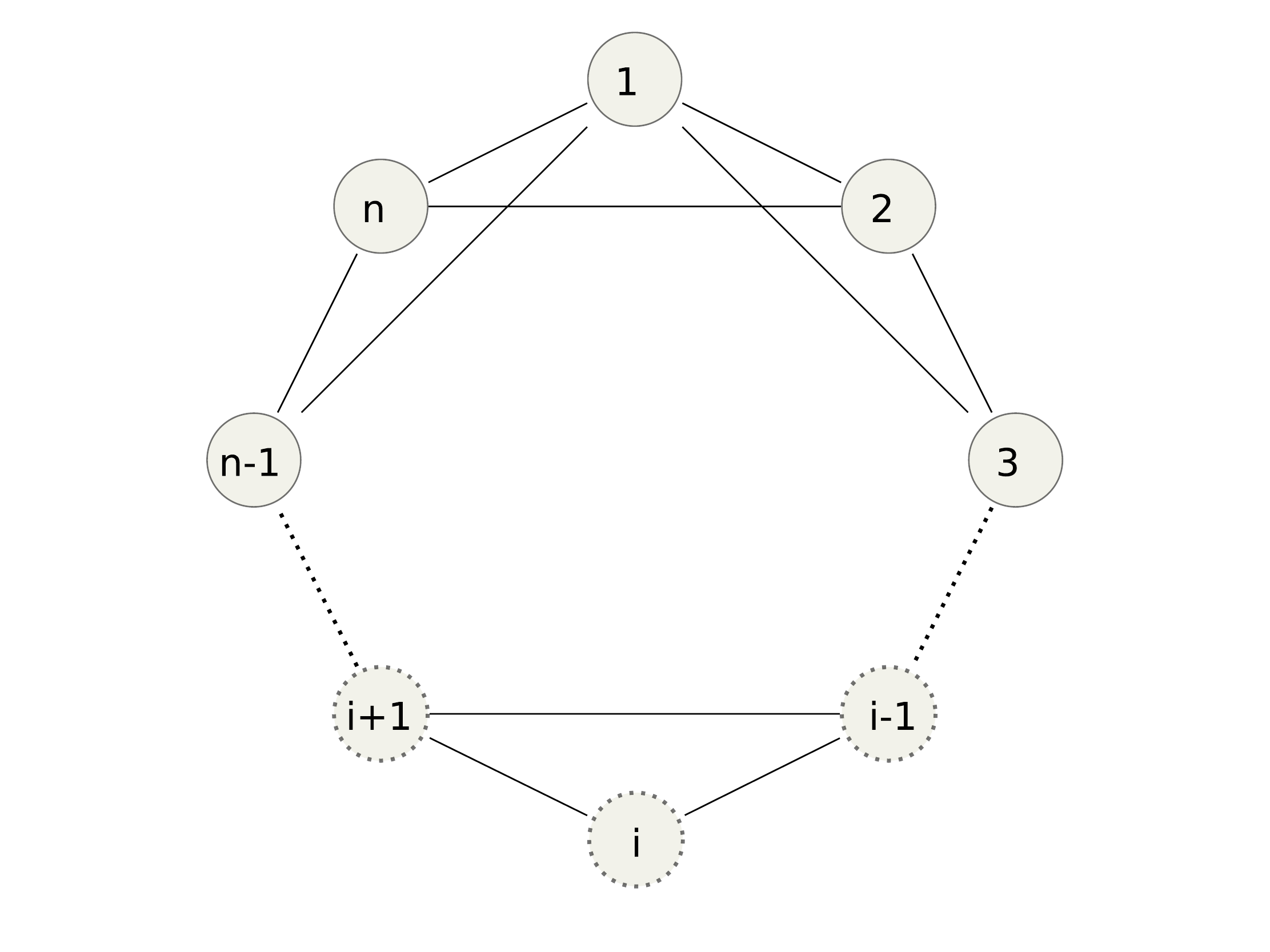}
\end{center}
\caption{The general form of the graph we consider, with $n$ nodes.}
\label{fig:graph}
\end{figure}

In order to present our results we need to start with the definition of a globally rigid graph:

\begin{definition}
A planar graph embedding $G$ is said to be globally rigid in $\mathbb R^2$ if the lengths of the edges uniquely determine the lengths of the edges of the graph complement of $G$.
\end{definition}
So our statements are really about \emph{graph embeddings} in $\mathbb R^2$, but for simplicity of presentation we will simply refer to these embeddings as ``graphs''.

This means that there are no degrees of freedom for the absent edges in the graph: they must all have specified and fixed lengths. To proceed we need a simple definition and some simple technical lemmas.

\begin{definition} A set of points is said to be in general position in $\mathbb R^{2}$ if no 3 points lie in a straight line.
\end{definition}

\begin{lemma} 
\label{lemma:2}
Given a set of points in general position in $\mathbb R^2$, if the distances from a point $P$ to two other fixed points are determined then $P$ can be in precisely two different positions.
\end{lemma}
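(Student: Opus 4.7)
The plan is to realize the two distance constraints geometrically as circle intersections and then use the general position assumption to pin down exactly how many solutions remain.

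First, I would let $A$ and $B$ denote the two fixed points, and write $r_1 = d(P,A)$, $r_2 = d(P,B)$. The locus of points at distance $r_1$ from $A$ is a circle $C_1$ centered at $A$, and similarly the locus of points at distance $r_2$ from $B$ is a circle $C_2$ centered at $B$. Any valid position of $P$ must lie in $C_1 \cap C_2$, so the task reduces to counting the intersection points of two circles.

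Next, I would argue that $C_1 \neq C_2$: since $A$ and $B$ are two distinct points (being in general position, they cannot coincide, and more specifically no three points are collinear), the circles have distinct centers and therefore cannot be identical. Two distinct circles in $\mathbb{R}^2$ intersect in at most two points, giving the upper bound.

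For the matching lower bound, I would invoke general position in its full strength. Since we are told the distances are realized by some genuine point $P$, and since the set of points (including $A$, $B$, $P$) is in general position, $P$ cannot be collinear with $A$ and $B$. In particular, $P$ does not lie on the line $\ell$ through $A$ and $B$, which rules out the tangency case (where the two circles meet in a single point on $\ell$). Reflecting $P$ across $\ell$ then produces a second point $P'$ at exactly the same distances $r_1$ from $A$ and $r_2$ from $B$, and $P' \neq P$ since $P \notin \ell$. Hence $|C_1 \cap C_2| \geq 2$, and combined with the upper bound we get exactly two positions.

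I do not expect a serious obstacle here; the argument is elementary plane geometry. The only subtle point is justifying that the tangent/empty-intersection degenerate cases do not arise, and the general-position hypothesis handles this cleanly by ensuring $P$ sits off the line $AB$, which forces the circles to cross transversally in two points.
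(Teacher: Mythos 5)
Your proof is correct and takes the same route as the paper's: both realize the two distance constraints as circles centered at the reference points and count their intersection, using the non-collinearity guaranteed by general position to rule out the tangent case. You simply spell out the upper and lower bounds (via the reflection of $P$ across the line through the two fixed points) that the paper's terser argument leaves implicit.
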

\begin{proof}
\noindent Consider two circles, each centered at one of the two reference points with radii equal to the given distances to point $P$. These circles intersect at precisely two points (since the 3 points are not collinear). This proves the statement.
\end{proof}

The following lemma follows directly from lemma 1 in \cite{CaeCaeSchBar06}, and is stated without proof.

\begin{lemma} 
\label{lemma:3}
Given a set of points in general position in $\mathbb R^2$, if the distances from a point $P$ to three other fixed points are determined then the position of $P$ is uniquely determined. 
\end{lemma}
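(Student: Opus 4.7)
The plan is to bootstrap directly from Lemma \ref{lemma:2}. Label the three fixed points $A$, $B$, $C$. Applying Lemma \ref{lemma:2} to the two reference points $A$ and $B$ (which are not collinear with anything problematic, since the point set is in general position and in particular $A\neq B$), I conclude that knowledge of $d(P,A)$ and $d(P,B)$ restricts $P$ to at most two candidate positions; call them $P_1$ and $P_2$. Geometrically these are the two intersection points of the circles centered at $A$ and $B$ with the prescribed radii, so $P_1$ and $P_2$ are mirror images of each other across the line $\ell_{AB}$ through $A$ and $B$.

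Next I would use the third distance $d(P,C)$ to discriminate between $P_1$ and $P_2$. Because $A$, $B$, $C$ are in general position, $C$ does not lie on $\ell_{AB}$. Since reflection across $\ell_{AB}$ preserves distances to any point on $\ell_{AB}$ but strictly changes the distance to any point off $\ell_{AB}$, it follows that $d(C,P_1)\neq d(C,P_2)$ whenever $P_1\neq P_2$. Therefore the prescribed value of $d(P,C)$ is compatible with at most one of $P_1,P_2$, so $P$ is uniquely determined.

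The only case left to check is $P_1 = P_2$, which happens exactly when the two circles around $A$ and $B$ are tangent; here Lemma \ref{lemma:2} already gives a unique position for $P$, so nothing further is needed. I do not foresee a serious obstacle: the single point that requires care is the justification that reflection across $\ell_{AB}$ changes the distance to any point off $\ell_{AB}$, and this follows immediately from the fact that such a reflection is an isometry fixing $\ell_{AB}$ pointwise, so any fixed point of the reflection must lie on $\ell_{AB}$. The general-position hypothesis is used precisely to rule out $C \in \ell_{AB}$, which is the only way the argument could fail.
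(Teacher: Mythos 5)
Your argument is correct. Note, however, that the paper itself gives no proof of Lemma \ref{lemma:3} at all: it is stated without proof, with the justification deferred to lemma 1 of \cite{CaeCaeSchBar06}. So what you have produced is a self-contained elementary derivation that the paper omits, and it is the natural one: Lemma \ref{lemma:2} reduces the problem to two candidates $P_1,P_2$ that are mirror images across the line $\ell_{AB}$, and the third distance eliminates one of them because $C\notin\ell_{AB}$ by general position. The one step worth tightening is your justification that $d(C,P_1)\neq d(C,P_2)$: the cleanest phrasing is that the locus of points equidistant from $P_1$ and $P_2=\sigma(P_1)$ is the perpendicular bisector of the segment $P_1P_2$, which (since $P_1\neq P_2$) is exactly $\ell_{AB}$; hence any $C$ off that line distinguishes the two candidates. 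Your appeal to ``the fixed points of the reflection lie on $\ell_{AB}$'' gestures at this but does not quite state it; as written it shows only that $\sigma$ moves $C$, not that $C$ has unequal distances to $P_1$ and $P_2$. Also, under the general-position hypothesis the tangency case $P_1=P_2$ cannot actually occur (that would force $P\in\ell_{AB}$, i.e.\ $P$, $A$, $B$ collinear), so your final paragraph is handling a vacuous case -- harmless, but unnecessary.
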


We can now present a proposition.

\begin{proposition} 
\label{prop}
Any graph $G \in \mathcal G$ arising from Algorithm \ref{algorithm_ring} is globally rigid in the plane if the nodes are in general position in the plane.
\end{proposition}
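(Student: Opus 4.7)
I plan to prove global rigidity by showing that any two planar embeddings $\phi$ and $\psi$ of the vertex set agreeing on all edge lengths of $G$ must differ by a global isometry. First, $v_1, v_2, v_3$ form a triangle in $G$: the cycle supplies $v_1 v_2$ and $v_2 v_3$, while step 2 of Algorithm \ref{algorithm_ring} supplies the shortcut $v_1 v_3$. Their embedding is therefore determined up to a rigid motion, so by applying such a motion to $\psi$ I may assume $\phi$ and $\psi$ coincide on $\{v_1, v_2, v_3\}$.

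Next, I would sweep around the cycle in the order $v_4, v_5, \ldots, v_n$. At each step, $v_i$ is attached to two already-placed vertices via the cycle edge $v_{i-1} v_i$ and the shortcut $v_{i-2} v_i$, so Lemma \ref{lemma:2} forces $\psi(v_i)$ to equal either $\phi(v_i)$ or its reflection across the line through $\phi(v_{i-2})$ and $\phi(v_{i-1})$. This sweep leaves exactly three edges of $G$ unused, all of them wrap-around edges: the shortcut $v_{n-1} v_1$, the cycle edge $v_n v_1$, and the shortcut $v_n v_2$. These supply $v_{n-1}$ with a third back-constraint (to $v_1$) and $v_n$ with two further back-constraints (to $v_1$ and $v_2$), so Lemma \ref{lemma:3} pins down $\psi(v_{n-1})$ and $\psi(v_n)$ uniquely in terms of the earlier placements.

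To eliminate the remaining binary ambiguities at $v_4, \ldots, v_{n-2}$, I would proceed by contradiction. Let $k$ be the smallest index where $\psi(v_k) \neq \phi(v_k)$; propagating Lemma \ref{lemma:2} forward shows that the tail $\psi(v_k), \ldots, \psi(v_n)$ is obtained from $\phi$'s by a nontrivial planar isometry $T$ arising as a composition of reflections across successive edges of $G$. The cross-edges from $\{v_1, v_2, v_3\}$ into the tail, in particular the three wrap-around edges, require $T$ to preserve the specified distances from $\phi(v_1)$ and $\phi(v_2)$ to the relevant tail vertices. Under general position, the only planar isometry which can do this while remaining compatible with the normalization $\psi(v_3) = \phi(v_3)$ is the identity, contradicting $T \neq \mathrm{id}$. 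The main difficulty I anticipate is precisely this last step: carefully enumerating the possible isometries $T$ that can arise as compositions of the reflections at each binary flip, and ruling them out via the general position hypothesis (which in particular prevents $T$ from coinciding with a reflection across the line $\phi(v_1)\phi(v_2)$, since such a reflection would also swap the side of $\phi(v_3)$).
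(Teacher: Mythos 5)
Your setup is sound: $(v_1,v_2,v_3)$ is indeed a triangle of $G$, the sweep correctly accounts for all but the three wrap-around edges $v_{n-1}v_1$, $v_nv_1$, $v_nv_2$, and the overall strategy — anchor a triangle, propagate placements around the cycle via Lemma \ref{lemma:2}, and use the wrap-around edges to kill the residual reflection ambiguities — is the same in spirit as the paper's. But there is a genuine gap at the decisive step. You claim that if $k$ is the first index where $\psi(v_k)\neq\phi(v_k)$, then the whole tail $\psi(v_k),\dots,\psi(v_n)$ is the image of $\phi(v_k),\dots,\phi(v_n)$ under a \emph{single} planar isometry $T$. This is false. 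The edges of $G$ restricted to consecutive indices form a strip of triangles $(j,j+1,j+2)$, and such a strip is discretely flexible: at every shared edge $(v_j,v_{j+1})$ one may independently choose whether or not to reflect everything beyond it across the line through $\psi(v_j)$ and $\psi(v_{j+1})$, and every such fold pattern preserves all the strip's edge lengths. Consequently the set of candidate tails has size roughly $2^{n-k}$, different vertices of the tail are moved by \emph{different} compositions of reflections, and almost none of these configurations is a rigid image of the original tail. "Enumerating the possible isometries $T$" therefore enumerates the wrong set, and the concluding step ("the only such isometry is the identity") does not go through as stated. A secondary issue: Lemma \ref{lemma:3} pins $\psi(v_{n-1})$ and $\psi(v_n)$ down only relative to three \emph{determined} reference points; when $\psi(v_{n-3}),\psi(v_{n-2})$ are themselves displaced, you get uniqueness given those placements, not equality with $\phi$.

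The paper closes exactly this hole with an induction on the length of a contiguous undetermined block. It anchors the triangle $\{n,1,2\}$ (rather than $\{1,2,3\}$ — this choice matters), and shows: if only $i,\dots,i+p$ were undetermined, with $i-1$ and $i+p+1$ determined, then the block, being internally rigid and tied to both anchors, could only move by the single reflection across the line through $i-1$ and $i+p+1$; that reflection is then excluded by the next-nearest-neighbour edges $(i,i-2)$ and $(i+p,i+p+2)$ leaving the block, so the block would in fact be determined — a contradiction that forces $i+p+1$ to be undetermined as well. Undeterminedness therefore propagates all the way to $v_n$, contradicting the anchoring of $v_n$. If you want to keep your formulation, you need an analogous argument showing that a fold at any interior edge of the strip propagates an inconsistency forward until it collides with the constraints at $v_{n-1}$ and $v_n$; the single-isometry shortcut cannot be repaired merely by invoking general position.
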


\begin{proof} Define a reference frame $S$ where points 1, 2 and $n$ have specific coordinates (we say that the points are ``determined''). We will show that all points then have determined positions in $S$ and therefore have determined relative distances, which by definition implies that the graph is globally rigid.

We proceed by contradition: assume there exists at least one undetermined point in the graph. Then we must have an undetermined point $i$ such that $i-1$ and $i-2$ are determined (since points 1 and 2 are determined). By virtue of lemma \ref{lemma:3}, points $i+1$ and $i+2$ must then be also undetermined (otherwise point $i$ would have determined distances from 3 determined points and as a result would be determined).

Let us now assume that \emph{only} points $i,i+1,i+2$ are undetermined. Then the only possible realizations for points $i$, $i+1$ and $i+2$ are their reflections with respect to the straight line which passes through points $i-1$ and $i+3$, since these are the only possible realizations that maintain the rigidity of the triangles $(i-1,i,i+1),(i,i+1,i+2),(i+1,i+2,i+3)$, since $i-1$ and $i+3$ are assumed fixed. However, since $i+4$ and $i-2$ are also fixed by assumption, this would break the rigidity of triangles $(i+2,i+3,i+4)$ and $(i,i-1,i-2)$. Therefore $i+3$ cannot be determined. This can then be considered as the base case in an induction argument which goes as follows. Assume \emph{only} $i,\dots,i+p$ are undetermined. Then, by reflecting these points over the line that joins $i-1$ and $i+p+1$ (which are fixed by assumption), we obtain the only other possible realization consistent with the rigidity of the triangles who have all their vertices in $i-1,\dots,i+p+1$. However, this realization is inconsistent with the rigidity of triangles $(i+p,i+p+1,i+p+2)$ and $(i,i-1,i-2)$, therefore $i+p+1$ must not be determined and by induction any point $j$ such that $j>i+2$ must not be determined, which contradicts the assumption that $n$ is determined. As a result, the assumption that there is at least one undetermined point in the graph is false. This implies that the graph has all points determined in $S$, and therefore all relative distances are determined and by definition the graph is globally rigid. This proves the statement.\end{proof}

Although we have shown that graphs $G\in \mathcal G$ are globally rigid, notice that they are \emph{not} chordal. For the graph in Figure \ref{fig:graph}, the cycles $(1,3,5\dots,n-1,1)$ and $(2,4,6\dots,n,2)$ have no chord. Moreover, triangulating this graph in order to make it chordal will necessarily increase (to at least 4) the maximal clique size (which is not sufficient for our purposes since we arrive at the case of \cite{CaeCaeSchBar06}).

%Therefore creating a Junction Tree from $G$ we will clearly not sufficient for our purposes.
% since this will create a no better model than the one presented in \cite{CaeCaeSchBar06}.

Instead, consider the clique graph formed by $G\in \mathcal G$. If there are $n$ nodes, the clique graph will have cliques $(1,2,3),(2,3,4),\dots,(n-2,n-1,n),(n-1,n,1),(n,1,2)$. This clique graph forms a cycle, which is depicted in Figure \ref{fig:cliquegraph}.\footnote{Note that if we connected \emph{every} clique whose nodes intersected, the clique graph would no longer form a cycle; here we have only formed enough connections so that the intersection of any two cliques is shared by the cliques on at least one path between them (similar to the running intersection property for Junction Trees).}

\begin{figure}
\begin{center}
%\framebox[4.0in]{$\;$}
%\fbox{\rule[-.5cm]{0cm}{4cm} \rule[-.5cm]{4cm}{0cm}}
\includegraphics[width=0.45\textwidth]{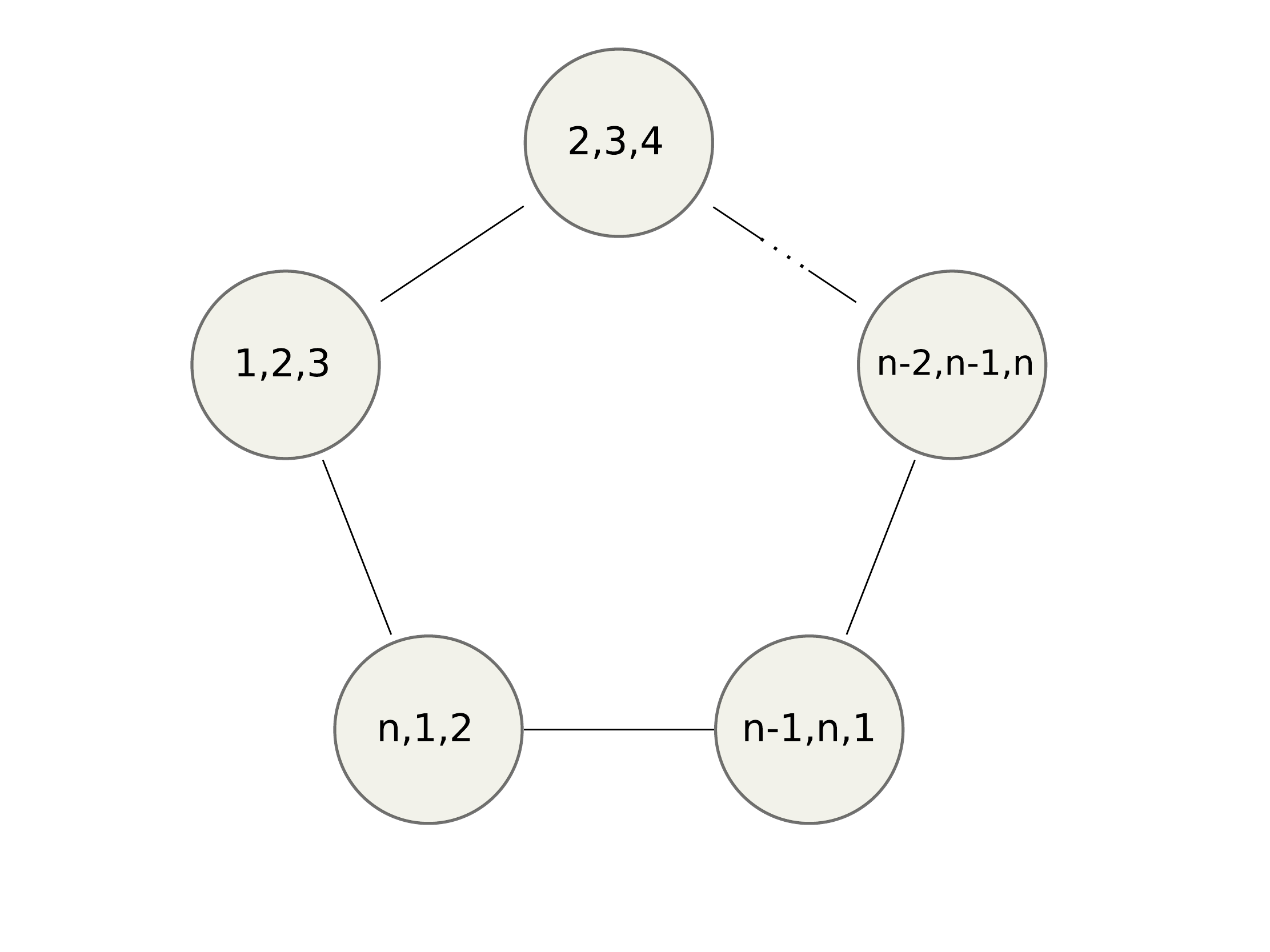}
\end{center}
\caption{The clique graph obtained from the graph in Figure \ref{fig:graph}.}
\label{fig:cliquegraph}
\end{figure}

We now draw on results first obtained by Weiss \cite{Weiss00}, and confirmed elsewhere \cite{IhlFisWil04}. There it is shown that, for graphical models with a single cycle, belief propagation converges to the optimal MAP assignment, although the computed marginals may be incorrect. Note that for our purposes, this is precisely what is needed: we are after the most likely joint realization of the set of random variables, which corresponds to the best match between the template and the scene point patterns. Max-product belief propagation \cite{YedFreWei00} in a cycle graph like the one shown in Figure \ref{fig:cliquegraph} amounts to computing the following messages, iteratively:
\begin{multline}
m_{i\mapsto i+1}(U_i\cap U_{i+1})\\=\max_{U_i\backslash U_{i+1}}\Psi(U_i)m_{i-1\mapsto i}(U_i\cap U_{i-1}),
\end{multline}
\noindent where $U_i$ is the set of singleton variables in clique node $i$, $\Psi(U_i)$ the potential function for clique node $i$ and $m_{i\mapsto i+1}$ the message passed from clique node $i$ to clique node $i+1$. Upon reaching the convergence monitoring threshold, the optimal assignment for singleton variable $j$ in clique node $i$ is then computed by $\argmax_{U_i\backslash j}\Psi(U_i)m_{i-1\mapsto i}(U_i\cap U_{i-1})m_{i+1\mapsto i}(U_i\cap U_{i+1})$.

Unfortunately, the above result is only shown in \cite{Weiss00} when the \emph{graph itself} forms a cycle, whereas we only have that the \emph{clique graph} forms a cycle. However, it is possible to show that the result still holds in our case, by considering a new graphical model in which the \emph{cliques themselves} form the nodes, whose cliques are now just the edges in the clique graph. The result from \cite{Weiss00} can now be used to prove that belief propagation in \emph{this} graph converges to the optimal MAP assignment, which (by appropriately choosing potential functions for the new graph), implies that belief propagation should converge to the optimal solution in the \emph{original} graph also.

To demonstrate this, we need not only show that belief propagation in the new model converges to the optimal assignment, but also that belief propagation in the new model is \emph{equivalent} to belief propagation in the original model.

\begin{proposition}
\label{lemma:originaltopairwise}
The original clique graph (Figure \ref{fig:cliquegraph}) can be transformed into a model containing only pairwise potentials, whose optimal MAP assignment is the same as the original model's.
\end{proposition}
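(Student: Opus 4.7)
The plan is to build a new graphical model $G'$ whose nodes are in one-to-one correspondence with the cliques $U_1,\dots,U_n$ of the original clique graph, and whose edges mirror the cycle structure of Figure~\ref{fig:cliquegraph}. Each new node $N_i$ will be a ``super-variable'' whose state space is the product state space of the three original variables in the clique $U_i$ (so $|N_i| = m^3$ in our application). The unary potential at $N_i$ is set to be $\phi_i(N_i) := \Psi(U_i)$, i.e.\ exactly the original clique potential. Between adjacent nodes $N_i$ and $N_{i+1}$ (indices taken mod $n$) I define a pairwise potential $\psi_{i,i+1}(N_i,N_{i+1})$ that is the indicator of agreement on the two original variables shared by $U_i$ and $U_{i+1}$: it equals $1$ if the two copies of each shared original variable take identical values and $0$ otherwise. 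The resulting model has only unary and pairwise potentials, as required, and its underlying graph is a single cycle of length $n$.

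Next, I would establish the correspondence between MAP assignments. The key observation is that every joint assignment to $(N_1,\dots,N_n)$ in $G'$ falls into one of two classes: (a) \emph{consistent} assignments, where for every pair of adjacent cliques the shared variables agree, and (b) \emph{inconsistent} assignments. Inconsistent assignments have at least one pairwise indicator equal to zero, hence total score zero, and are thus never strict maximizers (assuming the existence of any consistent assignment with nonzero score, which is guaranteed by simply taking the MAP of the original model). Consistent assignments are in natural bijection with joint assignments to the original variables $x_1,\dots,x_n$: each consistent $(N_1,\dots,N_n)$ induces a well-defined value for each $x_j$, and conversely. Under this bijection, the score of a consistent assignment in $G'$ equals $\prod_i \Psi(U_i)$ evaluated on the induced $(x_1,\dots,x_n)$, which is precisely the joint potential of the original clique-graph model. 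Hence the argmax over $(N_1,\dots,N_n)$ in $G'$ coincides (under the bijection) with the argmax over $(x_1,\dots,x_n)$ in the original model.

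To finish, I would remark that this construction is exactly the standard reduction of a factor graph with higher-order factors to an equivalent pairwise factor graph by promoting each factor to a super-node and enforcing consistency via indicator edges; the only content here is that the resulting pairwise graph happens to be a single cycle, which is the structural property we need to invoke Weiss's single-loop result in the next step. I do not anticipate a serious obstacle: the argument is essentially bookkeeping. The mildly delicate point is confirming that the MAP of $G'$ is in fact attained at a consistent assignment (so that inconsistent assignments with score zero do not spuriously tie), which is handled by noting that the original model admits some assignment with strictly positive score whenever the potentials $\psi_{ij}$ in \eq{eq:potential} are strictly positive, or more generally whenever an exact match exists (the regime in which optimality is being claimed).
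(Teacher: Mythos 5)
Your construction is the standard factor-to-pairwise reduction and it does establish the literal claim: consistent assignments of the super-variables are in bijection with assignments of the original variables, inconsistent ones score zero, so the MAP solutions coincide. The paper's proof is structurally the same (super-nodes with domain $\left\lbrace 1,\dots,|\mathcal S|\right\rbrace^3$, compatibility enforced on shared variables), except that it folds the clique potential $\Psi_I$ into one incident pairwise potential rather than keeping a separate unary term --- a cosmetic difference.

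The genuine gap is your choice of a hard $0/1$ indicator for the consistency edges. The only reason this proposition exists is to put the model into a form where the single-cycle max-product convergence results of Weiss and of Ihler et al.\ can be invoked, and the paper is explicit that the latter requires the potentials to have \emph{finite dynamic range}; a potential that takes the value $0$ has infinite dynamic range, so your transformed model falls outside the hypotheses of the very theorem you say you want to apply in the next step. The paper's proof handles exactly this point: it assigns incompatible configurations a small \emph{positive} value $\rho$ (e.g.\ $\rho = \bigl(\prod_{C} \max_{\mathbf{x}_C}\Psi_C(\mathbf{x}_C) \bigr)^{-1}$) and argues that $\rho$ can be chosen small enough that no inconsistent assignment can be optimal, so MAP equivalence is preserved while the dynamic range stays finite. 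You should replace your indicator by such a $\rho>0$ (your "no spurious ties" argument then goes through essentially unchanged); as written, your reduction proves the proposition's statement but cannot support the convergence argument it feeds into.
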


\begin{proof}
Consider a clique ``node'' $C_1 = (X_1,X_2,X_3)$ (in the original graph), whose neighbors share exactly two of its nodes (for instance $C_2 = (X_2,X_3,X_4)$). Where the domain for each node in the original graph was simply $\left\lbrace 1,2\dots |\mathcal S|\right\rbrace$, the domain for each ``node'' in our new graph simply becomes $\left\lbrace 1,2\dots |\mathcal S|\right\rbrace^3$.

In this setting, it is no longer possible to ensure that the assignment chosen for each ``node'' is consistent with the assignment to its neighbor -- that is, for an assignment $(x_1,x_2,x_3)$ to $C_1$, and $(x_2',x_3',x_4)$ to $C_2$, we cannot guarantee that $x_2=x_2'$, or $x_3=x_3'$. Instead, we will simply define the potential functions on this new graph in such a way that the optimal MAP assignment implicitly ensures this equality. Specifically, we shall define the potential functions as follows: for two cliques $C_I=(I_1,I_2,I_3)$ and $C_J=(J_1,J_2,J_3)$ in the original graph (which share two nodes, say $(I_2, I_3)$ and $(J_1, J_2)$), define the \emph{pairwise} potential for the clique ($\Psi'_{I,J}$) in the new graph as follows:
\begin{multline}
 \Psi'_{I,J}(i_{(123)}, j_{(123)})=\\ \left\lbrace \begin{array}{ll} \Psi_I(i_1,i_2,i_3) & \mbox{if\ } (i_2,i_3)=(j_1,j_2),\\ \rho & \mbox{otherwise} \end{array} \right.
\end{multline}
Where $\Psi_I$ is simply the clique potential for the $I^{th}$ clique in the \emph{original} graph; $i_{(123)} \in \text{domain}(I_1)\times\text{domain}(I_2)\times\text{domain}(I_3)$ (sim.~for $j_{(123)}$). That is, we are setting the pairwise potential to simply be the original potential of \emph{one} of the cliques if the assignments are compatible, and $\rho$ otherwise. If we were able to set $\rho=0$, we would guarantee that the optimal MAP assignment was exactly the optimal MAP assignment in the original graph -- however, this is not possible, since the result of \cite{IhlFisWil04} only holds when the potential functions have finite dynamic range. Hence we must simply choose $\rho$ sufficiently small so that the optimal MAP assignment cannot possibly contain an incompatible match -- it is clear that this is possible, for example $\rho = \left(\prod_{C} \max_{\mathbf{x}_C}\Psi_C(\mathbf{x}_C) \right)^{-1}$ will do.

The result of \cite{Weiss00} now implies that belief propagation in \emph{this} graph will converge to the optimal MAP assignment, which we have shown is equal to the optimal MAP assignment in the \emph{original} graph.
\end{proof}

\begin{proposition}
\label{lemma:pairwisetooriginal}
The messages passed in the new model are equivalent to the messages passed in the original model, except for repetition along one axis.
\end{proposition}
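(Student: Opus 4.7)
The plan is to prove both conclusions of the statement — (i) the new messages $m'_{I \to J}$ are constant along the axis corresponding to the variable of $J$ not shared with $I$, and (ii) on the remaining two axes they agree with the original messages $m_{I \to J}$ — simultaneously by induction on the message-passing iteration.

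For (i), I would start from the pairwise-model update
\begin{equation*}
m'_{I \to J}(j_{(123)}) = \max_{i_{(123)}} \Psi'_{I,J}(i_{(123)}, j_{(123)})\, m'_{H \to I}(i_{(123)}),
\end{equation*}
where $H$ is the predecessor of $I$ in the clique cycle. By the construction of $\Psi'_{I,J}$ in Proposition \ref{lemma:originaltopairwise}, the right-hand side depends on $j_{(123)}$ only through the compatibility constraint $(i_2,i_3) = (j_1,j_2)$, since the ``otherwise'' branch equals the constant $\rho$. Consequently the maximum is a function of $(j_1,j_2)$ alone and does not depend on $j_3$, which is exactly the claimed repetition.

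For (ii), I would assume inductively that $m'_{H \to I}(i_{(123)})$ depends only on the two coordinates of $i_{(123)}$ that correspond to the variables shared by $H$ and $I$ (namely $i_1$ and $i_2$ for the indexing used above), and that on those coordinates it coincides with $m_{H \to I}$ from the original max-product update. Plugging this in and restricting the $\max$ to the compatible branch $(i_2,i_3)=(j_1,j_2)$, the update reduces to $\max_{i_1} \Psi_I(i_1, j_1, j_2)\, m_{H \to I}(i_1, j_1)$, which is exactly the original formula for $m_{I \to J}(x_{I+1},x_{I+2})$ evaluated at $(x_{I+1},x_{I+2}) = (j_1,j_2)$. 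An identical argument handles the symmetric situation in which the shared-axis pair for $H,I$ differs from $(i_1,i_2)$.

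The main technical obstacle is justifying the step that restricts the $\max$ to the compatible branch: in principle, the $\rho$-valued branch could dominate once messages have accumulated multiplicatively over many iterations. I would address this by leaning on the explicit bound on $\rho$ given in Proposition \ref{lemma:originaltopairwise}, which is already calibrated so that no globally optimal assignment is incompatible; the same bound, applied uniformly to the product of clique potentials underlying each message, shows that the compatible-branch max strictly exceeds $\rho \cdot \max_{i_{(123)}} m'_{H \to I}(i_{(123)})$ at every step, so the reduction is exact rather than approximate. The base case is immediate from constant (or appropriately initialized) initial messages, which trivially satisfy both properties, completing the induction.
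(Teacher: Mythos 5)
Your proof is correct and follows essentially the same route as the paper: induction on the iteration number, with the base case handled by the first-iteration update and the inductive step reducing the pairwise-model maximization to the compatible branch, which recovers the original max-product update (and yields the repetition along the unshared axis). The only difference is that you make explicit the justification that the $\rho$-valued branch never wins the maximization, a point the paper merely asserts; note that this step also relies on the finite-dynamic-range modification $\Psi' = (1/d) + (1-1/d)\Psi$ keeping the potentials bounded away from zero.
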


\begin{proof}
We use induction on the number of iterations. First, we must show that the outgoing messages are the same during the first iteration (during which the incoming messages are not included). We will denote by $m^i_{(X_1,X_2,X_3)\mapsto (X_2,X_3,X_4)}$ the message from $(X_1,X_2,X_3)$ to $(X_2,X_3,X_4)$ during the $i^{th}$ iteration:

\begin{multline}
m^1_{(X_1,X_2,X_3)\mapsto (X_2,X_3,X_4)}(x_2,x_3)\\
 = \max_{X_1}\Psi_{(X_1,X_2,X_3)}(x_1,x_2,x_3),
\end{multline}
\begin{multline}
m^1_{(X_{(123)}, X_{(234)})\mapsto (X_{(234)}, X_{(345)})}(x_{123}, x_{234})\\
\begin{array}{ll}
 =& \max_{X_{(123)}}\Psi'_{X_{(123)},X_{(234)}} (x_{(123)}, x_{(234)})\\
 =& 1 \times \max_{X_1}\Psi_{(X_1,X_2,X_3)}(x_1,x_2,x_3)\\
 =& m^1_{(X_1,X_2,X_3)\mapsto (X_2,X_3,X_4)}(x_2,x_3).\end{array}
\end{multline}

This result only holds due to the fact that $\rho$ will never be chosen when maximizing along any axis. We now have that the messages are equal during the first iteration (the only difference being that the message for the new model is repeated along one axis).\footnote{To be completely precise, the message for the new model is actually a function of only a \emph{single} variable -- $X_{(234)}$. By ``repeated along one axis'', we mean that for any given $(x_2,x_3,x_4)\in\text{domain}(X_2)\times\text{domain}(X_3)\times\text{domain}(X_4)$, the message at this point is independent of $x_4$, which therefore has no effect when maximizing.} Next, suppose during the $(n-1)^{st}$ iteration, the messages (for both models) are equal to $\kappa(x_1,x_2)$. Then for the $n^{th}$ iteration we have:
\begin{multline}
m^n_{(X_1,X_2,X_3)\mapsto (X_2,X_3,X_4)}(x_2,x_3)\\
 = \max_{X_1}\left\lbrace\Psi_{(X_1,X_2,X_3)}(x_1,x_2,x_3)\kappa(x_1,x_2)\right\rbrace,
\end{multline}
\begin{multline}
m^n_{(X_{(123)}, X_{(234)})\mapsto (X_{(234)}, X_{(345)})}(x_{123}, x_{234})\\
\begin{array}{ll}
 =& \max_{X_{(123)}}\left\lbrace\Psi'_{X_{(123)},X_{(234)}} (x_{(123)}, x_{(234)})\kappa(x_1,x_2)\right\rbrace\\
 =& 1 \times \max_{X_1}\left\lbrace\Psi_{(X_1,X_2,X_3)}(x_1,x_2,x_3)\kappa(x_1,x_2)\right\rbrace\\
 =& m^n_{(X_1,X_2,X_3)\mapsto (X_2,X_3,X_4)}(x_2,x_3).\end{array}
\end{multline}
Hence the two message passing schemes are equivalent by induction.
\end{proof}

%The last element required in order to obtain our main result is simply a formal definition of isometry.

%\begin{definition}
%An isometry in the Euclidean plane is a function $I:\mathbb R^2\mapsto \mathbb R^2$ such that for any points $p,q$ in the plane we have that $d(p,q)=d(I(p),I(q))$, where $d(p,q)$ is the Euclidean distance between $p$ and $q$.
%\end{definition}

We can now state our main result:

\begin{theorem} Let $G\in \mathcal G$ be a graph generated according to the procedure described in Algorithm \ref{algorithm_ring}. Assume that there is a perfect isometric instance of $\mathcal T$ within the scene point pattern $\mathcal S$. Then the MAP assignment $x^*$ obtained by running belief propagation over the clique graph derived from G is such that $\nbr{D(\mathcal T)-D(x^*(\mathcal T))}^2_2=0$.
\end{theorem}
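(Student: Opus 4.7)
The plan is to stitch together the three ingredients already established in the paper: global rigidity of $G\in\mathcal G$ (Proposition~\ref{prop}), the equivalence of message passing in the original clique cycle graph and in the pairwise-transformed graph (Proposition~\ref{lemma:pairwisetooriginal}), and the convergence of max-product belief propagation on single-cycle pairwise graphical models to the true MAP (Proposition~\ref{lemma:originaltopairwise}, which wraps Weiss's result). The conclusion that the matching induced by this MAP has zero distortion then follows from rigidity combined with the hypothesis that an exact isometric instance exists.

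First, I would set up the probabilistic model as in Section~\ref{sec:background}, taking $f(\cdot)=\delta(\cdot)$ since we are in the noiseless regime. Under this choice, any clique potential $\Psi_I$ evaluates to $1$ on an assignment iff all three edge-distances of that triangle are preserved by the assignment, and $0$ otherwise; so the global score factorises into a product of indicators over the edges of $G$, and the MAP score is at most $1$. The assumed existence of an isometric instance $\mathcal T'\subset\mathcal S$ furnishes an explicit assignment achieving score $1$, so the MAP score equals $1$ and any MAP assignment $x^*$ preserves all edge distances of $G$.

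Next I would transport this to belief propagation on the clique graph of Figure~\ref{fig:cliquegraph}. By Proposition~\ref{lemma:originaltopairwise}, the clique graph admits a pairwise reformulation whose MAP agrees with the MAP of the original model, and on which max-product BP converges to the true MAP (since the pairwise graph is a single cycle and, by the choice of $\rho$, has finite dynamic range). By Proposition~\ref{lemma:pairwisetooriginal}, the messages passed in the pairwise reformulation coincide (up to a harmless redundancy along one axis) with those of the clique-graph BP we actually run. Hence the assignment $x^*$ returned by running BP on the clique graph derived from $G$ is a true MAP of the original model, and therefore preserves every edge distance of $G$.

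Finally, I would invoke global rigidity. Assuming the points in $\mathcal T$ are in general position (so Proposition~\ref{prop} applies), the fact that $x^*$ preserves the lengths of all edges of $G$ forces it to preserve the lengths of all edges of the graph complement as well, i.e.\ all pairwise Euclidean distances. This is exactly $D(\mathcal T)=D(x^*(\mathcal T))$, giving $\nbr{D(\mathcal T)-D(x^*(\mathcal T))}_2^2=0$. The main subtlety I anticipate is the book-keeping at the interface between the three propositions, in particular making sure that the finite-$\rho$ construction used to satisfy the Weiss/Ihler hypothesis does not allow a spurious incompatible MAP to creep in; but this is already handled by the explicit choice of $\rho$ strictly smaller than the reciprocal of the product of the maximum clique potentials, so no MAP can sacrifice even one consistency factor.
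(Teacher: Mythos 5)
Your proposal is correct and follows essentially the same route as the paper's own proof: set $f=\delta$, use Propositions~\ref{lemma:originaltopairwise} and~\ref{lemma:pairwisetooriginal} together with Weiss's single-cycle result to conclude that BP on the clique graph returns a true MAP of $P_G$, observe that the hypothesized isometric instance forces the MAP value to be $1$ so that all edge lengths of $G$ are preserved, and then apply global rigidity (Proposition~\ref{prop}) to extend this to all pairwise distances. You are in fact slightly more explicit than the paper on two points it leaves implicit --- why the MAP score equals $1$ (via the existence of the exact instance) and the need for the general-position assumption for rigidity to apply --- but the argument is the same.
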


\begin{proof} For the exact matching case, we simply set $f(\cdot)=\delta(\cdot)$ in \eq{eq:potential}. Now, for a graph $G\in \mathcal G$ given by Algorithm \ref{algorithm_ring}, the clique graph will be simply a cycle, as shown in Figure \ref{fig:cliquegraph}, and following  propositions \ref{lemma:originaltopairwise} and \ref{lemma:pairwisetooriginal} as well as the already mentioned result from \cite{Weiss00}, belief propagation will find the correct MAP assignment $x^*$, i.e.
\begin{equation}
\begin{split}
\label{eq:partial}
x^* &= \argmax_x P_{G}(X=x)\\&= \argmax_x \prod_{i,j:(i,j)\in E} \delta(d(X_i,X_j)-d(x_i,x_j)),
\end{split}
\end{equation}
where $P_G$ is the probability distribution for the graphical model induced by the graph $G$. Now, we need to show that $x^*$ \emph{also} maximizes the criterion which ensures isometry, i.e.~we need to show that the above implies%whereas the optimal isometric solution $x^{opt}$ would be assured if all the edges were preserved, i.e. if \emph{all} the distances matched:
\begin{equation}
\begin{split}
\label{eq:total}
x^{*} &= \argmax_x P_{\text{complete}}(X=x)\\&= \argmax_x \prod_{i,j} \delta(d(X_i,X_j)-d(x_i,x_j)),
\end{split}
\end{equation}
%We need to prove that $x^*=x^{opt}$. 

\noindent where $P_{\text{complete}}$ is the probability distribution of the graphical model induced by the complete graph. Note that $x^*$ must be such that the lengths of the edges in $E$ are precisely equal to the lengths of the edges in $E_{\mathcal T'}$ (i.e.~the edges induced in $\mathcal S$ from $E$ by the map $X=x^*$). By the global rigidity of $G$, the lengths of $\bar{E}$ must then be also precisely equal to the lengths of $\bar{E}_{\mathcal T'}$. This implies that $\prod_{i,j:(i,j)\in \bar{E}} \delta(d(X_i,X_j)-d(x^*_i,x^*_j))=1$. Since  \eq{eq:total} can be expanded as

\begin{equation}
\begin{split}
\label{eq:expanded}
x^{*} = \argmax_x \Bigl\lbrace&\prod_{i,j:(i,j)\in E} \delta(d(X_i,X_j)-d(x_i,x_j))\\&\prod_{i,j:(i,j)\in \bar{E}} \delta(d(X_i,X_j)-d(x_i,x_j))\Bigr\rbrace,
\end{split}
\end{equation}

\noindent it becomes clear that $x^*$ will also maximize \eq{eq:total}. This proves the statement.\end{proof}

\section{Experiments}

We have set up a series of experiments comparing the proposed model to that of \cite{CaeCaeSchBar06}. Here we compare graphs of the type shown in Figure \ref{fig:graph} to graphs of the type shown in Figure \ref{fig:3tree}.

The parameters used in our experiments are as follows:

$\epsilon$ -- this parameter controls the noise-level used in our model. Here we apply Gaussian noise to each of the points in $\mathcal T$ (with standard deviation $\epsilon$ in each axis). We have run our experiments on a range of noise levels between $0$ and $4/256$ (where the original points in $\mathcal T$ are chosen randomly between $0$ and $1$). Note that this is the same as the setting used in \cite{CaeCaeSchBar06}.

Potential functions $\psi_{ij}(X_i=x_i,X_j=x_j)=f(d(X_i,X_j)-d(x_i,x_j))$ -- as in \cite{CaeCaeSchBar06}, we use a Gaussian function, i.e.~$\exp \left( \frac{(d(X_i,X_j)-d(x_i,x_j))^2}{2\sigma^2}\right)$. The parameter $\sigma$ is fixed beforehand as $\sigma = 0.4$ for the synthetic data, and $\sigma = 150$ for the real-world data (as is done in \cite{CaeCaeSchBar06}).

Dynamic range -- as mentioned in section \ref{sec:background}, the potential function $\Psi(\mathbf{x})$ is simply the product of $\psi_{ij}(X_i=x_i,X_j=x_j)$ for all edges $(i,j)$ in $\mathbf{x}$ (here each maximal clique $\mathbf{x}$ contains 3 edges). The \emph{dynamic range} of a function is simply defined as its maximum value divided by its minimum value (i.e.~$\frac{\max_{\mathbf{x}}\Psi(\mathbf{x})}{\min_{\mathbf{x}}\Psi(\mathbf{x})}$). In order to prove convergence of our model, it is necessary that the dynamic range of our potential function is finite \cite{IhlFisWil04}. Therefore, rather than using $\Psi(\mathbf{x})$ directly, we use $\Psi'(\mathbf{x}) = (1/d) + (1-1/d)\Psi(\mathbf{x})$. This ensures that the dynamic range of our model is no larger than $d$, and that $\Psi' \rightarrow \Psi$ as $d \rightarrow \infty$. In practice, we found that varying this parameter did not have a significant effect on convergence time. Hence we simply fixed a large finite value ($d=1000$) throughout.

MSE-cutoff -- in order to determine the point at which belief propagation has converged, we compute the marginal distribution of every clique, and compare it to the marginal distribution after the previous iteration. Belief propagation is terminated when this mean-squared error is less than a certain cutoff value for every clique in the graph. When choosing the mode of the marginal distributions after convergence, if two values differ by less than the square-root of this cutoff, both of them are considered as possible MAP-estimates (although this was rarely an issue when the cutoff was sufficiently small). We found that as $|\mathcal S|$ increased, the mean squared error between iterations tended to be smaller, and therefore that smaller cutoff values should be used in these instances. Indeed, although the number of viable matches increases as $|\mathcal S|$ increases, the distributions increase in sparsity at an even faster rate -- hence the distributions tend to be \emph{less peaked} on average, and changes are likely to have less effect on the mean squared error. Hence we decreased the cutoff values by a factor of 10 when $|\mathcal S| \ge 30$.\footnote{Note that this is not a parameter in \cite{CaeCaeSchBar06}, in which only a single iteration is ever required.}

The clique graph in which messages are passed by our belief propagation algorithms is exactly that shown in Figure \ref{fig:cliquegraph}. It is worth noting, however, that we also tried running belief propagation using a clique graph in which messages were passed between \emph{all} intersecting cliques; we found that this made no difference to the performance of the algorithm,\footnote{Apart from one slight difference: including the additional edges appears to provide convergence in fewer iterations. However, since the number of messages being passed is doubled, the overall running-time for both clique graphs was ultimately similar.} and we have therefore restricted our experiments to the clique graph of Figure \ref{fig:cliquegraph} in respect of its optimality guarantees.

For the sake of running-time comparison, we implemented the proposed model, as well as that of \cite{CaeCaeSchBar06} using the \emph{Elefant} belief propagation libraries in Python.\footnote{\scriptsize{\texttt{http://elefant.developer.nicta.com.au/}}} However, to ensure that the results presented are consistent with those of \cite{CaeCaeSchBar06}, we simply used code that the authors provided when reporting the matching accuracy of their model.

%While this provided a fair comparison between the running-time of the two methods, running the Junction Tree code on large models ($|\mathcal S| = 40$ or larger) proved prohibitively slow; we therefore used the MATLAB code of \cite{CaeCaeSchBar06} in order to measure classification accuracy.

Figure \ref{fig:class} compares the matching accuracy of our model with that of \cite{CaeCaeSchBar06} for $|\mathcal S| = 10,20,30,$ and $40$ (here we fix $|\mathcal T| = 10$). The performance of our algorithm is indistinguishable from that of the Junction Tree algorithm.

\begin{figure}
\begin{center}
\small{
% GNUPLOT: LaTeX picture
\setlength{\unitlength}{0.240900pt}
\ifx\plotpoint\undefined\newsavebox{\plotpoint}\fi
\sbox{\plotpoint}{\rule[-0.200pt]{0.400pt}{0.400pt}}%
\begin{picture}(900,629)(0,0)
\sbox{\plotpoint}{\rule[-0.200pt]{0.400pt}{0.400pt}}%
\put(201.0,123.0){\rule[-0.200pt]{4.818pt}{0.400pt}}
\put(181,123){\makebox(0,0)[r]{ 0.8}}
\put(819.0,123.0){\rule[-0.200pt]{4.818pt}{0.400pt}}
\put(201.0,161.0){\rule[-0.200pt]{4.818pt}{0.400pt}}
\put(181,161){\makebox(0,0)[r]{ 0.82}}
\put(819.0,161.0){\rule[-0.200pt]{4.818pt}{0.400pt}}
\put(201.0,200.0){\rule[-0.200pt]{4.818pt}{0.400pt}}
\put(181,200){\makebox(0,0)[r]{ 0.84}}
\put(819.0,200.0){\rule[-0.200pt]{4.818pt}{0.400pt}}
\put(201.0,238.0){\rule[-0.200pt]{4.818pt}{0.400pt}}
\put(181,238){\makebox(0,0)[r]{ 0.86}}
\put(819.0,238.0){\rule[-0.200pt]{4.818pt}{0.400pt}}
\put(201.0,277.0){\rule[-0.200pt]{4.818pt}{0.400pt}}
\put(181,277){\makebox(0,0)[r]{ 0.88}}
\put(819.0,277.0){\rule[-0.200pt]{4.818pt}{0.400pt}}
\put(201.0,315.0){\rule[-0.200pt]{4.818pt}{0.400pt}}
\put(181,315){\makebox(0,0)[r]{ 0.9}}
\put(819.0,315.0){\rule[-0.200pt]{4.818pt}{0.400pt}}
\put(201.0,353.0){\rule[-0.200pt]{4.818pt}{0.400pt}}
\put(181,353){\makebox(0,0)[r]{ 0.92}}
\put(819.0,353.0){\rule[-0.200pt]{4.818pt}{0.400pt}}
\put(201.0,392.0){\rule[-0.200pt]{4.818pt}{0.400pt}}
\put(181,392){\makebox(0,0)[r]{ 0.94}}
\put(819.0,392.0){\rule[-0.200pt]{4.818pt}{0.400pt}}
\put(201.0,430.0){\rule[-0.200pt]{4.818pt}{0.400pt}}
\put(181,430){\makebox(0,0)[r]{ 0.96}}
\put(819.0,430.0){\rule[-0.200pt]{4.818pt}{0.400pt}}
\put(201.0,469.0){\rule[-0.200pt]{4.818pt}{0.400pt}}
\put(181,469){\makebox(0,0)[r]{ 0.98}}
\put(819.0,469.0){\rule[-0.200pt]{4.818pt}{0.400pt}}
\put(201.0,507.0){\rule[-0.200pt]{4.818pt}{0.400pt}}
\put(181,507){\makebox(0,0)[r]{ 1}}
\put(819.0,507.0){\rule[-0.200pt]{4.818pt}{0.400pt}}
\put(201.0,123.0){\rule[-0.200pt]{0.400pt}{4.818pt}}
\put(201,82){\makebox(0,0){ 0}}
\put(201.0,487.0){\rule[-0.200pt]{0.400pt}{4.818pt}}
\put(281.0,123.0){\rule[-0.200pt]{0.400pt}{4.818pt}}
\put(281,82){\makebox(0,0){ 0.5}}
\put(281.0,487.0){\rule[-0.200pt]{0.400pt}{4.818pt}}
\put(361.0,123.0){\rule[-0.200pt]{0.400pt}{4.818pt}}
\put(361,82){\makebox(0,0){ 1}}
\put(361.0,487.0){\rule[-0.200pt]{0.400pt}{4.818pt}}
\put(440.0,123.0){\rule[-0.200pt]{0.400pt}{4.818pt}}
\put(440,82){\makebox(0,0){ 1.5}}
\put(440.0,487.0){\rule[-0.200pt]{0.400pt}{4.818pt}}
\put(520.0,123.0){\rule[-0.200pt]{0.400pt}{4.818pt}}
\put(520,82){\makebox(0,0){ 2}}
\put(520.0,487.0){\rule[-0.200pt]{0.400pt}{4.818pt}}
\put(600.0,123.0){\rule[-0.200pt]{0.400pt}{4.818pt}}
\put(600,82){\makebox(0,0){ 2.5}}
\put(600.0,487.0){\rule[-0.200pt]{0.400pt}{4.818pt}}
\put(680.0,123.0){\rule[-0.200pt]{0.400pt}{4.818pt}}
\put(680,82){\makebox(0,0){ 3}}
\put(680.0,487.0){\rule[-0.200pt]{0.400pt}{4.818pt}}
\put(759.0,123.0){\rule[-0.200pt]{0.400pt}{4.818pt}}
\put(759,82){\makebox(0,0){ 3.5}}
\put(759.0,487.0){\rule[-0.200pt]{0.400pt}{4.818pt}}
\put(839.0,123.0){\rule[-0.200pt]{0.400pt}{4.818pt}}
\put(839,82){\makebox(0,0){ 4}}
\put(839.0,487.0){\rule[-0.200pt]{0.400pt}{4.818pt}}
\put(201.0,123.0){\rule[-0.200pt]{153.694pt}{0.400pt}}
\put(839.0,123.0){\rule[-0.200pt]{0.400pt}{92.506pt}}
\put(201.0,507.0){\rule[-0.200pt]{153.694pt}{0.400pt}}
\put(201.0,123.0){\rule[-0.200pt]{0.400pt}{92.506pt}}
\put(40,315){\makebox(0,0){\rotatebox{90}{matching accuracy}}}
\put(520,21){\makebox(0,0){$256\epsilon$}}
\put(520,569){\makebox(0,0){$|\mathcal S|= 10$, MSE $< 10^{-8}$}}
\put(281,205){\makebox(0,0)[r]{JT}}
\multiput(301,205)(20.756,0.000){5}{\usebox{\plotpoint}}
\put(401,205){\usebox{\plotpoint}}
\put(301.00,215.00){\usebox{\plotpoint}}
\put(301,195){\usebox{\plotpoint}}
\put(401.00,215.00){\usebox{\plotpoint}}
\put(401,195){\usebox{\plotpoint}}
\put(201,507){\usebox{\plotpoint}}
\multiput(201,507)(20.756,0.000){4}{\usebox{\plotpoint}}
\multiput(281,507)(20.652,-2.065){4}{\usebox{\plotpoint}}
\multiput(361,499)(20.729,-1.050){4}{\usebox{\plotpoint}}
\multiput(440,495)(20.741,-0.778){4}{\usebox{\plotpoint}}
\multiput(520,492)(20.730,-1.036){4}{\usebox{\plotpoint}}
\multiput(600,488)(20.652,-2.065){4}{\usebox{\plotpoint}}
\multiput(680,480)(20.729,-1.050){3}{\usebox{\plotpoint}}
\multiput(759,476)(20.677,-1.809){4}{\usebox{\plotpoint}}
\put(839,469){\usebox{\plotpoint}}
\put(201,507){\usebox{\plotpoint}}
\put(201,507){\usebox{\plotpoint}}
\put(191.00,507.00){\usebox{\plotpoint}}
\put(211,507){\usebox{\plotpoint}}
\put(191.00,507.00){\usebox{\plotpoint}}
\put(211,507){\usebox{\plotpoint}}
\put(281,507){\usebox{\plotpoint}}
\put(281,507){\usebox{\plotpoint}}
\put(271.00,507.00){\usebox{\plotpoint}}
\put(291,507){\usebox{\plotpoint}}
\put(271.00,507.00){\usebox{\plotpoint}}
\put(291,507){\usebox{\plotpoint}}
\put(361.00,494.00){\usebox{\plotpoint}}
\put(361,505){\usebox{\plotpoint}}
\put(351.00,494.00){\usebox{\plotpoint}}
\put(371,494){\usebox{\plotpoint}}
\put(351.00,505.00){\usebox{\plotpoint}}
\put(371,505){\usebox{\plotpoint}}
\put(440.00,489.00){\usebox{\plotpoint}}
\put(440,502){\usebox{\plotpoint}}
\put(430.00,489.00){\usebox{\plotpoint}}
\put(450,489){\usebox{\plotpoint}}
\put(430.00,502.00){\usebox{\plotpoint}}
\put(450,502){\usebox{\plotpoint}}
\put(520.00,484.00){\usebox{\plotpoint}}
\put(520,499){\usebox{\plotpoint}}
\put(510.00,484.00){\usebox{\plotpoint}}
\put(530,484){\usebox{\plotpoint}}
\put(510.00,499.00){\usebox{\plotpoint}}
\put(530,499){\usebox{\plotpoint}}
\put(600.00,480.00){\usebox{\plotpoint}}
\put(600,496){\usebox{\plotpoint}}
\put(590.00,480.00){\usebox{\plotpoint}}
\put(610,480){\usebox{\plotpoint}}
\put(590.00,496.00){\usebox{\plotpoint}}
\put(610,496){\usebox{\plotpoint}}
\put(680.00,471.00){\usebox{\plotpoint}}
\put(680,490){\usebox{\plotpoint}}
\put(670.00,471.00){\usebox{\plotpoint}}
\put(690,471){\usebox{\plotpoint}}
\put(670.00,490.00){\usebox{\plotpoint}}
\put(690,490){\usebox{\plotpoint}}
\put(759.00,466.00){\usebox{\plotpoint}}
\put(759,486){\usebox{\plotpoint}}
\put(749.00,466.00){\usebox{\plotpoint}}
\put(769,466){\usebox{\plotpoint}}
\put(749.00,486.00){\usebox{\plotpoint}}
\put(769,486){\usebox{\plotpoint}}
\multiput(839,456)(0.000,20.756){2}{\usebox{\plotpoint}}
\put(839,481){\usebox{\plotpoint}}
\put(829.00,456.00){\usebox{\plotpoint}}
\put(849,456){\usebox{\plotpoint}}
\put(829.00,481.00){\usebox{\plotpoint}}
\put(849,481){\usebox{\plotpoint}}
\put(201,507){\circle{12}}
\put(281,507){\circle{12}}
\put(361,499){\circle{12}}
\put(440,495){\circle{12}}
\put(520,492){\circle{12}}
\put(600,488){\circle{12}}
\put(680,480){\circle{12}}
\put(759,476){\circle{12}}
\put(839,469){\circle{12}}
\put(351,205){\circle{12}}
\sbox{\plotpoint}{\rule[-0.400pt]{0.800pt}{0.800pt}}%
\sbox{\plotpoint}{\rule[-0.200pt]{0.400pt}{0.400pt}}%
\put(281,164){\makebox(0,0)[r]{LBP}}
\sbox{\plotpoint}{\rule[-0.400pt]{0.800pt}{0.800pt}}%
\put(301.0,164.0){\rule[-0.400pt]{24.090pt}{0.800pt}}
\put(301.0,154.0){\rule[-0.400pt]{0.800pt}{4.818pt}}
\put(401.0,154.0){\rule[-0.400pt]{0.800pt}{4.818pt}}
\put(201,507){\usebox{\plotpoint}}
\put(201,503.34){\rule{16.200pt}{0.800pt}}
\multiput(201.00,505.34)(46.376,-4.000){2}{\rule{8.100pt}{0.800pt}}
\multiput(281.00,501.08)(3.930,-0.512){15}{\rule{6.018pt}{0.123pt}}
\multiput(281.00,501.34)(67.509,-11.000){2}{\rule{3.009pt}{0.800pt}}
\put(361,488.34){\rule{16.000pt}{0.800pt}}
\multiput(361.00,490.34)(45.791,-4.000){2}{\rule{8.000pt}{0.800pt}}
\put(440,484.34){\rule{16.200pt}{0.800pt}}
\multiput(440.00,486.34)(46.376,-4.000){2}{\rule{8.100pt}{0.800pt}}
\multiput(759.00,482.09)(1.779,-0.505){39}{\rule{2.983pt}{0.122pt}}
\multiput(759.00,482.34)(73.809,-23.000){2}{\rule{1.491pt}{0.800pt}}
\put(520.0,484.0){\rule[-0.400pt]{57.575pt}{0.800pt}}
\put(201,507){\usebox{\plotpoint}}
\put(191.0,507.0){\rule[-0.400pt]{4.818pt}{0.800pt}}
\put(191.0,507.0){\rule[-0.400pt]{4.818pt}{0.800pt}}
\put(281.0,499.0){\rule[-0.400pt]{0.800pt}{1.927pt}}
\put(271.0,499.0){\rule[-0.400pt]{4.818pt}{0.800pt}}
\put(271.0,507.0){\rule[-0.400pt]{4.818pt}{0.800pt}}
\put(361.0,484.0){\rule[-0.400pt]{0.800pt}{3.613pt}}
\put(351.0,484.0){\rule[-0.400pt]{4.818pt}{0.800pt}}
\put(351.0,499.0){\rule[-0.400pt]{4.818pt}{0.800pt}}
\put(440.0,478.0){\rule[-0.400pt]{0.800pt}{4.818pt}}
\put(430.0,478.0){\rule[-0.400pt]{4.818pt}{0.800pt}}
\put(430.0,498.0){\rule[-0.400pt]{4.818pt}{0.800pt}}
\put(520.0,474.0){\rule[-0.400pt]{0.800pt}{4.818pt}}
\put(510.0,474.0){\rule[-0.400pt]{4.818pt}{0.800pt}}
\put(510.0,494.0){\rule[-0.400pt]{4.818pt}{0.800pt}}
\put(600.0,474.0){\rule[-0.400pt]{0.800pt}{4.818pt}}
\put(590.0,474.0){\rule[-0.400pt]{4.818pt}{0.800pt}}
\put(590.0,494.0){\rule[-0.400pt]{4.818pt}{0.800pt}}
\put(680.0,474.0){\rule[-0.400pt]{0.800pt}{4.818pt}}
\put(670.0,474.0){\rule[-0.400pt]{4.818pt}{0.800pt}}
\put(670.0,494.0){\rule[-0.400pt]{4.818pt}{0.800pt}}
\put(759.0,474.0){\rule[-0.400pt]{0.800pt}{4.818pt}}
\put(749.0,474.0){\rule[-0.400pt]{4.818pt}{0.800pt}}
\put(749.0,494.0){\rule[-0.400pt]{4.818pt}{0.800pt}}
\put(839.0,446.0){\rule[-0.400pt]{0.800pt}{7.227pt}}
\put(829.0,446.0){\rule[-0.400pt]{4.818pt}{0.800pt}}
\put(201,507){\circle{18}}
\put(281,503){\circle{18}}
\put(361,492){\circle{18}}
\put(440,488){\circle{18}}
\put(520,484){\circle{18}}
\put(600,484){\circle{18}}
\put(680,484){\circle{18}}
\put(759,484){\circle{18}}
\put(839,461){\circle{18}}
\put(351,164){\circle{18}}
\put(829.0,476.0){\rule[-0.400pt]{4.818pt}{0.800pt}}
\sbox{\plotpoint}{\rule[-0.200pt]{0.400pt}{0.400pt}}%
\put(201.0,123.0){\rule[-0.200pt]{153.694pt}{0.400pt}}
\put(839.0,123.0){\rule[-0.200pt]{0.400pt}{92.506pt}}
\put(201.0,507.0){\rule[-0.200pt]{153.694pt}{0.400pt}}
\put(201.0,123.0){\rule[-0.200pt]{0.400pt}{92.506pt}}
\end{picture}
\input{plot_class20}
\input{plot_class30}
\input{plot_class40}
}
\end{center}
\caption{Matching accuracy of our model against that of \cite{CaeCaeSchBar06}. The performance of our model is statistically indistinguishable from that of \cite{CaeCaeSchBar06} for all noise levels. The error bars indicate the average and standard error of 50 experiments.}
\label{fig:class}
\end{figure}

Figures \ref{fig:time} and \ref{fig:classm} show the running-time and matching accuracy (respectively) of our model, as we vary the mean-squared error cutoff. Obviously, it is necessary to use a sufficiently low cutoff in order to ensure that our model has converged, but choosing too small a value may adversely effect its running-time. We found that the mean-squared error varied largely during the first few iterations, and we therefore enforced a minimum number of iterations (here we chose at least 5) in order to ensure that belief-propagation was not terminated prematurely. Figure \ref{fig:time} reveals that the running-time is not significantly altered when increasing the MSE-cutoff -- revealing that the model has almost always reached the lower cutoff value after 5 iterations (in which case we should expect a speed-up of precisely $|\mathcal S|/5$). Furthermore, decreasing the MSE-cutoff does not significantly improve the matching accuracy for larger point sets (Figure \ref{fig:classm}), so choosing the lower cutoff does little harm if running-time is major a concern. Alternately, the Junction Tree model (which only requires a single iteration), took (for $S=10$ to $40$), 3, 44, 250, and 1031 seconds respectively. These models differ only in the topology of the network (see section \ref{sec:improved}), and the size of the messages being passed; our method easily achieves an order of magnitude improvement for large networks.\footnote{In fact, the speed-up appears to be \emph{more} than an order of magnitude for the large graphs, which is likely a side effect of the large memory requirements of the Junction Tree algorithm.}

%Figures \ref{fig:time}, \ref{fig:iter} and \ref{fig:classm} show the running-time, iterations until convergence, and classification accuracy (respectively) of our model, as we vary the mean-squared error cutoff. Obviously, it is necessary to use a sufficiently low cutoff in order to ensure that our model has converged, but choosing too small a value will adversely effect the running-time of our model. Figure \ref{fig:time} shows that the running-time of our model is significantly impaired as we decrease the cutoff from $10^{-6}$ to $10^{-8}$ (or from $10^{-7}$ to $10^{-9}$ when $|\mathcal S| \ge 30$), while the classification accuracy is only slightly improved. Hence using the higher cutoff of $10^{-6}$ (or $10^{-7}$) should be sufficient. Alternately, the Junction Tree model (which only requires a single iteration), took for $S=10$ to $40$, 4, 6, 365, and 1385 seconds respectively. These models differ only in the topology of the network (see section \ref{sec:improved}), and the size of the messages being passed; our method easily achieves an order of magnitude improvement for large networks.

\begin{figure}
\begin{center}
\small{
% GNUPLOT: LaTeX picture
\setlength{\unitlength}{0.240900pt}
\ifx\plotpoint\undefined\newsavebox{\plotpoint}\fi
\sbox{\plotpoint}{\rule[-0.200pt]{0.400pt}{0.400pt}}%
\begin{picture}(900,629)(0,0)
\sbox{\plotpoint}{\rule[-0.200pt]{0.400pt}{0.400pt}}%
\put(181.0,123.0){\rule[-0.200pt]{4.818pt}{0.400pt}}
\put(161,123){\makebox(0,0)[r]{ 0}}
\put(819.0,123.0){\rule[-0.200pt]{4.818pt}{0.400pt}}
\put(181.0,200.0){\rule[-0.200pt]{4.818pt}{0.400pt}}
\put(161,200){\makebox(0,0)[r]{ 0.1}}
\put(819.0,200.0){\rule[-0.200pt]{4.818pt}{0.400pt}}
\put(181.0,277.0){\rule[-0.200pt]{4.818pt}{0.400pt}}
\put(161,277){\makebox(0,0)[r]{ 0.2}}
\put(819.0,277.0){\rule[-0.200pt]{4.818pt}{0.400pt}}
\put(181.0,353.0){\rule[-0.200pt]{4.818pt}{0.400pt}}
\put(161,353){\makebox(0,0)[r]{ 0.3}}
\put(819.0,353.0){\rule[-0.200pt]{4.818pt}{0.400pt}}
\put(181.0,430.0){\rule[-0.200pt]{4.818pt}{0.400pt}}
\put(161,430){\makebox(0,0)[r]{ 0.4}}
\put(819.0,430.0){\rule[-0.200pt]{4.818pt}{0.400pt}}
\put(181.0,507.0){\rule[-0.200pt]{4.818pt}{0.400pt}}
\put(161,507){\makebox(0,0)[r]{ 0.5}}
\put(819.0,507.0){\rule[-0.200pt]{4.818pt}{0.400pt}}
\put(181.0,123.0){\rule[-0.200pt]{0.400pt}{4.818pt}}
\put(181,82){\makebox(0,0){ 0}}
\put(181.0,487.0){\rule[-0.200pt]{0.400pt}{4.818pt}}
\put(263.0,123.0){\rule[-0.200pt]{0.400pt}{4.818pt}}
\put(263,82){\makebox(0,0){ 0.5}}
\put(263.0,487.0){\rule[-0.200pt]{0.400pt}{4.818pt}}
\put(346.0,123.0){\rule[-0.200pt]{0.400pt}{4.818pt}}
\put(346,82){\makebox(0,0){ 1}}
\put(346.0,487.0){\rule[-0.200pt]{0.400pt}{4.818pt}}
\put(428.0,123.0){\rule[-0.200pt]{0.400pt}{4.818pt}}
\put(428,82){\makebox(0,0){ 1.5}}
\put(428.0,487.0){\rule[-0.200pt]{0.400pt}{4.818pt}}
\put(510.0,123.0){\rule[-0.200pt]{0.400pt}{4.818pt}}
\put(510,82){\makebox(0,0){ 2}}
\put(510.0,487.0){\rule[-0.200pt]{0.400pt}{4.818pt}}
\put(592.0,123.0){\rule[-0.200pt]{0.400pt}{4.818pt}}
\put(592,82){\makebox(0,0){ 2.5}}
\put(592.0,487.0){\rule[-0.200pt]{0.400pt}{4.818pt}}
\put(675.0,123.0){\rule[-0.200pt]{0.400pt}{4.818pt}}
\put(675,82){\makebox(0,0){ 3}}
\put(675.0,487.0){\rule[-0.200pt]{0.400pt}{4.818pt}}
\put(757.0,123.0){\rule[-0.200pt]{0.400pt}{4.818pt}}
\put(757,82){\makebox(0,0){ 3.5}}
\put(757.0,487.0){\rule[-0.200pt]{0.400pt}{4.818pt}}
\put(839.0,123.0){\rule[-0.200pt]{0.400pt}{4.818pt}}
\put(839,82){\makebox(0,0){ 4}}
\put(839.0,487.0){\rule[-0.200pt]{0.400pt}{4.818pt}}
\put(181.0,123.0){\rule[-0.200pt]{158.512pt}{0.400pt}}
\put(839.0,123.0){\rule[-0.200pt]{0.400pt}{92.506pt}}
\put(181.0,507.0){\rule[-0.200pt]{158.512pt}{0.400pt}}
\put(181.0,123.0){\rule[-0.200pt]{0.400pt}{92.506pt}}
\put(40,315){\makebox(0,0){\rotatebox{90}{time (seconds)}}}
\put(510,21){\makebox(0,0){$256\epsilon$}}
\put(510,569){\makebox(0,0){$|\mathcal S|= 10$ (JT time = 3 seconds)}}
\sbox{\plotpoint}{\rule[-0.400pt]{0.800pt}{0.800pt}}%
\sbox{\plotpoint}{\rule[-0.200pt]{0.400pt}{0.400pt}}%
\put(501,205){\makebox(0,0)[r]{MSE $< 10^{-8}$}}
\sbox{\plotpoint}{\rule[-0.400pt]{0.800pt}{0.800pt}}%
\put(521.0,205.0){\rule[-0.400pt]{24.090pt}{0.800pt}}
\put(521.0,195.0){\rule[-0.400pt]{0.800pt}{4.818pt}}
\put(621.0,195.0){\rule[-0.400pt]{0.800pt}{4.818pt}}
\put(181,380){\usebox{\plotpoint}}
\multiput(181.00,381.41)(0.708,0.502){109}{\rule{1.331pt}{0.121pt}}
\multiput(181.00,378.34)(79.237,58.000){2}{\rule{0.666pt}{0.800pt}}
\put(510,436.84){\rule{19.754pt}{0.800pt}}
\multiput(510.00,436.34)(41.000,1.000){2}{\rule{9.877pt}{0.800pt}}
\put(263.0,438.0){\rule[-0.400pt]{59.502pt}{0.800pt}}
\put(757,437.84){\rule{19.754pt}{0.800pt}}
\multiput(757.00,437.34)(41.000,1.000){2}{\rule{9.877pt}{0.800pt}}
\put(592.0,439.0){\rule[-0.400pt]{39.748pt}{0.800pt}}
\put(181.0,378.0){\rule[-0.400pt]{0.800pt}{0.964pt}}
\put(171.0,378.0){\rule[-0.400pt]{4.818pt}{0.800pt}}
\put(171.0,382.0){\rule[-0.400pt]{4.818pt}{0.800pt}}
\put(263.0,435.0){\rule[-0.400pt]{0.800pt}{1.445pt}}
\put(253.0,435.0){\rule[-0.400pt]{4.818pt}{0.800pt}}
\put(253.0,441.0){\rule[-0.400pt]{4.818pt}{0.800pt}}
\put(346.0,435.0){\rule[-0.400pt]{0.800pt}{1.445pt}}
\put(336.0,435.0){\rule[-0.400pt]{4.818pt}{0.800pt}}
\put(336.0,441.0){\rule[-0.400pt]{4.818pt}{0.800pt}}
\put(428.0,436.0){\rule[-0.400pt]{0.800pt}{1.204pt}}
\put(418.0,436.0){\rule[-0.400pt]{4.818pt}{0.800pt}}
\put(418.0,441.0){\rule[-0.400pt]{4.818pt}{0.800pt}}
\put(510.0,435.0){\rule[-0.400pt]{0.800pt}{1.204pt}}
\put(500.0,435.0){\rule[-0.400pt]{4.818pt}{0.800pt}}
\put(500.0,440.0){\rule[-0.400pt]{4.818pt}{0.800pt}}
\put(592.0,436.0){\rule[-0.400pt]{0.800pt}{1.204pt}}
\put(582.0,436.0){\rule[-0.400pt]{4.818pt}{0.800pt}}
\put(582.0,441.0){\rule[-0.400pt]{4.818pt}{0.800pt}}
\put(675.0,436.0){\rule[-0.400pt]{0.800pt}{1.445pt}}
\put(665.0,436.0){\rule[-0.400pt]{4.818pt}{0.800pt}}
\put(665.0,442.0){\rule[-0.400pt]{4.818pt}{0.800pt}}
\put(757.0,436.0){\rule[-0.400pt]{0.800pt}{1.445pt}}
\put(747.0,436.0){\rule[-0.400pt]{4.818pt}{0.800pt}}
\put(747.0,442.0){\rule[-0.400pt]{4.818pt}{0.800pt}}
\put(839.0,437.0){\rule[-0.400pt]{0.800pt}{1.445pt}}
\put(829.0,437.0){\rule[-0.400pt]{4.818pt}{0.800pt}}
\put(181,380){\circle{18}}
\put(263,438){\circle{18}}
\put(346,438){\circle{18}}
\put(428,438){\circle{18}}
\put(510,438){\circle{18}}
\put(592,439){\circle{18}}
\put(675,439){\circle{18}}
\put(757,439){\circle{18}}
\put(839,440){\circle{18}}
\put(571,205){\circle{18}}
\put(829.0,443.0){\rule[-0.400pt]{4.818pt}{0.800pt}}
\sbox{\plotpoint}{\rule[-0.200pt]{0.400pt}{0.400pt}}%
\put(501,164){\makebox(0,0)[r]{MSE $< 10^{-4}$}}
\put(521.0,164.0){\rule[-0.200pt]{24.090pt}{0.400pt}}
\put(521.0,154.0){\rule[-0.200pt]{0.400pt}{4.818pt}}
\put(621.0,154.0){\rule[-0.200pt]{0.400pt}{4.818pt}}
\put(181,363){\usebox{\plotpoint}}
\multiput(181.00,363.58)(1.591,0.497){49}{\rule{1.362pt}{0.120pt}}
\multiput(181.00,362.17)(79.174,26.000){2}{\rule{0.681pt}{0.400pt}}
\put(263,387.67){\rule{19.995pt}{0.400pt}}
\multiput(263.00,388.17)(41.500,-1.000){2}{\rule{9.997pt}{0.400pt}}
\put(346,387.67){\rule{19.754pt}{0.400pt}}
\multiput(346.00,387.17)(41.000,1.000){2}{\rule{9.877pt}{0.400pt}}
\put(428.0,389.0){\rule[-0.200pt]{99.010pt}{0.400pt}}
\put(181.0,362.0){\rule[-0.200pt]{0.400pt}{0.723pt}}
\put(171.0,362.0){\rule[-0.200pt]{4.818pt}{0.400pt}}
\put(171.0,365.0){\rule[-0.200pt]{4.818pt}{0.400pt}}
\put(263.0,388.0){\rule[-0.200pt]{0.400pt}{0.723pt}}
\put(253.0,388.0){\rule[-0.200pt]{4.818pt}{0.400pt}}
\put(253.0,391.0){\rule[-0.200pt]{4.818pt}{0.400pt}}
\put(346.0,387.0){\rule[-0.200pt]{0.400pt}{0.723pt}}
\put(336.0,387.0){\rule[-0.200pt]{4.818pt}{0.400pt}}
\put(336.0,390.0){\rule[-0.200pt]{4.818pt}{0.400pt}}
\put(428.0,387.0){\rule[-0.200pt]{0.400pt}{0.723pt}}
\put(418.0,387.0){\rule[-0.200pt]{4.818pt}{0.400pt}}
\put(418.0,390.0){\rule[-0.200pt]{4.818pt}{0.400pt}}
\put(510.0,387.0){\rule[-0.200pt]{0.400pt}{0.723pt}}
\put(500.0,387.0){\rule[-0.200pt]{4.818pt}{0.400pt}}
\put(500.0,390.0){\rule[-0.200pt]{4.818pt}{0.400pt}}
\put(592.0,388.0){\rule[-0.200pt]{0.400pt}{0.723pt}}
\put(582.0,388.0){\rule[-0.200pt]{4.818pt}{0.400pt}}
\put(582.0,391.0){\rule[-0.200pt]{4.818pt}{0.400pt}}
\put(675.0,388.0){\rule[-0.200pt]{0.400pt}{0.723pt}}
\put(665.0,388.0){\rule[-0.200pt]{4.818pt}{0.400pt}}
\put(665.0,391.0){\rule[-0.200pt]{4.818pt}{0.400pt}}
\put(757.0,387.0){\rule[-0.200pt]{0.400pt}{0.723pt}}
\put(747.0,387.0){\rule[-0.200pt]{4.818pt}{0.400pt}}
\put(747.0,390.0){\rule[-0.200pt]{4.818pt}{0.400pt}}
\put(839.0,387.0){\rule[-0.200pt]{0.400pt}{0.723pt}}
\put(829.0,387.0){\rule[-0.200pt]{4.818pt}{0.400pt}}
\put(181,363){\circle{12}}
\put(263,389){\circle{12}}
\put(346,388){\circle{12}}
\put(428,389){\circle{12}}
\put(510,389){\circle{12}}
\put(592,389){\circle{12}}
\put(675,389){\circle{12}}
\put(757,389){\circle{12}}
\put(839,389){\circle{12}}
\put(571,164){\circle{12}}
\put(829.0,390.0){\rule[-0.200pt]{4.818pt}{0.400pt}}
\put(181.0,123.0){\rule[-0.200pt]{158.512pt}{0.400pt}}
\put(839.0,123.0){\rule[-0.200pt]{0.400pt}{92.506pt}}
\put(181.0,507.0){\rule[-0.200pt]{158.512pt}{0.400pt}}
\put(181.0,123.0){\rule[-0.200pt]{0.400pt}{92.506pt}}
\end{picture}
% GNUPLOT: LaTeX picture
\setlength{\unitlength}{0.240900pt}
\ifx\plotpoint\undefined\newsavebox{\plotpoint}\fi
\sbox{\plotpoint}{\rule[-0.200pt]{0.400pt}{0.400pt}}%
\begin{picture}(900,629)(0,0)
\sbox{\plotpoint}{\rule[-0.200pt]{0.400pt}{0.400pt}}%
\put(181.0,123.0){\rule[-0.200pt]{4.818pt}{0.400pt}}
\put(161,123){\makebox(0,0)[r]{ 2}}
\put(819.0,123.0){\rule[-0.200pt]{4.818pt}{0.400pt}}
\put(181.0,200.0){\rule[-0.200pt]{4.818pt}{0.400pt}}
\put(161,200){\makebox(0,0)[r]{ 2.1}}
\put(819.0,200.0){\rule[-0.200pt]{4.818pt}{0.400pt}}
\put(181.0,277.0){\rule[-0.200pt]{4.818pt}{0.400pt}}
\put(161,277){\makebox(0,0)[r]{ 2.2}}
\put(819.0,277.0){\rule[-0.200pt]{4.818pt}{0.400pt}}
\put(181.0,353.0){\rule[-0.200pt]{4.818pt}{0.400pt}}
\put(161,353){\makebox(0,0)[r]{ 2.3}}
\put(819.0,353.0){\rule[-0.200pt]{4.818pt}{0.400pt}}
\put(181.0,430.0){\rule[-0.200pt]{4.818pt}{0.400pt}}
\put(161,430){\makebox(0,0)[r]{ 2.4}}
\put(819.0,430.0){\rule[-0.200pt]{4.818pt}{0.400pt}}
\put(181.0,507.0){\rule[-0.200pt]{4.818pt}{0.400pt}}
\put(161,507){\makebox(0,0)[r]{ 2.5}}
\put(819.0,507.0){\rule[-0.200pt]{4.818pt}{0.400pt}}
\put(181.0,123.0){\rule[-0.200pt]{0.400pt}{4.818pt}}
\put(181,82){\makebox(0,0){ 0}}
\put(181.0,487.0){\rule[-0.200pt]{0.400pt}{4.818pt}}
\put(263.0,123.0){\rule[-0.200pt]{0.400pt}{4.818pt}}
\put(263,82){\makebox(0,0){ 0.5}}
\put(263.0,487.0){\rule[-0.200pt]{0.400pt}{4.818pt}}
\put(346.0,123.0){\rule[-0.200pt]{0.400pt}{4.818pt}}
\put(346,82){\makebox(0,0){ 1}}
\put(346.0,487.0){\rule[-0.200pt]{0.400pt}{4.818pt}}
\put(428.0,123.0){\rule[-0.200pt]{0.400pt}{4.818pt}}
\put(428,82){\makebox(0,0){ 1.5}}
\put(428.0,487.0){\rule[-0.200pt]{0.400pt}{4.818pt}}
\put(510.0,123.0){\rule[-0.200pt]{0.400pt}{4.818pt}}
\put(510,82){\makebox(0,0){ 2}}
\put(510.0,487.0){\rule[-0.200pt]{0.400pt}{4.818pt}}
\put(592.0,123.0){\rule[-0.200pt]{0.400pt}{4.818pt}}
\put(592,82){\makebox(0,0){ 2.5}}
\put(592.0,487.0){\rule[-0.200pt]{0.400pt}{4.818pt}}
\put(675.0,123.0){\rule[-0.200pt]{0.400pt}{4.818pt}}
\put(675,82){\makebox(0,0){ 3}}
\put(675.0,487.0){\rule[-0.200pt]{0.400pt}{4.818pt}}
\put(757.0,123.0){\rule[-0.200pt]{0.400pt}{4.818pt}}
\put(757,82){\makebox(0,0){ 3.5}}
\put(757.0,487.0){\rule[-0.200pt]{0.400pt}{4.818pt}}
\put(839.0,123.0){\rule[-0.200pt]{0.400pt}{4.818pt}}
\put(839,82){\makebox(0,0){ 4}}
\put(839.0,487.0){\rule[-0.200pt]{0.400pt}{4.818pt}}
\put(181.0,123.0){\rule[-0.200pt]{158.512pt}{0.400pt}}
\put(839.0,123.0){\rule[-0.200pt]{0.400pt}{92.506pt}}
\put(181.0,507.0){\rule[-0.200pt]{158.512pt}{0.400pt}}
\put(181.0,123.0){\rule[-0.200pt]{0.400pt}{92.506pt}}
\put(40,315){\makebox(0,0){\rotatebox{90}{time (seconds)}}}
\put(510,21){\makebox(0,0){$256\epsilon$}}
\put(510,569){\makebox(0,0){$|\mathcal S|= 20$ (JT time = 44 seconds)}}
\sbox{\plotpoint}{\rule[-0.400pt]{0.800pt}{0.800pt}}%
\sbox{\plotpoint}{\rule[-0.200pt]{0.400pt}{0.400pt}}%
\put(501,205){\makebox(0,0)[r]{MSE $< 10^{-8}$}}
\sbox{\plotpoint}{\rule[-0.400pt]{0.800pt}{0.800pt}}%
\put(521.0,205.0){\rule[-0.400pt]{24.090pt}{0.800pt}}
\put(521.0,195.0){\rule[-0.400pt]{0.800pt}{4.818pt}}
\put(621.0,195.0){\rule[-0.400pt]{0.800pt}{4.818pt}}
\put(181,247){\usebox{\plotpoint}}
\multiput(182.41,247.00)(0.501,1.028){157}{\rule{0.121pt}{1.839pt}}
\multiput(179.34,247.00)(82.000,164.183){2}{\rule{0.800pt}{0.920pt}}
\put(263,412.34){\rule{19.995pt}{0.800pt}}
\multiput(263.00,413.34)(41.500,-2.000){2}{\rule{9.997pt}{0.800pt}}
\put(346,412.84){\rule{19.754pt}{0.800pt}}
\multiput(346.00,411.34)(41.000,3.000){2}{\rule{9.877pt}{0.800pt}}
\put(428,413.84){\rule{19.754pt}{0.800pt}}
\multiput(428.00,414.34)(41.000,-1.000){2}{\rule{9.877pt}{0.800pt}}
\put(510,413.84){\rule{19.754pt}{0.800pt}}
\multiput(510.00,413.34)(41.000,1.000){2}{\rule{9.877pt}{0.800pt}}
\multiput(592.00,417.40)(7.132,0.526){7}{\rule{9.686pt}{0.127pt}}
\multiput(592.00,414.34)(62.897,7.000){2}{\rule{4.843pt}{0.800pt}}
\multiput(675.00,421.08)(7.045,-0.526){7}{\rule{9.571pt}{0.127pt}}
\multiput(675.00,421.34)(62.134,-7.000){2}{\rule{4.786pt}{0.800pt}}
\put(757.0,416.0){\rule[-0.400pt]{19.754pt}{0.800pt}}
\put(181.0,233.0){\rule[-0.400pt]{0.800pt}{6.986pt}}
\put(171.0,233.0){\rule[-0.400pt]{4.818pt}{0.800pt}}
\put(171.0,262.0){\rule[-0.400pt]{4.818pt}{0.800pt}}
\put(263.0,400.0){\rule[-0.400pt]{0.800pt}{7.227pt}}
\put(253.0,400.0){\rule[-0.400pt]{4.818pt}{0.800pt}}
\put(253.0,430.0){\rule[-0.400pt]{4.818pt}{0.800pt}}
\put(346.0,398.0){\rule[-0.400pt]{0.800pt}{7.227pt}}
\put(336.0,398.0){\rule[-0.400pt]{4.818pt}{0.800pt}}
\put(336.0,428.0){\rule[-0.400pt]{4.818pt}{0.800pt}}
\put(428.0,401.0){\rule[-0.400pt]{0.800pt}{7.227pt}}
\put(418.0,401.0){\rule[-0.400pt]{4.818pt}{0.800pt}}
\put(418.0,431.0){\rule[-0.400pt]{4.818pt}{0.800pt}}
\put(510.0,400.0){\rule[-0.400pt]{0.800pt}{7.227pt}}
\put(500.0,400.0){\rule[-0.400pt]{4.818pt}{0.800pt}}
\put(500.0,430.0){\rule[-0.400pt]{4.818pt}{0.800pt}}
\put(592.0,401.0){\rule[-0.400pt]{0.800pt}{7.227pt}}
\put(582.0,401.0){\rule[-0.400pt]{4.818pt}{0.800pt}}
\put(582.0,431.0){\rule[-0.400pt]{4.818pt}{0.800pt}}
\put(675.0,408.0){\rule[-0.400pt]{0.800pt}{7.227pt}}
\put(665.0,408.0){\rule[-0.400pt]{4.818pt}{0.800pt}}
\put(665.0,438.0){\rule[-0.400pt]{4.818pt}{0.800pt}}
\put(757.0,401.0){\rule[-0.400pt]{0.800pt}{7.227pt}}
\put(747.0,401.0){\rule[-0.400pt]{4.818pt}{0.800pt}}
\put(747.0,431.0){\rule[-0.400pt]{4.818pt}{0.800pt}}
\put(839.0,401.0){\rule[-0.400pt]{0.800pt}{7.227pt}}
\put(829.0,401.0){\rule[-0.400pt]{4.818pt}{0.800pt}}
\put(181,247){\circle{18}}
\put(263,415){\circle{18}}
\put(346,413){\circle{18}}
\put(428,416){\circle{18}}
\put(510,415){\circle{18}}
\put(592,416){\circle{18}}
\put(675,423){\circle{18}}
\put(757,416){\circle{18}}
\put(839,416){\circle{18}}
\put(571,205){\circle{18}}
\put(829.0,431.0){\rule[-0.400pt]{4.818pt}{0.800pt}}
\sbox{\plotpoint}{\rule[-0.200pt]{0.400pt}{0.400pt}}%
\put(501,164){\makebox(0,0)[r]{MSE $< 10^{-4}$}}
\put(521.0,164.0){\rule[-0.200pt]{24.090pt}{0.400pt}}
\put(521.0,154.0){\rule[-0.200pt]{0.400pt}{4.818pt}}
\put(621.0,154.0){\rule[-0.200pt]{0.400pt}{4.818pt}}
\put(181,219){\usebox{\plotpoint}}
\multiput(181.58,219.00)(0.499,0.983){161}{\rule{0.120pt}{0.885pt}}
\multiput(180.17,219.00)(82.000,159.162){2}{\rule{0.400pt}{0.443pt}}
\put(263,378.17){\rule{16.700pt}{0.400pt}}
\multiput(263.00,379.17)(48.338,-2.000){2}{\rule{8.350pt}{0.400pt}}
\put(346,378.17){\rule{16.500pt}{0.400pt}}
\multiput(346.00,377.17)(47.753,2.000){2}{\rule{8.250pt}{0.400pt}}
\multiput(592.00,380.59)(7.452,0.482){9}{\rule{5.633pt}{0.116pt}}
\multiput(592.00,379.17)(71.308,6.000){2}{\rule{2.817pt}{0.400pt}}
\multiput(675.00,384.93)(6.213,-0.485){11}{\rule{4.786pt}{0.117pt}}
\multiput(675.00,385.17)(72.067,-7.000){2}{\rule{2.393pt}{0.400pt}}
\put(757,379.17){\rule{16.500pt}{0.400pt}}
\multiput(757.00,378.17)(47.753,2.000){2}{\rule{8.250pt}{0.400pt}}
\put(428.0,380.0){\rule[-0.200pt]{39.508pt}{0.400pt}}
\put(181.0,205.0){\rule[-0.200pt]{0.400pt}{6.986pt}}
\put(171.0,205.0){\rule[-0.200pt]{4.818pt}{0.400pt}}
\put(171.0,234.0){\rule[-0.200pt]{4.818pt}{0.400pt}}
\put(263.0,366.0){\rule[-0.200pt]{0.400pt}{6.986pt}}
\put(253.0,366.0){\rule[-0.200pt]{4.818pt}{0.400pt}}
\put(253.0,395.0){\rule[-0.200pt]{4.818pt}{0.400pt}}
\put(346.0,363.0){\rule[-0.200pt]{0.400pt}{7.227pt}}
\put(336.0,363.0){\rule[-0.200pt]{4.818pt}{0.400pt}}
\put(336.0,393.0){\rule[-0.200pt]{4.818pt}{0.400pt}}
\put(428.0,365.0){\rule[-0.200pt]{0.400pt}{6.986pt}}
\put(418.0,365.0){\rule[-0.200pt]{4.818pt}{0.400pt}}
\put(418.0,394.0){\rule[-0.200pt]{4.818pt}{0.400pt}}
\put(510.0,366.0){\rule[-0.200pt]{0.400pt}{6.986pt}}
\put(500.0,366.0){\rule[-0.200pt]{4.818pt}{0.400pt}}
\put(500.0,395.0){\rule[-0.200pt]{4.818pt}{0.400pt}}
\put(592.0,365.0){\rule[-0.200pt]{0.400pt}{6.986pt}}
\put(582.0,365.0){\rule[-0.200pt]{4.818pt}{0.400pt}}
\put(582.0,394.0){\rule[-0.200pt]{4.818pt}{0.400pt}}
\put(675.0,371.0){\rule[-0.200pt]{0.400pt}{7.227pt}}
\put(665.0,371.0){\rule[-0.200pt]{4.818pt}{0.400pt}}
\put(665.0,401.0){\rule[-0.200pt]{4.818pt}{0.400pt}}
\put(757.0,364.0){\rule[-0.200pt]{0.400pt}{7.227pt}}
\put(747.0,364.0){\rule[-0.200pt]{4.818pt}{0.400pt}}
\put(747.0,394.0){\rule[-0.200pt]{4.818pt}{0.400pt}}
\put(839.0,366.0){\rule[-0.200pt]{0.400pt}{6.986pt}}
\put(829.0,366.0){\rule[-0.200pt]{4.818pt}{0.400pt}}
\put(181,219){\circle{12}}
\put(263,380){\circle{12}}
\put(346,378){\circle{12}}
\put(428,380){\circle{12}}
\put(510,380){\circle{12}}
\put(592,380){\circle{12}}
\put(675,386){\circle{12}}
\put(757,379){\circle{12}}
\put(839,381){\circle{12}}
\put(571,164){\circle{12}}
\put(829.0,395.0){\rule[-0.200pt]{4.818pt}{0.400pt}}
\put(181.0,123.0){\rule[-0.200pt]{158.512pt}{0.400pt}}
\put(839.0,123.0){\rule[-0.200pt]{0.400pt}{92.506pt}}
\put(181.0,507.0){\rule[-0.200pt]{158.512pt}{0.400pt}}
\put(181.0,123.0){\rule[-0.200pt]{0.400pt}{92.506pt}}
\end{picture}
% GNUPLOT: LaTeX picture
\setlength{\unitlength}{0.240900pt}
\ifx\plotpoint\undefined\newsavebox{\plotpoint}\fi
\sbox{\plotpoint}{\rule[-0.200pt]{0.400pt}{0.400pt}}%
\begin{picture}(900,629)(0,0)
\sbox{\plotpoint}{\rule[-0.200pt]{0.400pt}{0.400pt}}%
\put(181.0,123.0){\rule[-0.200pt]{4.818pt}{0.400pt}}
\put(161,123){\makebox(0,0)[r]{ 7}}
\put(819.0,123.0){\rule[-0.200pt]{4.818pt}{0.400pt}}
\put(181.0,174.0){\rule[-0.200pt]{4.818pt}{0.400pt}}
\put(161,174){\makebox(0,0)[r]{ 7.2}}
\put(819.0,174.0){\rule[-0.200pt]{4.818pt}{0.400pt}}
\put(181.0,225.0){\rule[-0.200pt]{4.818pt}{0.400pt}}
\put(161,225){\makebox(0,0)[r]{ 7.4}}
\put(819.0,225.0){\rule[-0.200pt]{4.818pt}{0.400pt}}
\put(181.0,277.0){\rule[-0.200pt]{4.818pt}{0.400pt}}
\put(161,277){\makebox(0,0)[r]{ 7.6}}
\put(819.0,277.0){\rule[-0.200pt]{4.818pt}{0.400pt}}
\put(181.0,328.0){\rule[-0.200pt]{4.818pt}{0.400pt}}
\put(161,328){\makebox(0,0)[r]{ 7.8}}
\put(819.0,328.0){\rule[-0.200pt]{4.818pt}{0.400pt}}
\put(181.0,379.0){\rule[-0.200pt]{4.818pt}{0.400pt}}
\put(161,379){\makebox(0,0)[r]{ 8}}
\put(819.0,379.0){\rule[-0.200pt]{4.818pt}{0.400pt}}
\put(181.0,430.0){\rule[-0.200pt]{4.818pt}{0.400pt}}
\put(161,430){\makebox(0,0)[r]{ 8.2}}
\put(819.0,430.0){\rule[-0.200pt]{4.818pt}{0.400pt}}
\put(181.0,481.0){\rule[-0.200pt]{4.818pt}{0.400pt}}
\put(161,481){\makebox(0,0)[r]{ 8.4}}
\put(819.0,481.0){\rule[-0.200pt]{4.818pt}{0.400pt}}
\put(181.0,123.0){\rule[-0.200pt]{0.400pt}{4.818pt}}
\put(181,82){\makebox(0,0){ 0}}
\put(181.0,487.0){\rule[-0.200pt]{0.400pt}{4.818pt}}
\put(263.0,123.0){\rule[-0.200pt]{0.400pt}{4.818pt}}
\put(263,82){\makebox(0,0){ 0.5}}
\put(263.0,487.0){\rule[-0.200pt]{0.400pt}{4.818pt}}
\put(346.0,123.0){\rule[-0.200pt]{0.400pt}{4.818pt}}
\put(346,82){\makebox(0,0){ 1}}
\put(346.0,487.0){\rule[-0.200pt]{0.400pt}{4.818pt}}
\put(428.0,123.0){\rule[-0.200pt]{0.400pt}{4.818pt}}
\put(428,82){\makebox(0,0){ 1.5}}
\put(428.0,487.0){\rule[-0.200pt]{0.400pt}{4.818pt}}
\put(510.0,123.0){\rule[-0.200pt]{0.400pt}{4.818pt}}
\put(510,82){\makebox(0,0){ 2}}
\put(510.0,487.0){\rule[-0.200pt]{0.400pt}{4.818pt}}
\put(592.0,123.0){\rule[-0.200pt]{0.400pt}{4.818pt}}
\put(592,82){\makebox(0,0){ 2.5}}
\put(592.0,487.0){\rule[-0.200pt]{0.400pt}{4.818pt}}
\put(675.0,123.0){\rule[-0.200pt]{0.400pt}{4.818pt}}
\put(675,82){\makebox(0,0){ 3}}
\put(675.0,487.0){\rule[-0.200pt]{0.400pt}{4.818pt}}
\put(757.0,123.0){\rule[-0.200pt]{0.400pt}{4.818pt}}
\put(757,82){\makebox(0,0){ 3.5}}
\put(757.0,487.0){\rule[-0.200pt]{0.400pt}{4.818pt}}
\put(839.0,123.0){\rule[-0.200pt]{0.400pt}{4.818pt}}
\put(839,82){\makebox(0,0){ 4}}
\put(839.0,487.0){\rule[-0.200pt]{0.400pt}{4.818pt}}
\put(181.0,123.0){\rule[-0.200pt]{158.512pt}{0.400pt}}
\put(839.0,123.0){\rule[-0.200pt]{0.400pt}{92.506pt}}
\put(181.0,507.0){\rule[-0.200pt]{158.512pt}{0.400pt}}
\put(181.0,123.0){\rule[-0.200pt]{0.400pt}{92.506pt}}
\put(40,315){\makebox(0,0){\rotatebox{90}{time (seconds)}}}
\put(510,21){\makebox(0,0){$256\epsilon$}}
\put(510,569){\makebox(0,0){$|\mathcal S|= 30$ (JT time = 250 seconds)}}
\sbox{\plotpoint}{\rule[-0.400pt]{0.800pt}{0.800pt}}%
\sbox{\plotpoint}{\rule[-0.200pt]{0.400pt}{0.400pt}}%
\put(501,205){\makebox(0,0)[r]{MSE $< 10^{-9}$}}
\sbox{\plotpoint}{\rule[-0.400pt]{0.800pt}{0.800pt}}%
\put(521.0,205.0){\rule[-0.400pt]{24.090pt}{0.800pt}}
\put(521.0,195.0){\rule[-0.400pt]{0.800pt}{4.818pt}}
\put(621.0,195.0){\rule[-0.400pt]{0.800pt}{4.818pt}}
\put(181,226){\usebox{\plotpoint}}
\multiput(182.41,226.00)(0.501,1.040){157}{\rule{0.121pt}{1.859pt}}
\multiput(179.34,226.00)(82.000,166.143){2}{\rule{0.800pt}{0.929pt}}
\put(263,393.34){\rule{19.995pt}{0.800pt}}
\multiput(263.00,394.34)(41.500,-2.000){2}{\rule{9.997pt}{0.800pt}}
\put(346,393.34){\rule{19.754pt}{0.800pt}}
\multiput(346.00,392.34)(41.000,2.000){2}{\rule{9.877pt}{0.800pt}}
\multiput(428.00,397.38)(13.354,0.560){3}{\rule{13.320pt}{0.135pt}}
\multiput(428.00,394.34)(54.354,5.000){2}{\rule{6.660pt}{0.800pt}}
\put(510,399.84){\rule{19.754pt}{0.800pt}}
\multiput(510.00,399.34)(41.000,1.000){2}{\rule{9.877pt}{0.800pt}}
\multiput(592.00,400.08)(7.132,-0.526){7}{\rule{9.686pt}{0.127pt}}
\multiput(592.00,400.34)(62.897,-7.000){2}{\rule{4.843pt}{0.800pt}}
\multiput(757.00,396.38)(13.354,0.560){3}{\rule{13.320pt}{0.135pt}}
\multiput(757.00,393.34)(54.354,5.000){2}{\rule{6.660pt}{0.800pt}}
\put(675.0,395.0){\rule[-0.400pt]{19.754pt}{0.800pt}}
\put(181.0,206.0){\rule[-0.400pt]{0.800pt}{9.395pt}}
\put(171.0,206.0){\rule[-0.400pt]{4.818pt}{0.800pt}}
\put(171.0,245.0){\rule[-0.400pt]{4.818pt}{0.800pt}}
\put(263.0,377.0){\rule[-0.400pt]{0.800pt}{9.395pt}}
\put(253.0,377.0){\rule[-0.400pt]{4.818pt}{0.800pt}}
\put(253.0,416.0){\rule[-0.400pt]{4.818pt}{0.800pt}}
\put(346.0,375.0){\rule[-0.400pt]{0.800pt}{9.154pt}}
\put(336.0,375.0){\rule[-0.400pt]{4.818pt}{0.800pt}}
\put(336.0,413.0){\rule[-0.400pt]{4.818pt}{0.800pt}}
\put(428.0,377.0){\rule[-0.400pt]{0.800pt}{9.395pt}}
\put(418.0,377.0){\rule[-0.400pt]{4.818pt}{0.800pt}}
\put(418.0,416.0){\rule[-0.400pt]{4.818pt}{0.800pt}}
\put(510.0,381.0){\rule[-0.400pt]{0.800pt}{9.395pt}}
\put(500.0,381.0){\rule[-0.400pt]{4.818pt}{0.800pt}}
\put(500.0,420.0){\rule[-0.400pt]{4.818pt}{0.800pt}}
\put(592.0,383.0){\rule[-0.400pt]{0.800pt}{9.395pt}}
\put(582.0,383.0){\rule[-0.400pt]{4.818pt}{0.800pt}}
\put(582.0,422.0){\rule[-0.400pt]{4.818pt}{0.800pt}}
\put(675.0,376.0){\rule[-0.400pt]{0.800pt}{9.154pt}}
\put(665.0,376.0){\rule[-0.400pt]{4.818pt}{0.800pt}}
\put(665.0,414.0){\rule[-0.400pt]{4.818pt}{0.800pt}}
\put(757.0,376.0){\rule[-0.400pt]{0.800pt}{8.913pt}}
\put(747.0,376.0){\rule[-0.400pt]{4.818pt}{0.800pt}}
\put(747.0,413.0){\rule[-0.400pt]{4.818pt}{0.800pt}}
\put(839.0,381.0){\rule[-0.400pt]{0.800pt}{9.395pt}}
\put(829.0,381.0){\rule[-0.400pt]{4.818pt}{0.800pt}}
\put(181,226){\circle{18}}
\put(263,396){\circle{18}}
\put(346,394){\circle{18}}
\put(428,396){\circle{18}}
\put(510,401){\circle{18}}
\put(592,402){\circle{18}}
\put(675,395){\circle{18}}
\put(757,395){\circle{18}}
\put(839,400){\circle{18}}
\put(571,205){\circle{18}}
\put(829.0,420.0){\rule[-0.400pt]{4.818pt}{0.800pt}}
\sbox{\plotpoint}{\rule[-0.200pt]{0.400pt}{0.400pt}}%
\put(501,164){\makebox(0,0)[r]{MSE $< 10^{-5}$}}
\put(521.0,164.0){\rule[-0.200pt]{24.090pt}{0.400pt}}
\put(521.0,154.0){\rule[-0.200pt]{0.400pt}{4.818pt}}
\put(621.0,154.0){\rule[-0.200pt]{0.400pt}{4.818pt}}
\put(181,211){\usebox{\plotpoint}}
\multiput(181.58,211.00)(0.499,1.026){161}{\rule{0.120pt}{0.920pt}}
\multiput(180.17,211.00)(82.000,166.092){2}{\rule{0.400pt}{0.460pt}}
\put(263,377.17){\rule{16.700pt}{0.400pt}}
\multiput(263.00,378.17)(48.338,-2.000){2}{\rule{8.350pt}{0.400pt}}
\put(346,377.17){\rule{16.500pt}{0.400pt}}
\multiput(346.00,376.17)(47.753,2.000){2}{\rule{8.250pt}{0.400pt}}
\multiput(428.00,379.60)(11.886,0.468){5}{\rule{8.300pt}{0.113pt}}
\multiput(428.00,378.17)(64.773,4.000){2}{\rule{4.150pt}{0.400pt}}
\put(510,383.17){\rule{16.500pt}{0.400pt}}
\multiput(510.00,382.17)(47.753,2.000){2}{\rule{8.250pt}{0.400pt}}
\multiput(592.00,383.93)(6.290,-0.485){11}{\rule{4.843pt}{0.117pt}}
\multiput(592.00,384.17)(72.948,-7.000){2}{\rule{2.421pt}{0.400pt}}
\put(675,376.67){\rule{19.754pt}{0.400pt}}
\multiput(675.00,377.17)(41.000,-1.000){2}{\rule{9.877pt}{0.400pt}}
\multiput(757.00,377.59)(7.362,0.482){9}{\rule{5.567pt}{0.116pt}}
\multiput(757.00,376.17)(70.446,6.000){2}{\rule{2.783pt}{0.400pt}}
\put(181.0,192.0){\rule[-0.200pt]{0.400pt}{9.395pt}}
\put(171.0,192.0){\rule[-0.200pt]{4.818pt}{0.400pt}}
\put(171.0,231.0){\rule[-0.200pt]{4.818pt}{0.400pt}}
\put(263.0,360.0){\rule[-0.200pt]{0.400pt}{9.395pt}}
\put(253.0,360.0){\rule[-0.200pt]{4.818pt}{0.400pt}}
\put(253.0,399.0){\rule[-0.200pt]{4.818pt}{0.400pt}}
\put(346.0,358.0){\rule[-0.200pt]{0.400pt}{9.154pt}}
\put(336.0,358.0){\rule[-0.200pt]{4.818pt}{0.400pt}}
\put(336.0,396.0){\rule[-0.200pt]{4.818pt}{0.400pt}}
\put(428.0,360.0){\rule[-0.200pt]{0.400pt}{9.154pt}}
\put(418.0,360.0){\rule[-0.200pt]{4.818pt}{0.400pt}}
\put(418.0,398.0){\rule[-0.200pt]{4.818pt}{0.400pt}}
\put(510.0,364.0){\rule[-0.200pt]{0.400pt}{9.154pt}}
\put(500.0,364.0){\rule[-0.200pt]{4.818pt}{0.400pt}}
\put(500.0,402.0){\rule[-0.200pt]{4.818pt}{0.400pt}}
\put(592.0,366.0){\rule[-0.200pt]{0.400pt}{9.395pt}}
\put(582.0,366.0){\rule[-0.200pt]{4.818pt}{0.400pt}}
\put(582.0,405.0){\rule[-0.200pt]{4.818pt}{0.400pt}}
\put(675.0,359.0){\rule[-0.200pt]{0.400pt}{9.154pt}}
\put(665.0,359.0){\rule[-0.200pt]{4.818pt}{0.400pt}}
\put(665.0,397.0){\rule[-0.200pt]{4.818pt}{0.400pt}}
\put(757.0,359.0){\rule[-0.200pt]{0.400pt}{8.672pt}}
\put(747.0,359.0){\rule[-0.200pt]{4.818pt}{0.400pt}}
\put(747.0,395.0){\rule[-0.200pt]{4.818pt}{0.400pt}}
\put(839.0,364.0){\rule[-0.200pt]{0.400pt}{9.395pt}}
\put(829.0,364.0){\rule[-0.200pt]{4.818pt}{0.400pt}}
\put(181,211){\circle{12}}
\put(263,379){\circle{12}}
\put(346,377){\circle{12}}
\put(428,379){\circle{12}}
\put(510,383){\circle{12}}
\put(592,385){\circle{12}}
\put(675,378){\circle{12}}
\put(757,377){\circle{12}}
\put(839,383){\circle{12}}
\put(571,164){\circle{12}}
\put(829.0,403.0){\rule[-0.200pt]{4.818pt}{0.400pt}}
\put(181.0,123.0){\rule[-0.200pt]{158.512pt}{0.400pt}}
\put(839.0,123.0){\rule[-0.200pt]{0.400pt}{92.506pt}}
\put(181.0,507.0){\rule[-0.200pt]{158.512pt}{0.400pt}}
\put(181.0,123.0){\rule[-0.200pt]{0.400pt}{92.506pt}}
\end{picture}
% GNUPLOT: LaTeX picture
\setlength{\unitlength}{0.240900pt}
\ifx\plotpoint\undefined\newsavebox{\plotpoint}\fi
\sbox{\plotpoint}{\rule[-0.200pt]{0.400pt}{0.400pt}}%
\begin{picture}(900,629)(0,0)
\sbox{\plotpoint}{\rule[-0.200pt]{0.400pt}{0.400pt}}%
\put(201.0,123.0){\rule[-0.200pt]{4.818pt}{0.400pt}}
\put(181,123){\makebox(0,0)[r]{ 18}}
\put(819.0,123.0){\rule[-0.200pt]{4.818pt}{0.400pt}}
\put(201.0,187.0){\rule[-0.200pt]{4.818pt}{0.400pt}}
\put(181,187){\makebox(0,0)[r]{ 18.5}}
\put(819.0,187.0){\rule[-0.200pt]{4.818pt}{0.400pt}}
\put(201.0,251.0){\rule[-0.200pt]{4.818pt}{0.400pt}}
\put(181,251){\makebox(0,0)[r]{ 19}}
\put(819.0,251.0){\rule[-0.200pt]{4.818pt}{0.400pt}}
\put(201.0,315.0){\rule[-0.200pt]{4.818pt}{0.400pt}}
\put(181,315){\makebox(0,0)[r]{ 19.5}}
\put(819.0,315.0){\rule[-0.200pt]{4.818pt}{0.400pt}}
\put(201.0,379.0){\rule[-0.200pt]{4.818pt}{0.400pt}}
\put(181,379){\makebox(0,0)[r]{ 20}}
\put(819.0,379.0){\rule[-0.200pt]{4.818pt}{0.400pt}}
\put(201.0,443.0){\rule[-0.200pt]{4.818pt}{0.400pt}}
\put(181,443){\makebox(0,0)[r]{ 20.5}}
\put(819.0,443.0){\rule[-0.200pt]{4.818pt}{0.400pt}}
\put(201.0,507.0){\rule[-0.200pt]{4.818pt}{0.400pt}}
\put(181,507){\makebox(0,0)[r]{ 21}}
\put(819.0,507.0){\rule[-0.200pt]{4.818pt}{0.400pt}}
\put(201.0,123.0){\rule[-0.200pt]{0.400pt}{4.818pt}}
\put(201,82){\makebox(0,0){ 0}}
\put(201.0,487.0){\rule[-0.200pt]{0.400pt}{4.818pt}}
\put(281.0,123.0){\rule[-0.200pt]{0.400pt}{4.818pt}}
\put(281,82){\makebox(0,0){ 0.5}}
\put(281.0,487.0){\rule[-0.200pt]{0.400pt}{4.818pt}}
\put(361.0,123.0){\rule[-0.200pt]{0.400pt}{4.818pt}}
\put(361,82){\makebox(0,0){ 1}}
\put(361.0,487.0){\rule[-0.200pt]{0.400pt}{4.818pt}}
\put(440.0,123.0){\rule[-0.200pt]{0.400pt}{4.818pt}}
\put(440,82){\makebox(0,0){ 1.5}}
\put(440.0,487.0){\rule[-0.200pt]{0.400pt}{4.818pt}}
\put(520.0,123.0){\rule[-0.200pt]{0.400pt}{4.818pt}}
\put(520,82){\makebox(0,0){ 2}}
\put(520.0,487.0){\rule[-0.200pt]{0.400pt}{4.818pt}}
\put(600.0,123.0){\rule[-0.200pt]{0.400pt}{4.818pt}}
\put(600,82){\makebox(0,0){ 2.5}}
\put(600.0,487.0){\rule[-0.200pt]{0.400pt}{4.818pt}}
\put(680.0,123.0){\rule[-0.200pt]{0.400pt}{4.818pt}}
\put(680,82){\makebox(0,0){ 3}}
\put(680.0,487.0){\rule[-0.200pt]{0.400pt}{4.818pt}}
\put(759.0,123.0){\rule[-0.200pt]{0.400pt}{4.818pt}}
\put(759,82){\makebox(0,0){ 3.5}}
\put(759.0,487.0){\rule[-0.200pt]{0.400pt}{4.818pt}}
\put(839.0,123.0){\rule[-0.200pt]{0.400pt}{4.818pt}}
\put(839,82){\makebox(0,0){ 4}}
\put(839.0,487.0){\rule[-0.200pt]{0.400pt}{4.818pt}}
\put(201.0,123.0){\rule[-0.200pt]{153.694pt}{0.400pt}}
\put(839.0,123.0){\rule[-0.200pt]{0.400pt}{92.506pt}}
\put(201.0,507.0){\rule[-0.200pt]{153.694pt}{0.400pt}}
\put(201.0,123.0){\rule[-0.200pt]{0.400pt}{92.506pt}}
\put(40,315){\makebox(0,0){\rotatebox{90}{time (seconds)}}}
\put(520,21){\makebox(0,0){$256\epsilon$}}
\put(520,569){\makebox(0,0){$|\mathcal S|= 40$ (JT time = 1031 seconds)}}
\sbox{\plotpoint}{\rule[-0.400pt]{0.800pt}{0.800pt}}%
\sbox{\plotpoint}{\rule[-0.200pt]{0.400pt}{0.400pt}}%
\put(521,205){\makebox(0,0)[r]{MSE $< 10^{-9}$}}
\sbox{\plotpoint}{\rule[-0.400pt]{0.800pt}{0.800pt}}%
\put(541.0,205.0){\rule[-0.400pt]{24.090pt}{0.800pt}}
\put(541.0,195.0){\rule[-0.400pt]{0.800pt}{4.818pt}}
\put(641.0,195.0){\rule[-0.400pt]{0.800pt}{4.818pt}}
\put(201,174){\usebox{\plotpoint}}
\multiput(202.41,174.00)(0.501,1.230){153}{\rule{0.121pt}{2.160pt}}
\multiput(199.34,174.00)(80.000,191.517){2}{\rule{0.800pt}{1.080pt}}
\multiput(281.00,368.08)(5.745,-0.520){9}{\rule{8.200pt}{0.125pt}}
\multiput(281.00,368.34)(62.980,-8.000){2}{\rule{4.100pt}{0.800pt}}
\multiput(361.00,360.08)(3.222,-0.509){19}{\rule{5.062pt}{0.123pt}}
\multiput(361.00,360.34)(68.495,-13.000){2}{\rule{2.531pt}{0.800pt}}
\multiput(440.00,347.08)(5.745,-0.520){9}{\rule{8.200pt}{0.125pt}}
\multiput(440.00,347.34)(62.980,-8.000){2}{\rule{4.100pt}{0.800pt}}
\multiput(520.00,339.07)(8.723,-0.536){5}{\rule{10.867pt}{0.129pt}}
\multiput(520.00,339.34)(57.446,-6.000){2}{\rule{5.433pt}{0.800pt}}
\multiput(600.00,336.41)(3.564,0.511){17}{\rule{5.533pt}{0.123pt}}
\multiput(600.00,333.34)(68.515,12.000){2}{\rule{2.767pt}{0.800pt}}
\multiput(680.00,348.41)(1.211,0.503){59}{\rule{2.115pt}{0.121pt}}
\multiput(680.00,345.34)(74.610,33.000){2}{\rule{1.058pt}{0.800pt}}
\multiput(759.00,378.08)(4.384,-0.514){13}{\rule{6.600pt}{0.124pt}}
\multiput(759.00,378.34)(66.301,-10.000){2}{\rule{3.300pt}{0.800pt}}
\put(201.0,147.0){\rule[-0.400pt]{0.800pt}{13.249pt}}
\put(191.0,147.0){\rule[-0.400pt]{4.818pt}{0.800pt}}
\put(191.0,202.0){\rule[-0.400pt]{4.818pt}{0.800pt}}
\put(281.0,343.0){\rule[-0.400pt]{0.800pt}{13.009pt}}
\put(271.0,343.0){\rule[-0.400pt]{4.818pt}{0.800pt}}
\put(271.0,397.0){\rule[-0.400pt]{4.818pt}{0.800pt}}
\put(361.0,335.0){\rule[-0.400pt]{0.800pt}{12.768pt}}
\put(351.0,335.0){\rule[-0.400pt]{4.818pt}{0.800pt}}
\put(351.0,388.0){\rule[-0.400pt]{4.818pt}{0.800pt}}
\put(440.0,323.0){\rule[-0.400pt]{0.800pt}{12.768pt}}
\put(430.0,323.0){\rule[-0.400pt]{4.818pt}{0.800pt}}
\put(430.0,376.0){\rule[-0.400pt]{4.818pt}{0.800pt}}
\put(520.0,315.0){\rule[-0.400pt]{0.800pt}{12.286pt}}
\put(510.0,315.0){\rule[-0.400pt]{4.818pt}{0.800pt}}
\put(510.0,366.0){\rule[-0.400pt]{4.818pt}{0.800pt}}
\put(600.0,310.0){\rule[-0.400pt]{0.800pt}{12.045pt}}
\put(590.0,310.0){\rule[-0.400pt]{4.818pt}{0.800pt}}
\put(590.0,360.0){\rule[-0.400pt]{4.818pt}{0.800pt}}
\put(680.0,321.0){\rule[-0.400pt]{0.800pt}{12.527pt}}
\put(670.0,321.0){\rule[-0.400pt]{4.818pt}{0.800pt}}
\put(670.0,373.0){\rule[-0.400pt]{4.818pt}{0.800pt}}
\put(759.0,353.0){\rule[-0.400pt]{0.800pt}{13.009pt}}
\put(749.0,353.0){\rule[-0.400pt]{4.818pt}{0.800pt}}
\put(749.0,407.0){\rule[-0.400pt]{4.818pt}{0.800pt}}
\put(839.0,343.0){\rule[-0.400pt]{0.800pt}{13.009pt}}
\put(829.0,343.0){\rule[-0.400pt]{4.818pt}{0.800pt}}
\put(201,174){\circle{18}}
\put(281,370){\circle{18}}
\put(361,362){\circle{18}}
\put(440,349){\circle{18}}
\put(520,341){\circle{18}}
\put(600,335){\circle{18}}
\put(680,347){\circle{18}}
\put(759,380){\circle{18}}
\put(839,370){\circle{18}}
\put(591,205){\circle{18}}
\put(829.0,397.0){\rule[-0.400pt]{4.818pt}{0.800pt}}
\sbox{\plotpoint}{\rule[-0.200pt]{0.400pt}{0.400pt}}%
\put(521,164){\makebox(0,0)[r]{MSE $< 10^{-5}$}}
\put(541.0,164.0){\rule[-0.200pt]{24.090pt}{0.400pt}}
\put(541.0,154.0){\rule[-0.200pt]{0.400pt}{4.818pt}}
\put(641.0,154.0){\rule[-0.200pt]{0.400pt}{4.818pt}}
\put(201,164){\usebox{\plotpoint}}
\multiput(201.58,164.00)(0.499,1.209){157}{\rule{0.120pt}{1.065pt}}
\multiput(200.17,164.00)(80.000,190.790){2}{\rule{0.400pt}{0.533pt}}
\multiput(281.00,355.93)(5.248,-0.488){13}{\rule{4.100pt}{0.117pt}}
\multiput(281.00,356.17)(71.490,-8.000){2}{\rule{2.050pt}{0.400pt}}
\multiput(361.00,347.92)(3.113,-0.493){23}{\rule{2.531pt}{0.119pt}}
\multiput(361.00,348.17)(73.747,-13.000){2}{\rule{1.265pt}{0.400pt}}
\multiput(440.00,334.93)(4.630,-0.489){15}{\rule{3.656pt}{0.118pt}}
\multiput(440.00,335.17)(72.413,-9.000){2}{\rule{1.828pt}{0.400pt}}
\multiput(520.00,325.93)(8.836,-0.477){7}{\rule{6.500pt}{0.115pt}}
\multiput(520.00,326.17)(66.509,-5.000){2}{\rule{3.250pt}{0.400pt}}
\multiput(600.00,322.58)(3.750,0.492){19}{\rule{3.009pt}{0.118pt}}
\multiput(600.00,321.17)(73.754,11.000){2}{\rule{1.505pt}{0.400pt}}
\multiput(680.00,333.58)(1.203,0.497){63}{\rule{1.058pt}{0.120pt}}
\multiput(680.00,332.17)(76.805,33.000){2}{\rule{0.529pt}{0.400pt}}
\multiput(759.00,364.93)(4.630,-0.489){15}{\rule{3.656pt}{0.118pt}}
\multiput(759.00,365.17)(72.413,-9.000){2}{\rule{1.828pt}{0.400pt}}
\put(201.0,136.0){\rule[-0.200pt]{0.400pt}{13.249pt}}
\put(191.0,136.0){\rule[-0.200pt]{4.818pt}{0.400pt}}
\put(191.0,191.0){\rule[-0.200pt]{4.818pt}{0.400pt}}
\put(281.0,330.0){\rule[-0.200pt]{0.400pt}{13.009pt}}
\put(271.0,330.0){\rule[-0.200pt]{4.818pt}{0.400pt}}
\put(271.0,384.0){\rule[-0.200pt]{4.818pt}{0.400pt}}
\put(361.0,322.0){\rule[-0.200pt]{0.400pt}{12.768pt}}
\put(351.0,322.0){\rule[-0.200pt]{4.818pt}{0.400pt}}
\put(351.0,375.0){\rule[-0.200pt]{4.818pt}{0.400pt}}
\put(440.0,310.0){\rule[-0.200pt]{0.400pt}{12.527pt}}
\put(430.0,310.0){\rule[-0.200pt]{4.818pt}{0.400pt}}
\put(430.0,362.0){\rule[-0.200pt]{4.818pt}{0.400pt}}
\put(520.0,302.0){\rule[-0.200pt]{0.400pt}{12.286pt}}
\put(510.0,302.0){\rule[-0.200pt]{4.818pt}{0.400pt}}
\put(510.0,353.0){\rule[-0.200pt]{4.818pt}{0.400pt}}
\put(600.0,296.0){\rule[-0.200pt]{0.400pt}{12.286pt}}
\put(590.0,296.0){\rule[-0.200pt]{4.818pt}{0.400pt}}
\put(590.0,347.0){\rule[-0.200pt]{4.818pt}{0.400pt}}
\put(680.0,307.0){\rule[-0.200pt]{0.400pt}{12.527pt}}
\put(670.0,307.0){\rule[-0.200pt]{4.818pt}{0.400pt}}
\put(670.0,359.0){\rule[-0.200pt]{4.818pt}{0.400pt}}
\put(759.0,339.0){\rule[-0.200pt]{0.400pt}{13.009pt}}
\put(749.0,339.0){\rule[-0.200pt]{4.818pt}{0.400pt}}
\put(749.0,393.0){\rule[-0.200pt]{4.818pt}{0.400pt}}
\put(839.0,330.0){\rule[-0.200pt]{0.400pt}{12.768pt}}
\put(829.0,330.0){\rule[-0.200pt]{4.818pt}{0.400pt}}
\put(201,164){\circle{12}}
\put(281,357){\circle{12}}
\put(361,349){\circle{12}}
\put(440,336){\circle{12}}
\put(520,327){\circle{12}}
\put(600,322){\circle{12}}
\put(680,333){\circle{12}}
\put(759,366){\circle{12}}
\put(839,357){\circle{12}}
\put(591,164){\circle{12}}
\put(829.0,383.0){\rule[-0.200pt]{4.818pt}{0.400pt}}
\put(201.0,123.0){\rule[-0.200pt]{153.694pt}{0.400pt}}
\put(839.0,123.0){\rule[-0.200pt]{0.400pt}{92.506pt}}
\put(201.0,507.0){\rule[-0.200pt]{153.694pt}{0.400pt}}
\put(201.0,123.0){\rule[-0.200pt]{0.400pt}{92.506pt}}
\end{picture}
}
\end{center}
\caption{Running-time of our model as the jitter varies, for different MSE-cutoffs. Speed-ups are almost exactly one order of magnitude.}
\label{fig:time}
\end{figure}

%\begin{figure}
%\begin{center}
%\input{plot_iter10}
%\input{plot_iter20}
%\input{plot_iter30}
%\input{plot_iter40}
%\end{center}
%\caption{Iterations required our model (using the ``ring'' topology) as the MSE-cutoff varies. For large networks, the number of iterations required is substantially less than $|\mathcal S|$.}
%\label{fig:time}
%\end{figure}

\begin{figure}
\begin{center}
\small{
% GNUPLOT: LaTeX picture
\setlength{\unitlength}{0.240900pt}
\ifx\plotpoint\undefined\newsavebox{\plotpoint}\fi
\sbox{\plotpoint}{\rule[-0.200pt]{0.400pt}{0.400pt}}%
\begin{picture}(900,629)(0,0)
\sbox{\plotpoint}{\rule[-0.200pt]{0.400pt}{0.400pt}}%
\put(201.0,123.0){\rule[-0.200pt]{4.818pt}{0.400pt}}
\put(181,123){\makebox(0,0)[r]{ 0.8}}
\put(819.0,123.0){\rule[-0.200pt]{4.818pt}{0.400pt}}
\put(201.0,161.0){\rule[-0.200pt]{4.818pt}{0.400pt}}
\put(181,161){\makebox(0,0)[r]{ 0.82}}
\put(819.0,161.0){\rule[-0.200pt]{4.818pt}{0.400pt}}
\put(201.0,200.0){\rule[-0.200pt]{4.818pt}{0.400pt}}
\put(181,200){\makebox(0,0)[r]{ 0.84}}
\put(819.0,200.0){\rule[-0.200pt]{4.818pt}{0.400pt}}
\put(201.0,238.0){\rule[-0.200pt]{4.818pt}{0.400pt}}
\put(181,238){\makebox(0,0)[r]{ 0.86}}
\put(819.0,238.0){\rule[-0.200pt]{4.818pt}{0.400pt}}
\put(201.0,277.0){\rule[-0.200pt]{4.818pt}{0.400pt}}
\put(181,277){\makebox(0,0)[r]{ 0.88}}
\put(819.0,277.0){\rule[-0.200pt]{4.818pt}{0.400pt}}
\put(201.0,315.0){\rule[-0.200pt]{4.818pt}{0.400pt}}
\put(181,315){\makebox(0,0)[r]{ 0.9}}
\put(819.0,315.0){\rule[-0.200pt]{4.818pt}{0.400pt}}
\put(201.0,353.0){\rule[-0.200pt]{4.818pt}{0.400pt}}
\put(181,353){\makebox(0,0)[r]{ 0.92}}
\put(819.0,353.0){\rule[-0.200pt]{4.818pt}{0.400pt}}
\put(201.0,392.0){\rule[-0.200pt]{4.818pt}{0.400pt}}
\put(181,392){\makebox(0,0)[r]{ 0.94}}
\put(819.0,392.0){\rule[-0.200pt]{4.818pt}{0.400pt}}
\put(201.0,430.0){\rule[-0.200pt]{4.818pt}{0.400pt}}
\put(181,430){\makebox(0,0)[r]{ 0.96}}
\put(819.0,430.0){\rule[-0.200pt]{4.818pt}{0.400pt}}
\put(201.0,469.0){\rule[-0.200pt]{4.818pt}{0.400pt}}
\put(181,469){\makebox(0,0)[r]{ 0.98}}
\put(819.0,469.0){\rule[-0.200pt]{4.818pt}{0.400pt}}
\put(201.0,507.0){\rule[-0.200pt]{4.818pt}{0.400pt}}
\put(181,507){\makebox(0,0)[r]{ 1}}
\put(819.0,507.0){\rule[-0.200pt]{4.818pt}{0.400pt}}
\put(201.0,123.0){\rule[-0.200pt]{0.400pt}{4.818pt}}
\put(201,82){\makebox(0,0){ 0}}
\put(201.0,487.0){\rule[-0.200pt]{0.400pt}{4.818pt}}
\put(281.0,123.0){\rule[-0.200pt]{0.400pt}{4.818pt}}
\put(281,82){\makebox(0,0){ 0.5}}
\put(281.0,487.0){\rule[-0.200pt]{0.400pt}{4.818pt}}
\put(361.0,123.0){\rule[-0.200pt]{0.400pt}{4.818pt}}
\put(361,82){\makebox(0,0){ 1}}
\put(361.0,487.0){\rule[-0.200pt]{0.400pt}{4.818pt}}
\put(440.0,123.0){\rule[-0.200pt]{0.400pt}{4.818pt}}
\put(440,82){\makebox(0,0){ 1.5}}
\put(440.0,487.0){\rule[-0.200pt]{0.400pt}{4.818pt}}
\put(520.0,123.0){\rule[-0.200pt]{0.400pt}{4.818pt}}
\put(520,82){\makebox(0,0){ 2}}
\put(520.0,487.0){\rule[-0.200pt]{0.400pt}{4.818pt}}
\put(600.0,123.0){\rule[-0.200pt]{0.400pt}{4.818pt}}
\put(600,82){\makebox(0,0){ 2.5}}
\put(600.0,487.0){\rule[-0.200pt]{0.400pt}{4.818pt}}
\put(680.0,123.0){\rule[-0.200pt]{0.400pt}{4.818pt}}
\put(680,82){\makebox(0,0){ 3}}
\put(680.0,487.0){\rule[-0.200pt]{0.400pt}{4.818pt}}
\put(759.0,123.0){\rule[-0.200pt]{0.400pt}{4.818pt}}
\put(759,82){\makebox(0,0){ 3.5}}
\put(759.0,487.0){\rule[-0.200pt]{0.400pt}{4.818pt}}
\put(839.0,123.0){\rule[-0.200pt]{0.400pt}{4.818pt}}
\put(839,82){\makebox(0,0){ 4}}
\put(839.0,487.0){\rule[-0.200pt]{0.400pt}{4.818pt}}
\put(201.0,123.0){\rule[-0.200pt]{153.694pt}{0.400pt}}
\put(839.0,123.0){\rule[-0.200pt]{0.400pt}{92.506pt}}
\put(201.0,507.0){\rule[-0.200pt]{153.694pt}{0.400pt}}
\put(201.0,123.0){\rule[-0.200pt]{0.400pt}{92.506pt}}
\put(40,315){\makebox(0,0){\rotatebox{90}{matching accuracy}}}
\put(520,21){\makebox(0,0){$256\epsilon$}}
\put(520,569){\makebox(0,0){$|\mathcal S|= 10$}}
\put(521,205){\makebox(0,0)[r]{MSE $< 10^{-8}$}}
\put(541.0,205.0){\rule[-0.200pt]{24.090pt}{0.400pt}}
\put(541.0,195.0){\rule[-0.200pt]{0.400pt}{4.818pt}}
\put(641.0,195.0){\rule[-0.200pt]{0.400pt}{4.818pt}}
\put(201,507){\usebox{\plotpoint}}
\multiput(201.00,505.94)(11.594,-0.468){5}{\rule{8.100pt}{0.113pt}}
\multiput(201.00,506.17)(63.188,-4.000){2}{\rule{4.050pt}{0.400pt}}
\multiput(281.00,501.92)(3.750,-0.492){19}{\rule{3.009pt}{0.118pt}}
\multiput(281.00,502.17)(73.754,-11.000){2}{\rule{1.505pt}{0.400pt}}
\multiput(361.00,490.94)(11.448,-0.468){5}{\rule{8.000pt}{0.113pt}}
\multiput(361.00,491.17)(62.396,-4.000){2}{\rule{4.000pt}{0.400pt}}
\multiput(440.00,486.94)(11.594,-0.468){5}{\rule{8.100pt}{0.113pt}}
\multiput(440.00,487.17)(63.188,-4.000){2}{\rule{4.050pt}{0.400pt}}
\multiput(759.00,482.92)(1.757,-0.496){43}{\rule{1.491pt}{0.120pt}}
\multiput(759.00,483.17)(76.905,-23.000){2}{\rule{0.746pt}{0.400pt}}
\put(520.0,484.0){\rule[-0.200pt]{57.575pt}{0.400pt}}
\put(201,507){\usebox{\plotpoint}}
\put(191.0,507.0){\rule[-0.200pt]{4.818pt}{0.400pt}}
\put(191.0,507.0){\rule[-0.200pt]{4.818pt}{0.400pt}}
\put(281.0,499.0){\rule[-0.200pt]{0.400pt}{1.927pt}}
\put(271.0,499.0){\rule[-0.200pt]{4.818pt}{0.400pt}}
\put(271.0,507.0){\rule[-0.200pt]{4.818pt}{0.400pt}}
\put(361.0,484.0){\rule[-0.200pt]{0.400pt}{3.613pt}}
\put(351.0,484.0){\rule[-0.200pt]{4.818pt}{0.400pt}}
\put(351.0,499.0){\rule[-0.200pt]{4.818pt}{0.400pt}}
\put(440.0,478.0){\rule[-0.200pt]{0.400pt}{4.818pt}}
\put(430.0,478.0){\rule[-0.200pt]{4.818pt}{0.400pt}}
\put(430.0,498.0){\rule[-0.200pt]{4.818pt}{0.400pt}}
\put(520.0,474.0){\rule[-0.200pt]{0.400pt}{4.818pt}}
\put(510.0,474.0){\rule[-0.200pt]{4.818pt}{0.400pt}}
\put(510.0,494.0){\rule[-0.200pt]{4.818pt}{0.400pt}}
\put(600.0,474.0){\rule[-0.200pt]{0.400pt}{4.818pt}}
\put(590.0,474.0){\rule[-0.200pt]{4.818pt}{0.400pt}}
\put(590.0,494.0){\rule[-0.200pt]{4.818pt}{0.400pt}}
\put(680.0,474.0){\rule[-0.200pt]{0.400pt}{4.818pt}}
\put(670.0,474.0){\rule[-0.200pt]{4.818pt}{0.400pt}}
\put(670.0,494.0){\rule[-0.200pt]{4.818pt}{0.400pt}}
\put(759.0,474.0){\rule[-0.200pt]{0.400pt}{4.818pt}}
\put(749.0,474.0){\rule[-0.200pt]{4.818pt}{0.400pt}}
\put(749.0,494.0){\rule[-0.200pt]{4.818pt}{0.400pt}}
\put(839.0,446.0){\rule[-0.200pt]{0.400pt}{7.227pt}}
\put(829.0,446.0){\rule[-0.200pt]{4.818pt}{0.400pt}}
\put(201,507){\circle*{12}}
\put(281,503){\circle*{12}}
\put(361,492){\circle*{12}}
\put(440,488){\circle*{12}}
\put(520,484){\circle*{12}}
\put(600,484){\circle*{12}}
\put(680,484){\circle*{12}}
\put(759,484){\circle*{12}}
\put(839,461){\circle*{12}}
\put(591,205){\circle*{12}}
\put(829.0,476.0){\rule[-0.200pt]{4.818pt}{0.400pt}}
\put(521,164){\makebox(0,0)[r]{MSE $< 10^{-4}$}}
\put(541.0,164.0){\rule[-0.200pt]{24.090pt}{0.400pt}}
\put(541.0,154.0){\rule[-0.200pt]{0.400pt}{4.818pt}}
\put(641.0,154.0){\rule[-0.200pt]{0.400pt}{4.818pt}}
\put(201,507){\usebox{\plotpoint}}
\multiput(201.58,504.51)(0.499,-0.625){157}{\rule{0.120pt}{0.600pt}}
\multiput(200.17,505.75)(80.000,-98.755){2}{\rule{0.400pt}{0.300pt}}
\multiput(281.00,405.94)(11.594,-0.468){5}{\rule{8.100pt}{0.113pt}}
\multiput(281.00,406.17)(63.188,-4.000){2}{\rule{4.050pt}{0.400pt}}
\multiput(440.00,401.93)(6.061,-0.485){11}{\rule{4.671pt}{0.117pt}}
\multiput(440.00,402.17)(70.304,-7.000){2}{\rule{2.336pt}{0.400pt}}
\multiput(520.00,394.92)(3.426,-0.492){21}{\rule{2.767pt}{0.119pt}}
\multiput(520.00,395.17)(74.258,-12.000){2}{\rule{1.383pt}{0.400pt}}
\put(361.0,403.0){\rule[-0.200pt]{19.031pt}{0.400pt}}
\multiput(680.00,382.92)(3.703,-0.492){19}{\rule{2.973pt}{0.118pt}}
\multiput(680.00,383.17)(72.830,-11.000){2}{\rule{1.486pt}{0.400pt}}
\multiput(759.00,371.94)(11.594,-0.468){5}{\rule{8.100pt}{0.113pt}}
\multiput(759.00,372.17)(63.188,-4.000){2}{\rule{4.050pt}{0.400pt}}
\put(600.0,384.0){\rule[-0.200pt]{19.272pt}{0.400pt}}
\put(201,507){\usebox{\plotpoint}}
\put(191.0,507.0){\rule[-0.200pt]{4.818pt}{0.400pt}}
\put(191.0,507.0){\rule[-0.200pt]{4.818pt}{0.400pt}}
\put(281.0,387.0){\rule[-0.200pt]{0.400pt}{9.636pt}}
\put(271.0,387.0){\rule[-0.200pt]{4.818pt}{0.400pt}}
\put(271.0,427.0){\rule[-0.200pt]{4.818pt}{0.400pt}}
\put(361.0,384.0){\rule[-0.200pt]{0.400pt}{9.395pt}}
\put(351.0,384.0){\rule[-0.200pt]{4.818pt}{0.400pt}}
\put(351.0,423.0){\rule[-0.200pt]{4.818pt}{0.400pt}}
\put(440.0,384.0){\rule[-0.200pt]{0.400pt}{9.395pt}}
\put(430.0,384.0){\rule[-0.200pt]{4.818pt}{0.400pt}}
\put(430.0,423.0){\rule[-0.200pt]{4.818pt}{0.400pt}}
\put(520.0,375.0){\rule[-0.200pt]{0.400pt}{9.877pt}}
\put(510.0,375.0){\rule[-0.200pt]{4.818pt}{0.400pt}}
\put(510.0,416.0){\rule[-0.200pt]{4.818pt}{0.400pt}}
\put(600.0,363.0){\rule[-0.200pt]{0.400pt}{10.118pt}}
\put(590.0,363.0){\rule[-0.200pt]{4.818pt}{0.400pt}}
\put(590.0,405.0){\rule[-0.200pt]{4.818pt}{0.400pt}}
\put(680.0,363.0){\rule[-0.200pt]{0.400pt}{10.118pt}}
\put(670.0,363.0){\rule[-0.200pt]{4.818pt}{0.400pt}}
\put(670.0,405.0){\rule[-0.200pt]{4.818pt}{0.400pt}}
\put(759.0,351.0){\rule[-0.200pt]{0.400pt}{10.359pt}}
\put(749.0,351.0){\rule[-0.200pt]{4.818pt}{0.400pt}}
\put(749.0,394.0){\rule[-0.200pt]{4.818pt}{0.400pt}}
\put(839.0,346.0){\rule[-0.200pt]{0.400pt}{10.840pt}}
\put(829.0,346.0){\rule[-0.200pt]{4.818pt}{0.400pt}}
\put(201,507){\circle{12}}
\put(281,407){\circle{12}}
\put(361,403){\circle{12}}
\put(440,403){\circle{12}}
\put(520,396){\circle{12}}
\put(600,384){\circle{12}}
\put(680,384){\circle{12}}
\put(759,373){\circle{12}}
\put(839,369){\circle{12}}
\put(591,164){\circle{12}}
\put(829.0,391.0){\rule[-0.200pt]{4.818pt}{0.400pt}}
\put(201.0,123.0){\rule[-0.200pt]{153.694pt}{0.400pt}}
\put(839.0,123.0){\rule[-0.200pt]{0.400pt}{92.506pt}}
\put(201.0,507.0){\rule[-0.200pt]{153.694pt}{0.400pt}}
\put(201.0,123.0){\rule[-0.200pt]{0.400pt}{92.506pt}}
\end{picture}
% GNUPLOT: LaTeX picture
\setlength{\unitlength}{0.240900pt}
\ifx\plotpoint\undefined\newsavebox{\plotpoint}\fi
\sbox{\plotpoint}{\rule[-0.200pt]{0.400pt}{0.400pt}}%
\begin{picture}(900,629)(0,0)
\sbox{\plotpoint}{\rule[-0.200pt]{0.400pt}{0.400pt}}%
\put(201.0,123.0){\rule[-0.200pt]{4.818pt}{0.400pt}}
\put(181,123){\makebox(0,0)[r]{ 0.8}}
\put(819.0,123.0){\rule[-0.200pt]{4.818pt}{0.400pt}}
\put(201.0,161.0){\rule[-0.200pt]{4.818pt}{0.400pt}}
\put(181,161){\makebox(0,0)[r]{ 0.82}}
\put(819.0,161.0){\rule[-0.200pt]{4.818pt}{0.400pt}}
\put(201.0,200.0){\rule[-0.200pt]{4.818pt}{0.400pt}}
\put(181,200){\makebox(0,0)[r]{ 0.84}}
\put(819.0,200.0){\rule[-0.200pt]{4.818pt}{0.400pt}}
\put(201.0,238.0){\rule[-0.200pt]{4.818pt}{0.400pt}}
\put(181,238){\makebox(0,0)[r]{ 0.86}}
\put(819.0,238.0){\rule[-0.200pt]{4.818pt}{0.400pt}}
\put(201.0,277.0){\rule[-0.200pt]{4.818pt}{0.400pt}}
\put(181,277){\makebox(0,0)[r]{ 0.88}}
\put(819.0,277.0){\rule[-0.200pt]{4.818pt}{0.400pt}}
\put(201.0,315.0){\rule[-0.200pt]{4.818pt}{0.400pt}}
\put(181,315){\makebox(0,0)[r]{ 0.9}}
\put(819.0,315.0){\rule[-0.200pt]{4.818pt}{0.400pt}}
\put(201.0,353.0){\rule[-0.200pt]{4.818pt}{0.400pt}}
\put(181,353){\makebox(0,0)[r]{ 0.92}}
\put(819.0,353.0){\rule[-0.200pt]{4.818pt}{0.400pt}}
\put(201.0,392.0){\rule[-0.200pt]{4.818pt}{0.400pt}}
\put(181,392){\makebox(0,0)[r]{ 0.94}}
\put(819.0,392.0){\rule[-0.200pt]{4.818pt}{0.400pt}}
\put(201.0,430.0){\rule[-0.200pt]{4.818pt}{0.400pt}}
\put(181,430){\makebox(0,0)[r]{ 0.96}}
\put(819.0,430.0){\rule[-0.200pt]{4.818pt}{0.400pt}}
\put(201.0,469.0){\rule[-0.200pt]{4.818pt}{0.400pt}}
\put(181,469){\makebox(0,0)[r]{ 0.98}}
\put(819.0,469.0){\rule[-0.200pt]{4.818pt}{0.400pt}}
\put(201.0,507.0){\rule[-0.200pt]{4.818pt}{0.400pt}}
\put(181,507){\makebox(0,0)[r]{ 1}}
\put(819.0,507.0){\rule[-0.200pt]{4.818pt}{0.400pt}}
\put(201.0,123.0){\rule[-0.200pt]{0.400pt}{4.818pt}}
\put(201,82){\makebox(0,0){ 0}}
\put(201.0,487.0){\rule[-0.200pt]{0.400pt}{4.818pt}}
\put(281.0,123.0){\rule[-0.200pt]{0.400pt}{4.818pt}}
\put(281,82){\makebox(0,0){ 0.5}}
\put(281.0,487.0){\rule[-0.200pt]{0.400pt}{4.818pt}}
\put(361.0,123.0){\rule[-0.200pt]{0.400pt}{4.818pt}}
\put(361,82){\makebox(0,0){ 1}}
\put(361.0,487.0){\rule[-0.200pt]{0.400pt}{4.818pt}}
\put(440.0,123.0){\rule[-0.200pt]{0.400pt}{4.818pt}}
\put(440,82){\makebox(0,0){ 1.5}}
\put(440.0,487.0){\rule[-0.200pt]{0.400pt}{4.818pt}}
\put(520.0,123.0){\rule[-0.200pt]{0.400pt}{4.818pt}}
\put(520,82){\makebox(0,0){ 2}}
\put(520.0,487.0){\rule[-0.200pt]{0.400pt}{4.818pt}}
\put(600.0,123.0){\rule[-0.200pt]{0.400pt}{4.818pt}}
\put(600,82){\makebox(0,0){ 2.5}}
\put(600.0,487.0){\rule[-0.200pt]{0.400pt}{4.818pt}}
\put(680.0,123.0){\rule[-0.200pt]{0.400pt}{4.818pt}}
\put(680,82){\makebox(0,0){ 3}}
\put(680.0,487.0){\rule[-0.200pt]{0.400pt}{4.818pt}}
\put(759.0,123.0){\rule[-0.200pt]{0.400pt}{4.818pt}}
\put(759,82){\makebox(0,0){ 3.5}}
\put(759.0,487.0){\rule[-0.200pt]{0.400pt}{4.818pt}}
\put(839.0,123.0){\rule[-0.200pt]{0.400pt}{4.818pt}}
\put(839,82){\makebox(0,0){ 4}}
\put(839.0,487.0){\rule[-0.200pt]{0.400pt}{4.818pt}}
\put(201.0,123.0){\rule[-0.200pt]{153.694pt}{0.400pt}}
\put(839.0,123.0){\rule[-0.200pt]{0.400pt}{92.506pt}}
\put(201.0,507.0){\rule[-0.200pt]{153.694pt}{0.400pt}}
\put(201.0,123.0){\rule[-0.200pt]{0.400pt}{92.506pt}}
\put(40,315){\makebox(0,0){\rotatebox{90}{matching accuracy}}}
\put(520,21){\makebox(0,0){$256\epsilon$}}
\put(520,569){\makebox(0,0){$|\mathcal S|= 20$}}
\put(521,205){\makebox(0,0)[r]{MSE $< 10^{-8}$}}
\put(541.0,205.0){\rule[-0.200pt]{24.090pt}{0.400pt}}
\put(541.0,195.0){\rule[-0.200pt]{0.400pt}{4.818pt}}
\put(641.0,195.0){\rule[-0.200pt]{0.400pt}{4.818pt}}
\put(201,507){\usebox{\plotpoint}}
\multiput(201.00,505.93)(5.248,-0.488){13}{\rule{4.100pt}{0.117pt}}
\multiput(201.00,506.17)(71.490,-8.000){2}{\rule{2.050pt}{0.400pt}}
\multiput(281.00,497.93)(6.061,-0.485){11}{\rule{4.671pt}{0.117pt}}
\multiput(281.00,498.17)(70.304,-7.000){2}{\rule{2.336pt}{0.400pt}}
\multiput(361.00,490.94)(11.448,-0.468){5}{\rule{8.000pt}{0.113pt}}
\multiput(361.00,491.17)(62.396,-4.000){2}{\rule{4.000pt}{0.400pt}}
\multiput(440.00,486.93)(5.248,-0.488){13}{\rule{4.100pt}{0.117pt}}
\multiput(440.00,487.17)(71.490,-8.000){2}{\rule{2.050pt}{0.400pt}}
\multiput(520.00,478.94)(11.594,-0.468){5}{\rule{8.100pt}{0.113pt}}
\multiput(520.00,479.17)(63.188,-4.000){2}{\rule{4.050pt}{0.400pt}}
\multiput(600.00,474.93)(6.061,-0.485){11}{\rule{4.671pt}{0.117pt}}
\multiput(600.00,475.17)(70.304,-7.000){2}{\rule{2.336pt}{0.400pt}}
\multiput(680.00,467.92)(3.383,-0.492){21}{\rule{2.733pt}{0.119pt}}
\multiput(680.00,468.17)(73.327,-12.000){2}{\rule{1.367pt}{0.400pt}}
\multiput(759.00,455.92)(1.493,-0.497){51}{\rule{1.285pt}{0.120pt}}
\multiput(759.00,456.17)(77.333,-27.000){2}{\rule{0.643pt}{0.400pt}}
\put(201,507){\usebox{\plotpoint}}
\put(191.0,507.0){\rule[-0.200pt]{4.818pt}{0.400pt}}
\put(191.0,507.0){\rule[-0.200pt]{4.818pt}{0.400pt}}
\put(281.0,494.0){\rule[-0.200pt]{0.400pt}{2.650pt}}
\put(271.0,494.0){\rule[-0.200pt]{4.818pt}{0.400pt}}
\put(271.0,505.0){\rule[-0.200pt]{4.818pt}{0.400pt}}
\put(361.0,484.0){\rule[-0.200pt]{0.400pt}{3.613pt}}
\put(351.0,484.0){\rule[-0.200pt]{4.818pt}{0.400pt}}
\put(351.0,499.0){\rule[-0.200pt]{4.818pt}{0.400pt}}
\put(440.0,478.0){\rule[-0.200pt]{0.400pt}{4.818pt}}
\put(430.0,478.0){\rule[-0.200pt]{4.818pt}{0.400pt}}
\put(430.0,498.0){\rule[-0.200pt]{4.818pt}{0.400pt}}
\put(520.0,469.0){\rule[-0.200pt]{0.400pt}{5.300pt}}
\put(510.0,469.0){\rule[-0.200pt]{4.818pt}{0.400pt}}
\put(510.0,491.0){\rule[-0.200pt]{4.818pt}{0.400pt}}
\put(600.0,465.0){\rule[-0.200pt]{0.400pt}{5.541pt}}
\put(590.0,465.0){\rule[-0.200pt]{4.818pt}{0.400pt}}
\put(590.0,488.0){\rule[-0.200pt]{4.818pt}{0.400pt}}
\put(680.0,456.0){\rule[-0.200pt]{0.400pt}{6.022pt}}
\put(670.0,456.0){\rule[-0.200pt]{4.818pt}{0.400pt}}
\put(670.0,481.0){\rule[-0.200pt]{4.818pt}{0.400pt}}
\put(759.0,444.0){\rule[-0.200pt]{0.400pt}{6.263pt}}
\put(749.0,444.0){\rule[-0.200pt]{4.818pt}{0.400pt}}
\put(749.0,470.0){\rule[-0.200pt]{4.818pt}{0.400pt}}
\put(839.0,414.0){\rule[-0.200pt]{0.400pt}{7.709pt}}
\put(829.0,414.0){\rule[-0.200pt]{4.818pt}{0.400pt}}
\put(201,507){\circle*{12}}
\put(281,499){\circle*{12}}
\put(361,492){\circle*{12}}
\put(440,488){\circle*{12}}
\put(520,480){\circle*{12}}
\put(600,476){\circle*{12}}
\put(680,469){\circle*{12}}
\put(759,457){\circle*{12}}
\put(839,430){\circle*{12}}
\put(591,205){\circle*{12}}
\put(829.0,446.0){\rule[-0.200pt]{4.818pt}{0.400pt}}
\put(521,164){\makebox(0,0)[r]{MSE $< 10^{-4}$}}
\put(541.0,164.0){\rule[-0.200pt]{24.090pt}{0.400pt}}
\put(541.0,154.0){\rule[-0.200pt]{0.400pt}{4.818pt}}
\put(641.0,154.0){\rule[-0.200pt]{0.400pt}{4.818pt}}
\put(201,507){\usebox{\plotpoint}}
\multiput(201.00,505.92)(1.493,-0.497){51}{\rule{1.285pt}{0.120pt}}
\multiput(201.00,506.17)(77.333,-27.000){2}{\rule{0.643pt}{0.400pt}}
\multiput(281.00,478.93)(5.248,-0.488){13}{\rule{4.100pt}{0.117pt}}
\multiput(281.00,479.17)(71.490,-8.000){2}{\rule{2.050pt}{0.400pt}}
\multiput(361.00,470.92)(2.108,-0.495){35}{\rule{1.763pt}{0.119pt}}
\multiput(361.00,471.17)(75.340,-19.000){2}{\rule{0.882pt}{0.400pt}}
\multiput(440.00,451.92)(3.750,-0.492){19}{\rule{3.009pt}{0.118pt}}
\multiput(440.00,452.17)(73.754,-11.000){2}{\rule{1.505pt}{0.400pt}}
\multiput(520.00,440.94)(11.594,-0.468){5}{\rule{8.100pt}{0.113pt}}
\multiput(520.00,441.17)(63.188,-4.000){2}{\rule{4.050pt}{0.400pt}}
\multiput(600.00,436.92)(3.426,-0.492){21}{\rule{2.767pt}{0.119pt}}
\multiput(600.00,437.17)(74.258,-12.000){2}{\rule{1.383pt}{0.400pt}}
\multiput(680.00,424.95)(17.430,-0.447){3}{\rule{10.633pt}{0.108pt}}
\multiput(680.00,425.17)(56.930,-3.000){2}{\rule{5.317pt}{0.400pt}}
\multiput(759.00,421.92)(2.026,-0.496){37}{\rule{1.700pt}{0.119pt}}
\multiput(759.00,422.17)(76.472,-20.000){2}{\rule{0.850pt}{0.400pt}}
\put(201,507){\usebox{\plotpoint}}
\put(191.0,507.0){\rule[-0.200pt]{4.818pt}{0.400pt}}
\put(191.0,507.0){\rule[-0.200pt]{4.818pt}{0.400pt}}
\put(281.0,469.0){\rule[-0.200pt]{0.400pt}{5.300pt}}
\put(271.0,469.0){\rule[-0.200pt]{4.818pt}{0.400pt}}
\put(271.0,491.0){\rule[-0.200pt]{4.818pt}{0.400pt}}
\put(361.0,461.0){\rule[-0.200pt]{0.400pt}{5.541pt}}
\put(351.0,461.0){\rule[-0.200pt]{4.818pt}{0.400pt}}
\put(351.0,484.0){\rule[-0.200pt]{4.818pt}{0.400pt}}
\put(440.0,439.0){\rule[-0.200pt]{0.400pt}{6.986pt}}
\put(430.0,439.0){\rule[-0.200pt]{4.818pt}{0.400pt}}
\put(430.0,468.0){\rule[-0.200pt]{4.818pt}{0.400pt}}
\put(520.0,427.0){\rule[-0.200pt]{0.400pt}{7.227pt}}
\put(510.0,427.0){\rule[-0.200pt]{4.818pt}{0.400pt}}
\put(510.0,457.0){\rule[-0.200pt]{4.818pt}{0.400pt}}
\put(600.0,423.0){\rule[-0.200pt]{0.400pt}{7.227pt}}
\put(590.0,423.0){\rule[-0.200pt]{4.818pt}{0.400pt}}
\put(590.0,453.0){\rule[-0.200pt]{4.818pt}{0.400pt}}
\put(680.0,411.0){\rule[-0.200pt]{0.400pt}{7.468pt}}
\put(670.0,411.0){\rule[-0.200pt]{4.818pt}{0.400pt}}
\put(670.0,442.0){\rule[-0.200pt]{4.818pt}{0.400pt}}
\put(759.0,407.0){\rule[-0.200pt]{0.400pt}{7.468pt}}
\put(749.0,407.0){\rule[-0.200pt]{4.818pt}{0.400pt}}
\put(749.0,438.0){\rule[-0.200pt]{4.818pt}{0.400pt}}
\put(839.0,386.0){\rule[-0.200pt]{0.400pt}{8.431pt}}
\put(829.0,386.0){\rule[-0.200pt]{4.818pt}{0.400pt}}
\put(201,507){\circle{12}}
\put(281,480){\circle{12}}
\put(361,472){\circle{12}}
\put(440,453){\circle{12}}
\put(520,442){\circle{12}}
\put(600,438){\circle{12}}
\put(680,426){\circle{12}}
\put(759,423){\circle{12}}
\put(839,403){\circle{12}}
\put(591,164){\circle{12}}
\put(829.0,421.0){\rule[-0.200pt]{4.818pt}{0.400pt}}
\put(201.0,123.0){\rule[-0.200pt]{153.694pt}{0.400pt}}
\put(839.0,123.0){\rule[-0.200pt]{0.400pt}{92.506pt}}
\put(201.0,507.0){\rule[-0.200pt]{153.694pt}{0.400pt}}
\put(201.0,123.0){\rule[-0.200pt]{0.400pt}{92.506pt}}
\end{picture}
% GNUPLOT: LaTeX picture
\setlength{\unitlength}{0.240900pt}
\ifx\plotpoint\undefined\newsavebox{\plotpoint}\fi
\sbox{\plotpoint}{\rule[-0.200pt]{0.400pt}{0.400pt}}%
\begin{picture}(900,629)(0,0)
\sbox{\plotpoint}{\rule[-0.200pt]{0.400pt}{0.400pt}}%
\put(201.0,123.0){\rule[-0.200pt]{4.818pt}{0.400pt}}
\put(181,123){\makebox(0,0)[r]{ 0.8}}
\put(819.0,123.0){\rule[-0.200pt]{4.818pt}{0.400pt}}
\put(201.0,161.0){\rule[-0.200pt]{4.818pt}{0.400pt}}
\put(181,161){\makebox(0,0)[r]{ 0.82}}
\put(819.0,161.0){\rule[-0.200pt]{4.818pt}{0.400pt}}
\put(201.0,200.0){\rule[-0.200pt]{4.818pt}{0.400pt}}
\put(181,200){\makebox(0,0)[r]{ 0.84}}
\put(819.0,200.0){\rule[-0.200pt]{4.818pt}{0.400pt}}
\put(201.0,238.0){\rule[-0.200pt]{4.818pt}{0.400pt}}
\put(181,238){\makebox(0,0)[r]{ 0.86}}
\put(819.0,238.0){\rule[-0.200pt]{4.818pt}{0.400pt}}
\put(201.0,277.0){\rule[-0.200pt]{4.818pt}{0.400pt}}
\put(181,277){\makebox(0,0)[r]{ 0.88}}
\put(819.0,277.0){\rule[-0.200pt]{4.818pt}{0.400pt}}
\put(201.0,315.0){\rule[-0.200pt]{4.818pt}{0.400pt}}
\put(181,315){\makebox(0,0)[r]{ 0.9}}
\put(819.0,315.0){\rule[-0.200pt]{4.818pt}{0.400pt}}
\put(201.0,353.0){\rule[-0.200pt]{4.818pt}{0.400pt}}
\put(181,353){\makebox(0,0)[r]{ 0.92}}
\put(819.0,353.0){\rule[-0.200pt]{4.818pt}{0.400pt}}
\put(201.0,392.0){\rule[-0.200pt]{4.818pt}{0.400pt}}
\put(181,392){\makebox(0,0)[r]{ 0.94}}
\put(819.0,392.0){\rule[-0.200pt]{4.818pt}{0.400pt}}
\put(201.0,430.0){\rule[-0.200pt]{4.818pt}{0.400pt}}
\put(181,430){\makebox(0,0)[r]{ 0.96}}
\put(819.0,430.0){\rule[-0.200pt]{4.818pt}{0.400pt}}
\put(201.0,469.0){\rule[-0.200pt]{4.818pt}{0.400pt}}
\put(181,469){\makebox(0,0)[r]{ 0.98}}
\put(819.0,469.0){\rule[-0.200pt]{4.818pt}{0.400pt}}
\put(201.0,507.0){\rule[-0.200pt]{4.818pt}{0.400pt}}
\put(181,507){\makebox(0,0)[r]{ 1}}
\put(819.0,507.0){\rule[-0.200pt]{4.818pt}{0.400pt}}
\put(201.0,123.0){\rule[-0.200pt]{0.400pt}{4.818pt}}
\put(201,82){\makebox(0,0){ 0}}
\put(201.0,487.0){\rule[-0.200pt]{0.400pt}{4.818pt}}
\put(281.0,123.0){\rule[-0.200pt]{0.400pt}{4.818pt}}
\put(281,82){\makebox(0,0){ 0.5}}
\put(281.0,487.0){\rule[-0.200pt]{0.400pt}{4.818pt}}
\put(361.0,123.0){\rule[-0.200pt]{0.400pt}{4.818pt}}
\put(361,82){\makebox(0,0){ 1}}
\put(361.0,487.0){\rule[-0.200pt]{0.400pt}{4.818pt}}
\put(440.0,123.0){\rule[-0.200pt]{0.400pt}{4.818pt}}
\put(440,82){\makebox(0,0){ 1.5}}
\put(440.0,487.0){\rule[-0.200pt]{0.400pt}{4.818pt}}
\put(520.0,123.0){\rule[-0.200pt]{0.400pt}{4.818pt}}
\put(520,82){\makebox(0,0){ 2}}
\put(520.0,487.0){\rule[-0.200pt]{0.400pt}{4.818pt}}
\put(600.0,123.0){\rule[-0.200pt]{0.400pt}{4.818pt}}
\put(600,82){\makebox(0,0){ 2.5}}
\put(600.0,487.0){\rule[-0.200pt]{0.400pt}{4.818pt}}
\put(680.0,123.0){\rule[-0.200pt]{0.400pt}{4.818pt}}
\put(680,82){\makebox(0,0){ 3}}
\put(680.0,487.0){\rule[-0.200pt]{0.400pt}{4.818pt}}
\put(759.0,123.0){\rule[-0.200pt]{0.400pt}{4.818pt}}
\put(759,82){\makebox(0,0){ 3.5}}
\put(759.0,487.0){\rule[-0.200pt]{0.400pt}{4.818pt}}
\put(839.0,123.0){\rule[-0.200pt]{0.400pt}{4.818pt}}
\put(839,82){\makebox(0,0){ 4}}
\put(839.0,487.0){\rule[-0.200pt]{0.400pt}{4.818pt}}
\put(201.0,123.0){\rule[-0.200pt]{153.694pt}{0.400pt}}
\put(839.0,123.0){\rule[-0.200pt]{0.400pt}{92.506pt}}
\put(201.0,507.0){\rule[-0.200pt]{153.694pt}{0.400pt}}
\put(201.0,123.0){\rule[-0.200pt]{0.400pt}{92.506pt}}
\put(40,315){\makebox(0,0){\rotatebox{90}{matching accuracy}}}
\put(520,21){\makebox(0,0){$256\epsilon$}}
\put(520,569){\makebox(0,0){$|\mathcal S|= 30$}}
\put(521,205){\makebox(0,0)[r]{MSE $< 10^{-9}$}}
\put(541.0,205.0){\rule[-0.200pt]{24.090pt}{0.400pt}}
\put(541.0,195.0){\rule[-0.200pt]{0.400pt}{4.818pt}}
\put(641.0,195.0){\rule[-0.200pt]{0.400pt}{4.818pt}}
\put(201,507){\usebox{\plotpoint}}
\multiput(201.00,505.94)(11.594,-0.468){5}{\rule{8.100pt}{0.113pt}}
\multiput(201.00,506.17)(63.188,-4.000){2}{\rule{4.050pt}{0.400pt}}
\multiput(281.00,501.92)(3.750,-0.492){19}{\rule{3.009pt}{0.118pt}}
\multiput(281.00,502.17)(73.754,-11.000){2}{\rule{1.505pt}{0.400pt}}
\multiput(361.00,490.92)(3.383,-0.492){21}{\rule{2.733pt}{0.119pt}}
\multiput(361.00,491.17)(73.327,-12.000){2}{\rule{1.367pt}{0.400pt}}
\multiput(440.00,478.92)(2.720,-0.494){27}{\rule{2.233pt}{0.119pt}}
\multiput(440.00,479.17)(75.365,-15.000){2}{\rule{1.117pt}{0.400pt}}
\multiput(520.00,463.92)(2.546,-0.494){29}{\rule{2.100pt}{0.119pt}}
\multiput(520.00,464.17)(75.641,-16.000){2}{\rule{1.050pt}{0.400pt}}
\multiput(600.00,447.92)(0.955,-0.498){81}{\rule{0.862pt}{0.120pt}}
\multiput(600.00,448.17)(78.211,-42.000){2}{\rule{0.431pt}{0.400pt}}
\multiput(680.00,405.92)(1.282,-0.497){59}{\rule{1.119pt}{0.120pt}}
\multiput(680.00,406.17)(76.677,-31.000){2}{\rule{0.560pt}{0.400pt}}
\multiput(759.00,374.92)(1.057,-0.498){73}{\rule{0.942pt}{0.120pt}}
\multiput(759.00,375.17)(78.045,-38.000){2}{\rule{0.471pt}{0.400pt}}
\put(201,507){\usebox{\plotpoint}}
\put(191.0,507.0){\rule[-0.200pt]{4.818pt}{0.400pt}}
\put(191.0,507.0){\rule[-0.200pt]{4.818pt}{0.400pt}}
\put(281.0,499.0){\rule[-0.200pt]{0.400pt}{1.927pt}}
\put(271.0,499.0){\rule[-0.200pt]{4.818pt}{0.400pt}}
\put(271.0,507.0){\rule[-0.200pt]{4.818pt}{0.400pt}}
\put(361.0,484.0){\rule[-0.200pt]{0.400pt}{3.613pt}}
\put(351.0,484.0){\rule[-0.200pt]{4.818pt}{0.400pt}}
\put(351.0,499.0){\rule[-0.200pt]{4.818pt}{0.400pt}}
\put(440.0,469.0){\rule[-0.200pt]{0.400pt}{5.300pt}}
\put(430.0,469.0){\rule[-0.200pt]{4.818pt}{0.400pt}}
\put(430.0,491.0){\rule[-0.200pt]{4.818pt}{0.400pt}}
\put(520.0,449.0){\rule[-0.200pt]{0.400pt}{7.468pt}}
\put(510.0,449.0){\rule[-0.200pt]{4.818pt}{0.400pt}}
\put(510.0,480.0){\rule[-0.200pt]{4.818pt}{0.400pt}}
\put(600.0,431.0){\rule[-0.200pt]{0.400pt}{8.913pt}}
\put(590.0,431.0){\rule[-0.200pt]{4.818pt}{0.400pt}}
\put(590.0,468.0){\rule[-0.200pt]{4.818pt}{0.400pt}}
\put(680.0,373.0){\rule[-0.200pt]{0.400pt}{16.622pt}}
\put(670.0,373.0){\rule[-0.200pt]{4.818pt}{0.400pt}}
\put(670.0,442.0){\rule[-0.200pt]{4.818pt}{0.400pt}}
\put(759.0,341.0){\rule[-0.200pt]{0.400pt}{17.104pt}}
\put(749.0,341.0){\rule[-0.200pt]{4.818pt}{0.400pt}}
\put(749.0,412.0){\rule[-0.200pt]{4.818pt}{0.400pt}}
\put(839.0,302.0){\rule[-0.200pt]{0.400pt}{17.345pt}}
\put(829.0,302.0){\rule[-0.200pt]{4.818pt}{0.400pt}}
\put(201,507){\circle*{12}}
\put(281,503){\circle*{12}}
\put(361,492){\circle*{12}}
\put(440,480){\circle*{12}}
\put(520,465){\circle*{12}}
\put(600,449){\circle*{12}}
\put(680,407){\circle*{12}}
\put(759,376){\circle*{12}}
\put(839,338){\circle*{12}}
\put(591,205){\circle*{12}}
\put(829.0,374.0){\rule[-0.200pt]{4.818pt}{0.400pt}}
\put(521,164){\makebox(0,0)[r]{MSE $< 10^{-5}$}}
\put(541.0,164.0){\rule[-0.200pt]{24.090pt}{0.400pt}}
\put(541.0,154.0){\rule[-0.200pt]{0.400pt}{4.818pt}}
\put(641.0,154.0){\rule[-0.200pt]{0.400pt}{4.818pt}}
\put(201,507){\usebox{\plotpoint}}
\multiput(201.00,505.92)(0.801,-0.498){97}{\rule{0.740pt}{0.120pt}}
\multiput(201.00,506.17)(78.464,-50.000){2}{\rule{0.370pt}{0.400pt}}
\multiput(281.00,455.92)(2.720,-0.494){27}{\rule{2.233pt}{0.119pt}}
\multiput(281.00,456.17)(75.365,-15.000){2}{\rule{1.117pt}{0.400pt}}
\multiput(361.00,440.92)(1.735,-0.496){43}{\rule{1.474pt}{0.120pt}}
\multiput(361.00,441.17)(75.941,-23.000){2}{\rule{0.737pt}{0.400pt}}
\multiput(440.00,417.93)(5.248,-0.488){13}{\rule{4.100pt}{0.117pt}}
\multiput(440.00,418.17)(71.490,-8.000){2}{\rule{2.050pt}{0.400pt}}
\multiput(600.00,409.92)(0.872,-0.498){89}{\rule{0.796pt}{0.120pt}}
\multiput(600.00,410.17)(78.349,-46.000){2}{\rule{0.398pt}{0.400pt}}
\multiput(680.00,363.92)(1.134,-0.498){67}{\rule{1.003pt}{0.120pt}}
\multiput(680.00,364.17)(76.919,-35.000){2}{\rule{0.501pt}{0.400pt}}
\multiput(759.00,328.92)(3.750,-0.492){19}{\rule{3.009pt}{0.118pt}}
\multiput(759.00,329.17)(73.754,-11.000){2}{\rule{1.505pt}{0.400pt}}
\put(520.0,411.0){\rule[-0.200pt]{19.272pt}{0.400pt}}
\put(201,507){\usebox{\plotpoint}}
\put(191.0,507.0){\rule[-0.200pt]{4.818pt}{0.400pt}}
\put(191.0,507.0){\rule[-0.200pt]{4.818pt}{0.400pt}}
\put(281.0,443.0){\rule[-0.200pt]{0.400pt}{6.745pt}}
\put(271.0,443.0){\rule[-0.200pt]{4.818pt}{0.400pt}}
\put(271.0,471.0){\rule[-0.200pt]{4.818pt}{0.400pt}}
\put(361.0,425.0){\rule[-0.200pt]{0.400pt}{8.191pt}}
\put(351.0,425.0){\rule[-0.200pt]{4.818pt}{0.400pt}}
\put(351.0,459.0){\rule[-0.200pt]{4.818pt}{0.400pt}}
\put(440.0,399.0){\rule[-0.200pt]{0.400pt}{9.395pt}}
\put(430.0,399.0){\rule[-0.200pt]{4.818pt}{0.400pt}}
\put(430.0,438.0){\rule[-0.200pt]{4.818pt}{0.400pt}}
\put(520.0,390.0){\rule[-0.200pt]{0.400pt}{10.118pt}}
\put(510.0,390.0){\rule[-0.200pt]{4.818pt}{0.400pt}}
\put(510.0,432.0){\rule[-0.200pt]{4.818pt}{0.400pt}}
\put(600.0,389.0){\rule[-0.200pt]{0.400pt}{10.600pt}}
\put(590.0,389.0){\rule[-0.200pt]{4.818pt}{0.400pt}}
\put(590.0,433.0){\rule[-0.200pt]{4.818pt}{0.400pt}}
\put(680.0,327.0){\rule[-0.200pt]{0.400pt}{18.308pt}}
\put(670.0,327.0){\rule[-0.200pt]{4.818pt}{0.400pt}}
\put(670.0,403.0){\rule[-0.200pt]{4.818pt}{0.400pt}}
\put(759.0,293.0){\rule[-0.200pt]{0.400pt}{18.067pt}}
\put(749.0,293.0){\rule[-0.200pt]{4.818pt}{0.400pt}}
\put(749.0,368.0){\rule[-0.200pt]{4.818pt}{0.400pt}}
\put(839.0,281.0){\rule[-0.200pt]{0.400pt}{18.308pt}}
\put(829.0,281.0){\rule[-0.200pt]{4.818pt}{0.400pt}}
\put(201,507){\circle{12}}
\put(281,457){\circle{12}}
\put(361,442){\circle{12}}
\put(440,419){\circle{12}}
\put(520,411){\circle{12}}
\put(600,411){\circle{12}}
\put(680,365){\circle{12}}
\put(759,330){\circle{12}}
\put(839,319){\circle{12}}
\put(591,164){\circle{12}}
\put(829.0,357.0){\rule[-0.200pt]{4.818pt}{0.400pt}}
\put(201.0,123.0){\rule[-0.200pt]{153.694pt}{0.400pt}}
\put(839.0,123.0){\rule[-0.200pt]{0.400pt}{92.506pt}}
\put(201.0,507.0){\rule[-0.200pt]{153.694pt}{0.400pt}}
\put(201.0,123.0){\rule[-0.200pt]{0.400pt}{92.506pt}}
\end{picture}
% GNUPLOT: LaTeX picture
\setlength{\unitlength}{0.240900pt}
\ifx\plotpoint\undefined\newsavebox{\plotpoint}\fi
\sbox{\plotpoint}{\rule[-0.200pt]{0.400pt}{0.400pt}}%
\begin{picture}(900,629)(0,0)
\sbox{\plotpoint}{\rule[-0.200pt]{0.400pt}{0.400pt}}%
\put(201.0,123.0){\rule[-0.200pt]{4.818pt}{0.400pt}}
\put(181,123){\makebox(0,0)[r]{ 0.8}}
\put(819.0,123.0){\rule[-0.200pt]{4.818pt}{0.400pt}}
\put(201.0,161.0){\rule[-0.200pt]{4.818pt}{0.400pt}}
\put(181,161){\makebox(0,0)[r]{ 0.82}}
\put(819.0,161.0){\rule[-0.200pt]{4.818pt}{0.400pt}}
\put(201.0,200.0){\rule[-0.200pt]{4.818pt}{0.400pt}}
\put(181,200){\makebox(0,0)[r]{ 0.84}}
\put(819.0,200.0){\rule[-0.200pt]{4.818pt}{0.400pt}}
\put(201.0,238.0){\rule[-0.200pt]{4.818pt}{0.400pt}}
\put(181,238){\makebox(0,0)[r]{ 0.86}}
\put(819.0,238.0){\rule[-0.200pt]{4.818pt}{0.400pt}}
\put(201.0,277.0){\rule[-0.200pt]{4.818pt}{0.400pt}}
\put(181,277){\makebox(0,0)[r]{ 0.88}}
\put(819.0,277.0){\rule[-0.200pt]{4.818pt}{0.400pt}}
\put(201.0,315.0){\rule[-0.200pt]{4.818pt}{0.400pt}}
\put(181,315){\makebox(0,0)[r]{ 0.9}}
\put(819.0,315.0){\rule[-0.200pt]{4.818pt}{0.400pt}}
\put(201.0,353.0){\rule[-0.200pt]{4.818pt}{0.400pt}}
\put(181,353){\makebox(0,0)[r]{ 0.92}}
\put(819.0,353.0){\rule[-0.200pt]{4.818pt}{0.400pt}}
\put(201.0,392.0){\rule[-0.200pt]{4.818pt}{0.400pt}}
\put(181,392){\makebox(0,0)[r]{ 0.94}}
\put(819.0,392.0){\rule[-0.200pt]{4.818pt}{0.400pt}}
\put(201.0,430.0){\rule[-0.200pt]{4.818pt}{0.400pt}}
\put(181,430){\makebox(0,0)[r]{ 0.96}}
\put(819.0,430.0){\rule[-0.200pt]{4.818pt}{0.400pt}}
\put(201.0,469.0){\rule[-0.200pt]{4.818pt}{0.400pt}}
\put(181,469){\makebox(0,0)[r]{ 0.98}}
\put(819.0,469.0){\rule[-0.200pt]{4.818pt}{0.400pt}}
\put(201.0,507.0){\rule[-0.200pt]{4.818pt}{0.400pt}}
\put(181,507){\makebox(0,0)[r]{ 1}}
\put(819.0,507.0){\rule[-0.200pt]{4.818pt}{0.400pt}}
\put(201.0,123.0){\rule[-0.200pt]{0.400pt}{4.818pt}}
\put(201,82){\makebox(0,0){ 0}}
\put(201.0,487.0){\rule[-0.200pt]{0.400pt}{4.818pt}}
\put(281.0,123.0){\rule[-0.200pt]{0.400pt}{4.818pt}}
\put(281,82){\makebox(0,0){ 0.5}}
\put(281.0,487.0){\rule[-0.200pt]{0.400pt}{4.818pt}}
\put(361.0,123.0){\rule[-0.200pt]{0.400pt}{4.818pt}}
\put(361,82){\makebox(0,0){ 1}}
\put(361.0,487.0){\rule[-0.200pt]{0.400pt}{4.818pt}}
\put(440.0,123.0){\rule[-0.200pt]{0.400pt}{4.818pt}}
\put(440,82){\makebox(0,0){ 1.5}}
\put(440.0,487.0){\rule[-0.200pt]{0.400pt}{4.818pt}}
\put(520.0,123.0){\rule[-0.200pt]{0.400pt}{4.818pt}}
\put(520,82){\makebox(0,0){ 2}}
\put(520.0,487.0){\rule[-0.200pt]{0.400pt}{4.818pt}}
\put(600.0,123.0){\rule[-0.200pt]{0.400pt}{4.818pt}}
\put(600,82){\makebox(0,0){ 2.5}}
\put(600.0,487.0){\rule[-0.200pt]{0.400pt}{4.818pt}}
\put(680.0,123.0){\rule[-0.200pt]{0.400pt}{4.818pt}}
\put(680,82){\makebox(0,0){ 3}}
\put(680.0,487.0){\rule[-0.200pt]{0.400pt}{4.818pt}}
\put(759.0,123.0){\rule[-0.200pt]{0.400pt}{4.818pt}}
\put(759,82){\makebox(0,0){ 3.5}}
\put(759.0,487.0){\rule[-0.200pt]{0.400pt}{4.818pt}}
\put(839.0,123.0){\rule[-0.200pt]{0.400pt}{4.818pt}}
\put(839,82){\makebox(0,0){ 4}}
\put(839.0,487.0){\rule[-0.200pt]{0.400pt}{4.818pt}}
\put(201.0,123.0){\rule[-0.200pt]{153.694pt}{0.400pt}}
\put(839.0,123.0){\rule[-0.200pt]{0.400pt}{92.506pt}}
\put(201.0,507.0){\rule[-0.200pt]{153.694pt}{0.400pt}}
\put(201.0,123.0){\rule[-0.200pt]{0.400pt}{92.506pt}}
\put(40,315){\makebox(0,0){\rotatebox{90}{matching accuracy}}}
\put(520,21){\makebox(0,0){$256\epsilon$}}
\put(520,569){\makebox(0,0){$|\mathcal S|= 40$}}
\put(521,205){\makebox(0,0)[r]{MSE $< 10^{-9}$}}
\put(541.0,205.0){\rule[-0.200pt]{24.090pt}{0.400pt}}
\put(541.0,195.0){\rule[-0.200pt]{0.400pt}{4.818pt}}
\put(641.0,195.0){\rule[-0.200pt]{0.400pt}{4.818pt}}
\put(201,507){\usebox{\plotpoint}}
\multiput(201.00,505.93)(5.248,-0.488){13}{\rule{4.100pt}{0.117pt}}
\multiput(201.00,506.17)(71.490,-8.000){2}{\rule{2.050pt}{0.400pt}}
\multiput(281.00,497.93)(6.061,-0.485){11}{\rule{4.671pt}{0.117pt}}
\multiput(281.00,498.17)(70.304,-7.000){2}{\rule{2.336pt}{0.400pt}}
\multiput(361.00,490.92)(2.514,-0.494){29}{\rule{2.075pt}{0.119pt}}
\multiput(361.00,491.17)(74.693,-16.000){2}{\rule{1.038pt}{0.400pt}}
\multiput(440.00,474.92)(1.493,-0.497){51}{\rule{1.285pt}{0.120pt}}
\multiput(440.00,475.17)(77.333,-27.000){2}{\rule{0.643pt}{0.400pt}}
\multiput(520.00,447.92)(1.552,-0.497){49}{\rule{1.331pt}{0.120pt}}
\multiput(520.00,448.17)(77.238,-26.000){2}{\rule{0.665pt}{0.400pt}}
\multiput(600.00,421.92)(2.026,-0.496){37}{\rule{1.700pt}{0.119pt}}
\multiput(600.00,422.17)(76.472,-20.000){2}{\rule{0.850pt}{0.400pt}}
\multiput(680.00,401.92)(0.746,-0.498){103}{\rule{0.696pt}{0.120pt}}
\multiput(680.00,402.17)(77.555,-53.000){2}{\rule{0.348pt}{0.400pt}}
\multiput(759.00,348.92)(0.571,-0.499){137}{\rule{0.557pt}{0.120pt}}
\multiput(759.00,349.17)(78.844,-70.000){2}{\rule{0.279pt}{0.400pt}}
\put(201,507){\usebox{\plotpoint}}
\put(191.0,507.0){\rule[-0.200pt]{4.818pt}{0.400pt}}
\put(191.0,507.0){\rule[-0.200pt]{4.818pt}{0.400pt}}
\put(281.0,494.0){\rule[-0.200pt]{0.400pt}{2.650pt}}
\put(271.0,494.0){\rule[-0.200pt]{4.818pt}{0.400pt}}
\put(271.0,505.0){\rule[-0.200pt]{4.818pt}{0.400pt}}
\put(361.0,484.0){\rule[-0.200pt]{0.400pt}{3.613pt}}
\put(351.0,484.0){\rule[-0.200pt]{4.818pt}{0.400pt}}
\put(351.0,499.0){\rule[-0.200pt]{4.818pt}{0.400pt}}
\put(440.0,465.0){\rule[-0.200pt]{0.400pt}{5.541pt}}
\put(430.0,465.0){\rule[-0.200pt]{4.818pt}{0.400pt}}
\put(430.0,488.0){\rule[-0.200pt]{4.818pt}{0.400pt}}
\put(520.0,433.0){\rule[-0.200pt]{0.400pt}{7.950pt}}
\put(510.0,433.0){\rule[-0.200pt]{4.818pt}{0.400pt}}
\put(510.0,466.0){\rule[-0.200pt]{4.818pt}{0.400pt}}
\put(600.0,402.0){\rule[-0.200pt]{0.400pt}{9.877pt}}
\put(590.0,402.0){\rule[-0.200pt]{4.818pt}{0.400pt}}
\put(590.0,443.0){\rule[-0.200pt]{4.818pt}{0.400pt}}
\put(680.0,382.0){\rule[-0.200pt]{0.400pt}{10.118pt}}
\put(670.0,382.0){\rule[-0.200pt]{4.818pt}{0.400pt}}
\put(670.0,424.0){\rule[-0.200pt]{4.818pt}{0.400pt}}
\put(759.0,324.0){\rule[-0.200pt]{0.400pt}{12.286pt}}
\put(749.0,324.0){\rule[-0.200pt]{4.818pt}{0.400pt}}
\put(749.0,375.0){\rule[-0.200pt]{4.818pt}{0.400pt}}
\put(839.0,246.0){\rule[-0.200pt]{0.400pt}{16.622pt}}
\put(829.0,246.0){\rule[-0.200pt]{4.818pt}{0.400pt}}
\put(201,507){\circle*{12}}
\put(281,499){\circle*{12}}
\put(361,492){\circle*{12}}
\put(440,476){\circle*{12}}
\put(520,449){\circle*{12}}
\put(600,423){\circle*{12}}
\put(680,403){\circle*{12}}
\put(759,350){\circle*{12}}
\put(839,280){\circle*{12}}
\put(591,205){\circle*{12}}
\put(829.0,315.0){\rule[-0.200pt]{4.818pt}{0.400pt}}
\put(521,164){\makebox(0,0)[r]{MSE $< 10^{-5}$}}
\put(541.0,164.0){\rule[-0.200pt]{24.090pt}{0.400pt}}
\put(541.0,154.0){\rule[-0.200pt]{0.400pt}{4.818pt}}
\put(641.0,154.0){\rule[-0.200pt]{0.400pt}{4.818pt}}
\put(201,507){\usebox{\plotpoint}}
\multiput(201.00,505.93)(5.248,-0.488){13}{\rule{4.100pt}{0.117pt}}
\multiput(201.00,506.17)(71.490,-8.000){2}{\rule{2.050pt}{0.400pt}}
\multiput(281.00,497.92)(3.750,-0.492){19}{\rule{3.009pt}{0.118pt}}
\multiput(281.00,498.17)(73.754,-11.000){2}{\rule{1.505pt}{0.400pt}}
\multiput(361.00,486.92)(1.282,-0.497){59}{\rule{1.119pt}{0.120pt}}
\multiput(361.00,487.17)(76.677,-31.000){2}{\rule{0.560pt}{0.400pt}}
\multiput(440.00,455.92)(2.135,-0.495){35}{\rule{1.784pt}{0.119pt}}
\multiput(440.00,456.17)(76.297,-19.000){2}{\rule{0.892pt}{0.400pt}}
\multiput(520.00,436.92)(2.135,-0.495){35}{\rule{1.784pt}{0.119pt}}
\multiput(520.00,437.17)(76.297,-19.000){2}{\rule{0.892pt}{0.400pt}}
\multiput(600.00,417.92)(1.493,-0.497){51}{\rule{1.285pt}{0.120pt}}
\multiput(600.00,418.17)(77.333,-27.000){2}{\rule{0.643pt}{0.400pt}}
\multiput(680.00,390.92)(0.637,-0.499){121}{\rule{0.610pt}{0.120pt}}
\multiput(680.00,391.17)(77.735,-62.000){2}{\rule{0.305pt}{0.400pt}}
\multiput(759.00,328.92)(0.656,-0.499){119}{\rule{0.625pt}{0.120pt}}
\multiput(759.00,329.17)(78.704,-61.000){2}{\rule{0.312pt}{0.400pt}}
\put(201,507){\usebox{\plotpoint}}
\put(191.0,507.0){\rule[-0.200pt]{4.818pt}{0.400pt}}
\put(191.0,507.0){\rule[-0.200pt]{4.818pt}{0.400pt}}
\put(281.0,494.0){\rule[-0.200pt]{0.400pt}{2.650pt}}
\put(271.0,494.0){\rule[-0.200pt]{4.818pt}{0.400pt}}
\put(271.0,505.0){\rule[-0.200pt]{4.818pt}{0.400pt}}
\put(361.0,478.0){\rule[-0.200pt]{0.400pt}{4.818pt}}
\put(351.0,478.0){\rule[-0.200pt]{4.818pt}{0.400pt}}
\put(351.0,498.0){\rule[-0.200pt]{4.818pt}{0.400pt}}
\put(440.0,441.0){\rule[-0.200pt]{0.400pt}{7.709pt}}
\put(430.0,441.0){\rule[-0.200pt]{4.818pt}{0.400pt}}
\put(430.0,473.0){\rule[-0.200pt]{4.818pt}{0.400pt}}
\put(520.0,419.0){\rule[-0.200pt]{0.400pt}{9.154pt}}
\put(510.0,419.0){\rule[-0.200pt]{4.818pt}{0.400pt}}
\put(510.0,457.0){\rule[-0.200pt]{4.818pt}{0.400pt}}
\put(600.0,397.0){\rule[-0.200pt]{0.400pt}{10.600pt}}
\put(590.0,397.0){\rule[-0.200pt]{4.818pt}{0.400pt}}
\put(590.0,441.0){\rule[-0.200pt]{4.818pt}{0.400pt}}
\put(680.0,368.0){\rule[-0.200pt]{0.400pt}{11.563pt}}
\put(670.0,368.0){\rule[-0.200pt]{4.818pt}{0.400pt}}
\put(670.0,416.0){\rule[-0.200pt]{4.818pt}{0.400pt}}
\put(759.0,302.0){\rule[-0.200pt]{0.400pt}{13.731pt}}
\put(749.0,302.0){\rule[-0.200pt]{4.818pt}{0.400pt}}
\put(749.0,359.0){\rule[-0.200pt]{4.818pt}{0.400pt}}
\put(839.0,233.0){\rule[-0.200pt]{0.400pt}{17.345pt}}
\put(829.0,233.0){\rule[-0.200pt]{4.818pt}{0.400pt}}
\put(201,507){\circle{12}}
\put(281,499){\circle{12}}
\put(361,488){\circle{12}}
\put(440,457){\circle{12}}
\put(520,438){\circle{12}}
\put(600,419){\circle{12}}
\put(680,392){\circle{12}}
\put(759,330){\circle{12}}
\put(839,269){\circle{12}}
\put(591,164){\circle{12}}
\put(829.0,305.0){\rule[-0.200pt]{4.818pt}{0.400pt}}
\put(201.0,123.0){\rule[-0.200pt]{153.694pt}{0.400pt}}
\put(839.0,123.0){\rule[-0.200pt]{0.400pt}{92.506pt}}
\put(201.0,507.0){\rule[-0.200pt]{153.694pt}{0.400pt}}
\put(201.0,123.0){\rule[-0.200pt]{0.400pt}{92.506pt}}
\end{picture}
}
\end{center}
\caption{Matching accuracy of our model as the MSE-cutoff varies. This figure suggests that the higher cutoff value should be sufficient when matching larger point sets.}
\label{fig:classm}
\end{figure}
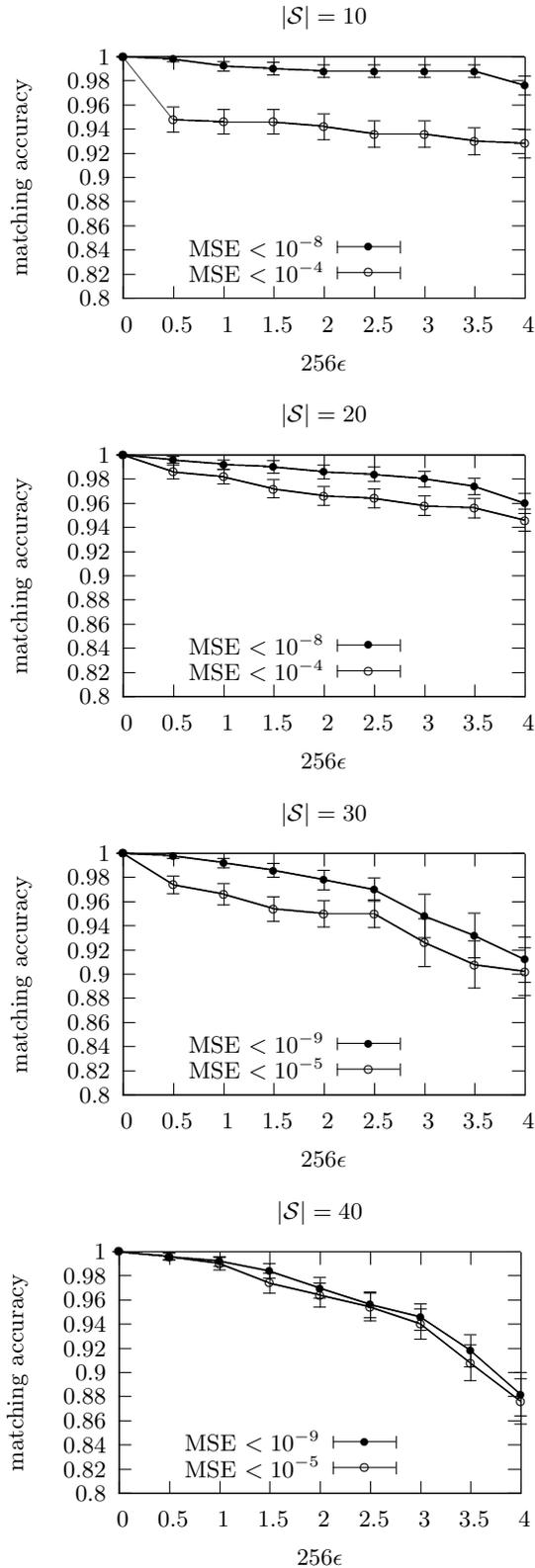

Finally, we present matching results using data from the CMU house sequence.\footnote{\scriptsize{\texttt{http://vasc.ri.cmu.edu/idb/html/motion/house/index.html}}.} In this dataset, 30 points corresponding to certain features of the house are available over 111 frames. Figure \ref{fig:matches} shows the 71st and the last (111th) frames from this dataset. Overlayed on these images are the 30 significant points, together with the matches generated by the Junction Tree algorithm and our own (matching the first 20 points); in this instance, the Junction Tree algorithm correctly matched 16 points, and ours 17. Figure \ref{fig:house} shows how accurately points between frames are matched as the baseline (separation between frames) varies. We also vary the number of points in the template set ($|\mathcal T|$) from 15 to 30. Our model seems to outperform the Junction Tree model for small baselines, whereas for large baselines and larger point sets the Junction Tree model seems to be the best. It is however difficult to draw conclusions from both models in these cases, since they are designed for the near-isometric case, which is violated for larger baselines.

%\vspace{-1cm}

\section{Conclusions}

We have shown that the near-isometric point pattern matching problem can be solved much more efficiently than what is currently reported as the state-of-the-art, while maintaining the same optimality guarantees for the noiseless case and comparable accuracy for the noisy case. This was achieved by identifying a new type of graph with the same global rigidity property of previous graphs but in which exact inference is far more efficient. Although exact inference is not \emph{directly} possible by means of the Junction Tree algorithm since the graph is not chordal, what we managed to show is that loopy belief propagation in such graph does converge to the optimal solution in a sufficiently small number of iterations. In the end, the advantage of the smaller clique size of our model dominates the disadvantage caused by the need for more than a single iteration.

\begin{figure}
 \includegraphics[width=0.48\textwidth,height=0.1667\textwidth]{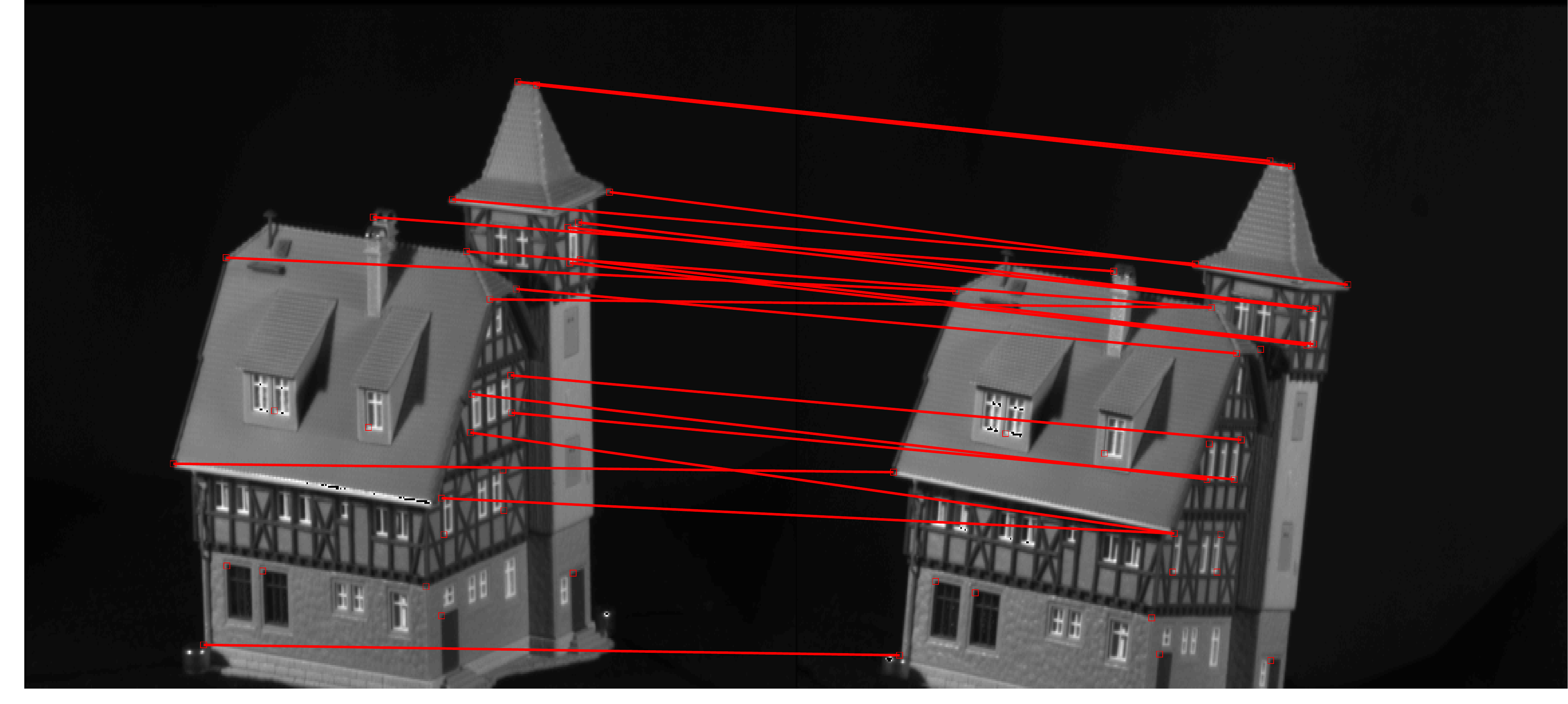}\\
 \includegraphics[width=0.48\textwidth,height=0.1667\textwidth]{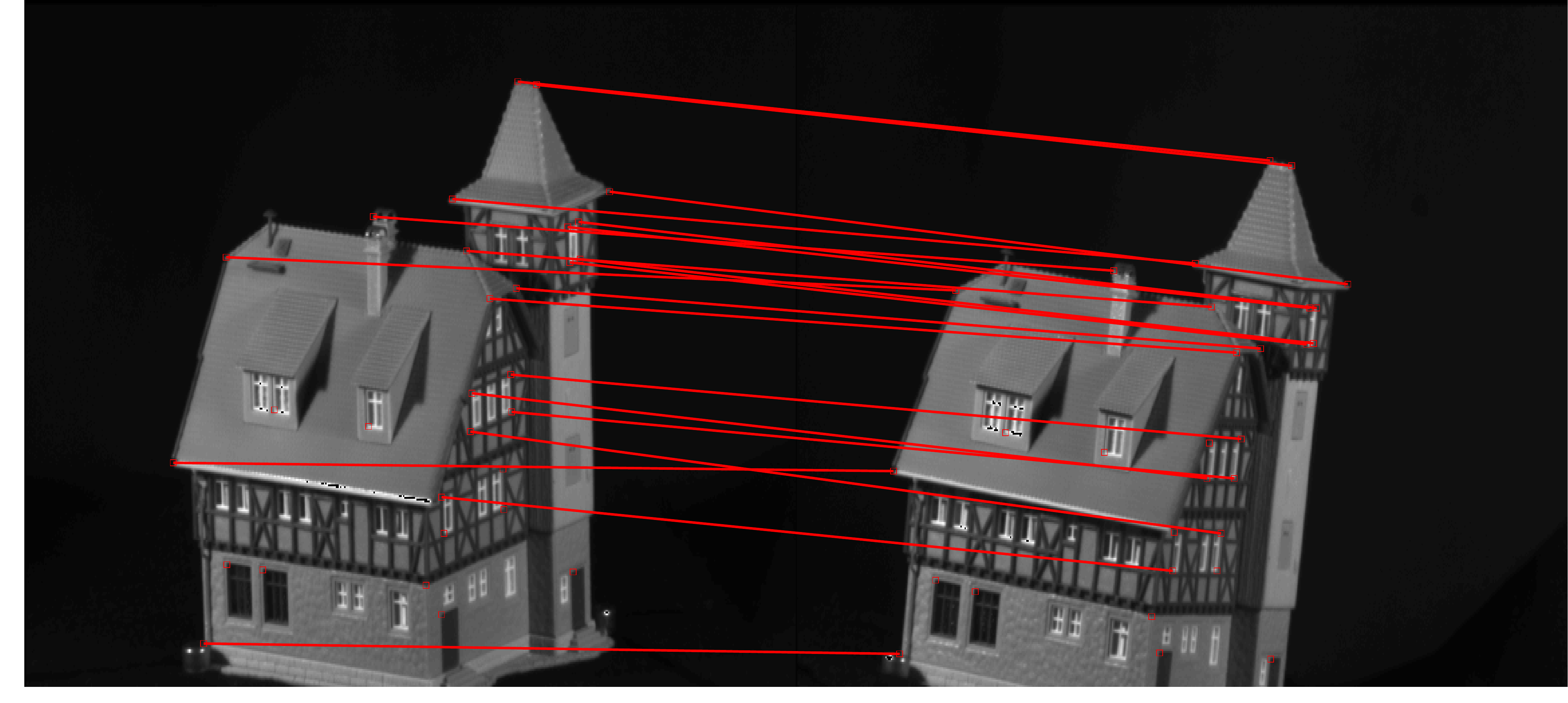}
\caption{Top: points matched using the Junction Tree algorithm (the points in the left frame were matched to the corresponding points in the right frame); 16 points are correctly matched by this algorithm. Bottom: points matched using our algorithm; 17 points are correctly matched.}
\label{fig:matches}
\end{figure}

\begin{figure}[ht]
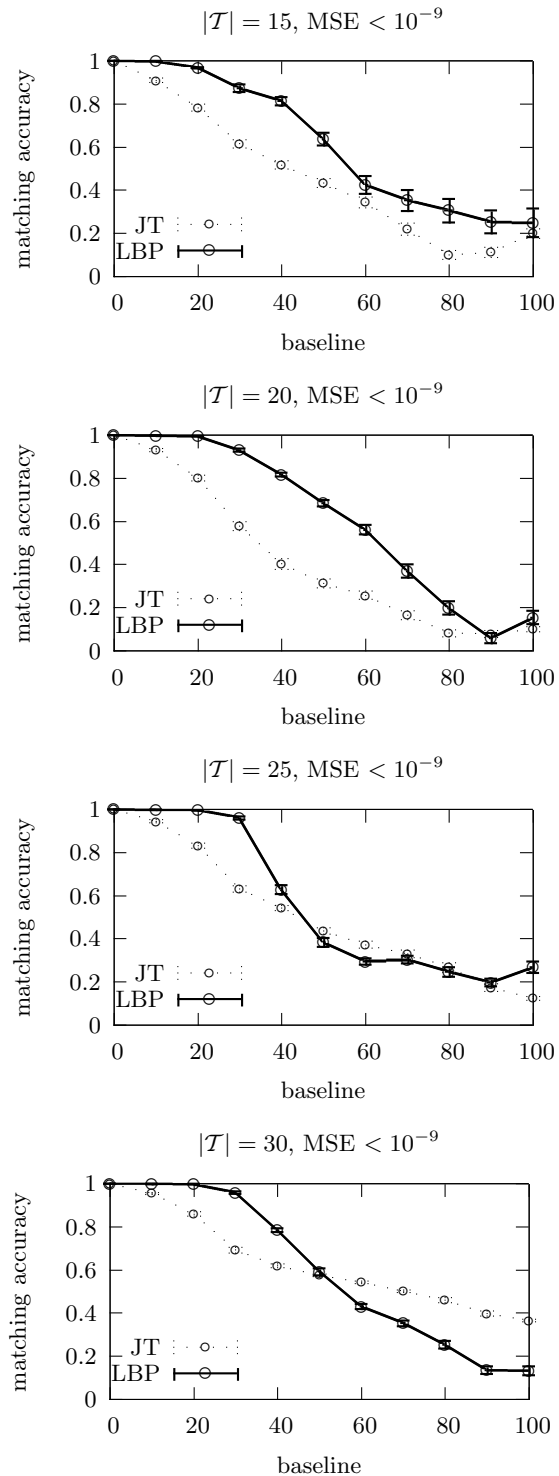

\begin{center}
%\vspace{-1cm}
\small{
\input{plot_house15}
\input{plot_house20}
\input{plot_house25}
\input{plot_house30}
%\vspace{-0.5cm}
}
\end{center}
\caption{Matching accuracy of our model using the ``house'' dataset, as the baseline (separation between frames) varies.}
%\vspace{-2.5cm}
\label{fig:house}
\end{figure}


\begin{thebibliography}{10}

\bibitem{CaeCaeSchBar06}
T.~S. Caetano, T.~Caelli, D.~Schuurmans, and D.~A.~C. Barone.
\newblock Graphical models and point pattern matching.
\newblock {\em IEEE Trans. on Pattern Analysis and Machine Intelligence}, 28(10):1646--1663, 2006.

\bibitem{CarHan00}
Marco Carcassoni and Edwin~R. Hancock.
\newblock Point pattern matching with robust spectral correspondence.
\newblock {\em Computer Vision and Pattern Recognition}, 01:1649, 2000.

\bibitem{Kel06}
Antonio Robles-Kelly and Edwin~R. Hancock.
\newblock Point pattern matching via spectral geometry.
\newblock In {\em SSPR/SPR}, pages 459--467, 2006.

\bibitem{Connelly05}
Robert Connelly.
\newblock Generic global rigidity.
\newblock {\em Discrete Comput. Geom.}, 33(4):549--563, 2005.

\bibitem{Gold96}
S.~Gold and A.~Rangarajan.
\newblock A graduated assignment algorithm for graph matching.
\newblock {\em IEEE Trans. on Pattern Analysis and Machine Intelligence}, 18(4):377--388, 1996.

\bibitem{Lau96}
Steffen~L. Lauritzen.
\newblock {\em Graphical Models (Oxford Statistical Science Series)}.
\newblock {Oxford University Press, USA}, July 1996.

\bibitem{Bis06}
Christopher~M. Bishop.
\newblock {\em Pattern Recognition and Machine Learning (Information Science
  and Statistics)}.
\newblock Springer, August 2006.

\bibitem{Weiss00}
Y.~Weiss.
\newblock Correctness of local probability propagation in graphical models with
  loops.
\newblock {\em Neural Computation}, 12:1--41, 2000.

\bibitem{IhlFisWil04}
A.~T. Ihler, J.~W. Fisher, and A.~S. Wilsky.
\newblock Message errors in belief propogation.
\newblock In {\em Neural Information Processing Systems}, 2004.

\bibitem{YedFreWei00}
Jonathan~S. Yedidia, William~T. Freeman, and Yair Weiss.
\newblock Generalized belief propagation.
\newblock In {\em Neural Information Processing Systems}, 2000.

\end{thebibliography}
\end{document}